\newif\ifarxiv 
\newif\ifreview 
\newif\iffinal 
\journal{Artificial Intelligence}
\newtheorem{theorem}             {Theorem}
\newtheorem{lemma}      [theorem]{Lemma}
\newtheorem{definition} [theorem]{Definition}
\newtheorem{claim}      [theorem]{Claim}
\newcommand{\Real}{\mathbb{R}}
\newcommand{\Natural}{\mathbb{N}}
\DeclareMathOperator{\Unif}{Unif}                         
\newcommand*{\bigO}{\mathcal{O}}
\newcommand{\LZ}{\textsc{LZ}\xspace}                      
\newcommand{\TZ}{\textsc{TZ}\xspace}
\newcommand{\LO}{\textsc{LO}\xspace}                      
\newcommand{\TOs}{\textsc{TO}\xspace}                     
\newcommand{\ones}[1]{|#1|_1}                             
\newcommand{\OM}{\textsc{OneMax}\xspace}                           
\newcommand{\JUMP}{\textsc{Jump}\xspace}                           
\newcommand{\RRRfull}{\textsc{RealRoyalRoad}\xspace}  
\newcommand{\COCZ}{\textsc{COCZ}\xspace}                           
\newcommand{\COCZfull}{\textsc{CountingOnesCountingZeroes}\xspace} 
\newcommand{\OMM}{\textsc{OMM}\xspace}                             
\newcommand{\OMMfull}{\textsc{OneMinMax}\xspace}                   
\newcommand{\LOTZ}{\textsc{LOTZ}\xspace}                           
\newcommand{\LOTZfull}{\textsc{LeadingOnesTrailingZeroes}\xspace}  
\newcommand{\OJZJ}{\textsc{OJZJ}\xspace}                           
\newcommand{\expect}[1]{\mathrm{E}\left[#1\right]}        
\newcommand{\RRRMO}{\ensuremath{\textsc{RR}_{\mathrm{MO}}}\xspace}  
\newcommand{\RRRMOuni}{\ensuremath{\textsc{uRR}_{\mathrm{MO}}}\xspace}  
\newcommand{\rp}{\mathrm{rp}}
\newcommand{\refer}{\mathcal{R}_p}
\newcommand{\nsga}{NSGA\nobreakdash-II\xspace}
\newcommand{\nsgaIII}{NSGA\nobreakdash-III\xspace}
\newcommand{\uRRRMO}{\ensuremath{\mathrm{u}\textsc{RR}_{\mathrm{MO}}^*}\xspace} 
\newenvironment{proofofclaim}{\textsc{Proof of Claim.}}{\hfill\scriptsize\scalebox{0.75}{$\blacksquare$}}
\newcommand{\zeros}[1]{|#1|_0}  
\newcommand{\todo}[1]{}
\newcommand{\todo}[1]{\textcolor{red}{[TODO: #1]}}
\newcommand*\linenomathpatch[1]{
  \cspreto{#1}{\linenomath}%
  \cspreto{#1*}{\linenomath}%
  \csappto{end#1}{\endlinenomath}%
  \csappto{end#1*}{\endlinenomath}%
}
\title{Many-Objective Problems Where Crossover is Provably Essential}
\author{Andre Opris\\
	University of Passau\\
	Passau, Germany
}
\begin{document}

\ifarxiv
\maketitle
\else
\begin{frontmatter}

\title{Many-Objective Problems Where Crossover is Provably Essential}

\author[1]{Andre~Opris}\ead{andre.opris@uni-passau.de}
\cortext[cor1]{Corresponding author}
\affiliation[1]{organization={Chair of Algorithms for Intelligent Systems, University of Passau},
	city={Passau},
	country={Germany}}
\fi

\begin{abstract}
This article addresses theory in evolutionary many-objective optimization and focuses on the role of crossover operators. The advantages of using crossover are hardly understood and rigorous runtime analyses with crossover are lagging far behind its use in practice, specifically in the case of more than two objectives. We present two many-objective problems \RRRMO and \RRRMOuni together with a theoretical runtime analysis of the GSEMO and the widely used \nsgaIII algorithm to demonstrate that one point crossover on \RRRMO, as well as uniform crossover on \RRRMOuni, can yield an exponential speedup in the runtime. In particular, when the number of objectives is constant, this algorithms can find the Pareto set of both problems in expected polynomial time when using crossover while without crossover they require exponential time to even find a single Pareto-optimal point. For both problems, we also demonstrate a significant performance gap in certain superconstant parameter regimes for the number of objectives. To the best of our knowledge, this is the first rigorous runtime analysis in many-objective optimization, apart from \citep{Opris2025}, which demonstrates an exponential performance gap when using crossover for more than two objectives. Additionally, it is the first runtime analysis involving crossover in many-objective optimization where the number of objectives is not necessarily constant.
\end{abstract}

\ifarxiv
\textbf{Keywords: evolutionary computation, runtime analysis, recombination, multi-objective optimization}
\else
\begin{keyword}
evolutionary computation\sep
runtime analysis\sep
recombination\sep
multi-objective optimization
\end{keyword}

\end{frontmatter}

\ifreview
\linenumbers
\fi

\fi


\section{Introduction}

Evolutionary multi-objective algorithms (EMOAs) mimic principles from natural evolution as mutation, crossover (recombination) and selection to evolve a population of solutions dealing with multiple conflicting objectives to explore a Pareto optimal set. Those have been frequently applied to a variety of multi-objective optimization problems and also have several applications in practice~\citep{kdeb01,coello2013evolutionary} such as scheduling problems~\citep{704576}, vehicle design~\citep{MultiVehicles} or practical combinatorial optimization problems~\citep{10.1145/3638529.3654077}. They are also widely used in machine learning, artificial intelligence, and various fields of engineering~\citep{9515233,LUUKKONEN2023102537,MULTIENGINEERING}. Particularly, in real world scenarios, there exist many problems with four or more 
objectives~\citep{Chikumbo2012ApproximatingAM,doi:10.1142/5712}. Thus, it is not unexpected that the study of EMOAs became a very important area of research in the last decades, especially for many objectives. 

The mathematical runtime analysis of MOEAs began approximately 20 years ago~\citep{LaumannsTZWD02,Giel03,Thierens03}. In recent years, it has also gained significant momentum, particularly with analyses of classic algorithms such as NSGA-II, NSGA-III, SMS-EMOA, and SPEA2~\citep{ZhengLuiDoerrAAAI22,WiethegerD23,BianZLQ23,RenSPEA2,OprisNSGAIII,Opris2025} or by discussing how MOEAs can solve submodular problems~\citep{QianYTYZ19,Qian_Bian_Feng_2020}.

The \emph{global simple evolutionary multi-objective optimizer} (GSEMO) \citep{Giel03} is the predominant algorithm in the rigorous analysis of MOEAs due to its apparent simplicity. It was the first MOEA to undergo mathematical runtime analysis, and it is still a primary algorithm for discovering new phenomena, as seen in recend works such as~\citep{DinotDHW23}. At the same time, many other algorithms, for example in the area of submodular optimization, build on it. For example, algorithms such as POSS (Pareto Optimization for Subset Selection) \citep{QianYZ15nips} and POMC (Pareto Optimization for Monotone Constraints) are all variants of the GSEMO applied to a suitable bi-objective formulation of the submodular problem of interest.

However, when the number of objectives increases, the size of the Pareto front and the number of incomparable solutions can grow exponentially and therefore, covering a high dimensional front, is a difficult task. There are already strong differences between two and more objectives. \nsga~\citep{Deb2002}, the most used EMOA, optimizes bi-objective problems efficiently (see~\citep{10.1007/978-3-540-70928-2_55} for empirical results or~\citep{ZhengLuiDoerrAAAI22,DoerrQu22,DaOp2023,Dang2024} for rigorous runtime analyses) while it perform less when dealing with three or more objectives (see~\citep{CHAUDHARI20221509} for empirical results or \citep{Zheng2023Inefficiency} for rigorous negative results). This occurs because the so-called \emph{crowding distance}, which serves as the tie-breaker in \nsga, naturally induces a sorting only for two objectives. Specifically, when non-dominated individuals are sorted by the first objective, this automatically determines their order with respect to the second objective in reverse, making the crowding distance an effective measure of proximity in the objective space. However, for problems with three or more objectives, such a correlation between objectives does not necessarily exist, meaning that individuals can have a crowding distance of zero even if they are not actually close to others. To overcome this problem, ~\citet{DebJain2014} proposed \nsgaIII, a refinement of the very popular \nsga, designed to handle more than two objectives, and instead of the crowding distance, uses reference points (previously set by the user) to guarantee that the solution set is well-distributed across the objective space. In particular,~\citet{DebJain2014} empirically showed that \nsgaIII can solve problems between 3 and 15 objectives efficiently. Due to its versatileness, it gained significant traction ($\sim$5500 citations) and now has sereval applications~\citep{TANG2024121723,doi:10.1080/23311916.2016.1269383,GU2022117738}. However, theoretical breakthroughs on its success have only occurred recently. The first rigorous runtime analyses of the state of the art \nsgaIII were published at IJCAI~2023~\citep{WiethegerD23} and GECCO~2024~\citep{OprisNSGAIII}. Apart from further papers like~\citep{DoerrNearTight, Opris2025, opris2025multimodal}, we are not aware of any other such analyses. Hence, its theoretical understanding is still substantially behind its achievements in practice. For example there are several empirical results on the usefulness of crossover in many objectives, particularly for \nsgaIII (see for instance~\citep{YI2020470,8628707}), but apart from~\citep{Opris2025}, focusing on \emph{one-point crossover}, we are not aware of any such theoretical result addressing rigorous runtime analysis in more than two objectives. This is remarkable, because the different variants of crossover are very useful operators in evolutionary computation. Rigorous mathematical analyses of NSGA-III reveal both its strengths and limitations, while also offering practical guidance across various problem classes.

\textbf{Our contribution:} We build on the considerations of~\citep{Dang2024}, and investigate two examples of pseudo-Boolean functions which may involve more than two objectives $m$, called $m$-\RRRMO and $m$-\RRRMOuni. These problems serve as a "royal road" where the use of crossover significantly improves performance. Table~\ref{tab:overview-runtime-results} provides an overview of all runtime results in terms of \emph{fitness evaluations} in this paper, where the crossover probability is also parameterized by $p_c$. When crossover is disabled, both GSEMO and NSGA-III require expected $n^{\Omega(n/m)}$ fitness evaluations to even find a single Pareto-optimal point for both problems, assuming $m = o(n / \log n)$ and a population size of $\mu = 2^{o(n/m)}$ in case of NSGA-III. This runtime is exponential for $m = O(\log(n))$, but still superpolynomial for every possible $m \in o(n/\log(n))$. In sharp contrast, for constant $p_c$, NSGA-III using \emph{one-point crossover} can find the Pareto set of $m$-\RRRMO in expected $O(\mu n^3)$ fitness evaluations for $(4n/(5m)+1)^{m-1} \leq \mu \leq 2^{O(n^2)}$, whereas GSEMO requires $O((4n/(5m)+1)^{m-1} \cdot n^3)$ evaluations. These runtimes grow exponentially in the number of objectives $m$ which is typical for many-objective optimization to ensure adequate coverage of the Pareto front. The runtime of \nsgaIII in terms of generations becomes $O(n^3)$ and hence, does not asymptotically depend on the number $m$ of objectives, and even not on the population size $\mu$ in contrast to~\citep{Dang2024} for the bi-objective case. For population sizes $\mu=\Omega(n^4)$, this is an improvement by a factor of $\Omega(\mu/n^4)$ compared to~\citep{Dang2024}. An additional interesting observation is that as $p_c$ tends to zero, the upper bound of GSEMO becomes smaller than that of \nsgaIII. This is not necessarily because GSEMO outperforms NSGAIII, but rather because rigorous analyses of \nsgaIII typically require the population size to be in the same order as the size of a maximum set of mutually incomparable solutions observed across all stages of optimization. This can be overly pessimistic in stages where the actual number of such solutions is much smaller. GSEMO avoids this limitation by employing a dynamic population size that adapts throughout the process. This even allows that two different runtime bounds can be established: One general one for any $p_c \in (0,1)$, but also a much better one if additionally $1/p_c$ and $1/(1-p_c)$ are bounded by a polynomial in $n$. If in the latter $m$ is constant and also $p_c = O(1/n^{m+1})$, the upper bound of GSEMO is by a factor of $\Omega(n^{m-2})$ smaller than that of NSGA-III when $\mu=(4n/(5m)+1)^{m-1}$. 

For the class $m$-\RRRMOuni, we obtain similar results for a constant crossover probability $p_c$. NSGA-III using \emph{uniform crossover} finds its Pareto set in expected $O(\mu m n^2+\mu n^2/m)$ fitness evaluations whereas GSEMO needs $O(n^2(2n/m)^{m-1})$ fitness evaluations in expectation. The runtime of \nsgaIII does also not depend on the population size in terms of generations and hence, we also get an improvement compared to~\citep{Dang2024} already for $\mu= \Omega(n)$ by a factor of $\mu/n$. We also observe that the proven upper bound for GSEMO is smaller than that of NSGA-III for $\mu=(2n/m)^{m-1}$ by at least a factor of $\Omega(m)$ which becomes even larger if $p_c$ tends to zero. This is also due to the dynamic population size of GSEMO. But anyway, if the number of objectives is constant, uniform crossover also  guarantees an exponential speedup in the runtime.


For our purposes we also have to adapt the general arguments from~\citep{OprisNSGAIII} about the protection of good solutions of \nsgaIII to this situation. All these results transfer to similar algorithms which are able to handle many-objective problems like the SMS-EMOA~\citep{Zheng_Doerr_2024}, the SPEA-2~\citep{RenSPEA2} or variants of NSGA-II~\citep{Krejca2025b}.

\begin{table}[t]
\begin{center}
\caption{Overview of all runtime bounds for the GSEMO and NSGA-III on $m$-\RRRMO and $m$-\RRRMOuni for different crossover probabilities $p_c \in (0,1)$ in terms of fitness evaluations where $m$ denotes the number of objectives, $n$ the problem size and $\mu$ the population size in case of \nsgaIII. For \nsgaIII, we always use $\varepsilon_{\text{nad}} \geq f_{\max}$ and a set $\refer$ of reference points for $p \in \mathbb{N}$ as defined in~\citep{OprisNSGAIII} with $p \geq 2m^{3/2}f_{\max}$ where $f_{\max}$ denotes the maximum value in each objective of the underlying $m$-objective function $f$ (compare with the preliminaries section below for explanations regarding $\mathcal{R}_p$ and $\varepsilon_{\text{nad}}$). For the \nsgaIII we also require that $(4n/(5m)+1)^{m-1} \leq \mu \leq 2^{O(n^2)}$ in case of $m$-\RRRMO and $(2n/m)^{m-1} \leq \mu \leq 2^{O(n^2)}$ in case of $m$-\RRRMOuni. For $m$-\RRRMO, one-point crossover is used, whereas for $m$-\RRRMOuni uniform crossover is applied. In the line with one-point*, we additionally require that both $1/p_c$ and $1/(1 - p_c)$ are bounded by a polynomial in $n$. Further, $d$ denotes a sufficiently small constant and in every case standard bit mutation is used.}
\label{tab:overview-runtime-results}
\begin{tabular}{lllll}
    algorithm & problem & runtime bound & crossover & $m$\\
    \toprule
    NSGA-III & $m$-\RRRMO & $O\left(\frac{\mu n^3}{1-p_c}+\frac{\mu m^2}{p_c}\right)$ (Theorem~\ref{thm:Runtime-Analysis-NSGA-III-mRRMO-onepoint}) & one-point & $\leq n$\\ 
    NSGA-III & $m$-\RRRMO & $n^{\Omega(n/m)}$ (Theorem~\ref{thm:NSGA-III-pc-zero}) &  none & $o (\frac{n}{\log(n)})$ \\
    GSEMO & $m$-\RRRMO & $O\left(\frac{n^3(4n/(5m))^{m-1}}{1-p_c} + \frac{n(4n/(5m))^{m/2}}{p_c}\right)$ (Theorem~\ref{thm:Runtime-Analysis-GSEMO-mRRMO}) &  one-point & $\leq d \sqrt{\frac{n}{\log(n)}}$\\
    GSEMO & $m$-\RRRMO & $O\left(\frac{n^3(4n/(5m))^{m-1}}{1-p_c} + \frac{mn}{p_c}\right)$ (Theorem~\ref{thm:Runtime-Analysis-GSEMO-mRRMO}) & one-point* & $\leq d \sqrt{\frac{n}{\log(n)}}$\\
    GSEMO & $m$-\RRRMO & $n^{\Omega(n/m)}$ (Theorem~\ref{thm:GSEMO-pc-zero}) & none & $o(\frac{n}{\log(n)})$\\
    \midrule
    NSGA-III & $m$-\RRRMOuni & $O\left(\frac{\mu m n^2}{1-p_c}+ \frac{\mu n^2}{p_c m}\right)$ (Theorem~\ref{thm:Runtime-Analysis-NSGA-III-mRRMO-uniform}) &  uniform & $\leq n$\\
    NSGA-III & $m$-\RRRMOuni & $n^{\Omega(n/m)}$ (Theorem~\ref{thm:NSGA-III-pc-zero-uniform}) & none & $o (\frac{n}{\log(n)})$\\
    GSEMO & $m$-\RRRMOuni & $O\left(\frac{n^2(2n/m)^{m-1}}{1-p_c}+ \frac{n^2 (2n/m)^{m/2}}{p_c m}\right)$ (Theorem~\ref{thm:Runtime-Analysis-GSEMO-mRRMO-uniform}) &  uniform & $\leq d \sqrt{\frac{n}{\log(n)}}$\\
    GSEMO & $m$-\RRRMOuni & $n^{\Omega(n/m)}$ (Theorem~\ref{thm:GSEMO-pc-zero-uniform}) &  none & $o(\frac{n}{\log(n)})$\\
    \bottomrule
\end{tabular}
\end{center}
\end{table}

This article is an extended version of the paper~\citep{Opris2025} appeared at AAAI~2025, where only the function $m$-\RRRMO was introduced. That work focused on the analysis of \nsgaIII on $m$-\RRRMO where the number of objectives is constant. They showed that the expected runtime of \nsgaIII on $m$-\RRRMO with one-point crossover is polynomial, but exponential if crossover is turned off. In this extended manuscript we added runtime results of GSEMO on $m$-\RRRMO, and further introduced the function $m$-\RRRMOuni to also showcase that the use of uniform crossover may significantly improve performance. As in~\citep{Dang2024} $m$-\RRRMOuni requires a completely different construction as $m$-\RRRMO. Unlike~\cite{Opris2025}, all our results are formulated for a number $m$ of objectives which is not necessarily constant.

\textbf{Related work:} As already mentioned in~\citep{Dang2024}, there are several rigorous results on the usefulness of crossover on pseudo-Boolean functions in single objective optimization. An exponential performance gap in the runtime was proven the first time by~\citet{Jansen2005c}. These functions were named "real royal road" functions because previous attempts to define "royal road" functions had been unsuccessful~\citep{Mitchell1992,Forrest1993}. To optimize the \RRRfull function, EAs without crossover need exponential time with overwhelming probability while an easy designed EA with one-point crossover optimizes \RRRfull in time $O(n^4)$. The reason is that \RRRfull yields EAs to evolve strings with all one-bits cumulated in a single block of length $2n/3$, and then with one-point crossover the optimal string can easily be generated. \Citet{Jansen2005c} also introduced a class of royal road functions for uniform crossover, which a $(\mu+1)$~GA can optimize in expected time $O(n^3)$ whereas evolutionary algorithms without crossover need exponential time with overwhelming probability. 

For $\JUMP_k$, where a fitness valley of size~$k$ has to be traversed, it has been shown rigorously that \emph{uniform} crossover gives a polynomial or superpolynomial speedup, depending on the parameter $k$~\citep{Jansen2002}. This result has been further improved in~\citep{Koetzing2011, Dang2017}. In~\citep{Opris24Jump} a variant of the standard $(\mu+1)$ GA has been studied where multiple offspring are generated during a crossover step which give massively improved runtime bounds compared to the standard $(\mu+1)$ EA without crossover. Advantages through crossover were also proven for the easy $\OM(x)$ problem~\citep{Sudholt2016,Corus2018a,Doerr2015,Doerr2018} where just the number of ones in the bit string $x$ is counted, and on 
combinatorial optimization problems like the closest string problem~\citep{Sutton21}, shortest paths~\citep{Doerr2012,DOERR201312} or graph coloring problems~\citep{Fischer2005,Sudholt2005}. Also special NP hard graph problems like the $k$-vertex cover or $k$-vertex cluster problem can be optimized efficiently with variants of crossover~\citep{SuttonGraph24}.\\
Early rigorous analyses of EMO focused on simple algorithms, such as SEMO (which mutates by \emph{one-bit mutation}, i.e. flipping a single bit chosen uniformly at random) and its variant GSEMO (which employs \emph{standard bit mutation} as a global search operator, flipping each bit independently with probability $1/n$), both without crossover~\citep{Laumanns2004,Giel2010}. Even today, these algorithms remain a hot topic in evolutionary computation research.
Very recently, GSEMO on unbounded integer spaces has been analyzed~\citep{Doerr_Krejca_Rudolph_2025} and progress in the understanding of GSEMO's population dynamics has been achieved by proving a lower bound of $\Omega(n^2 \log(n))$ for the runtime of GSEMO on the \COCZ, \OJZJ (a variant of the $\JUMP_k$-benchmark for two objectives) and \OMM-benchmarks~\citep{doerr2025tightruntimeguaranteesunderstanding}. 



NSGA-II \citep{Deb2002} is a very prominent algorithm in multi-objective optimization ($\sim$50000 citations), however its theoretical analysis only succeeded recently.
%
%
\citet{ZhengLuiDoerrAAAI22} conducted the first runtime analysis of NSGA-II without
crossover and proved that in expected $\bigO(\mu n^2)$ and
$\bigO(\mu n \log n)$ fitness evaluations the whole Pareto set of \LOTZ and \OMM respectively is found, if the population size $\mu$ is at least four times the size of the Pareto set. 
However, \nsga fails badly in optimizing many-objective problems where the number of objectives is at least three~\citep{Zheng2023Inefficiency}. To address this, \citet{Krejca2025b} proposed a variant of NSGA-II that incorporates a simple tie-breaking rule, enabling efficient handling of problems with more than two objectives.

The theoretical analysis of \nsgaIII only succeeded recently: \citet{WiethegerD23} conducted the first runtime analysis of \nsgaIII on the $3$-\OMMfull problem and showed that for $p \geq 21n$ divisions along each objective for defining the set of reference points, \nsgaIII finds the complete Pareto front of $3$-\OMM in expected $O(\mu n \log(n))$ evaluations where the population size $\mu$ coincides with the size of the Pareto front of $3$-\OMM. \citet{OprisNSGAIII} generalized this result on problems with more than three objectives and gave also a runtime analysis for the classical $m$-\COCZfull and $m$-\LOTZfull benchmarks~\citep{Laumanns2004} for any constant number $m$
of objectives: \nsgaIII with uniform parent selection and standard bit mutation optimizes $m$-\LOTZ in expected $O(n^{m+1})$ evaluations with a population size of $\mu = O(n^{m-1})$ and $m$-\OMM, $m$-\COCZ in expected $O(n^{m/2+1} \log(n))$ fitness evaluations where $\mu = O(n^{m/2})$ (coinciding with the size of the Pareto front of $m$-\OMM and $m$-\COCZ, respectively). They could also reduce the number of required divisions by more than a factor of $2$. 
There are also similar runtime analyses of the SPEA-II~\citep{RenSPEA2} or the SMS-EMOA~\citep{Zheng_Doerr_2024}, which are also two very popular algorithms in many-objective optimization. The efficiency of \nsga, \nsgaIII and SMS-EMOA compared to the classical GSEMO under a mild diversity-perserving mechanism is studied in~\citep{Dang2024Illustrating} where it is shown that on a bi-objective version of \textsc{Trap}, called $\textsc{OneTrapZeroTrap}$, those three algorithms need expected $O(\mu n \log(n))$ fitness evaluations for population sizes $\mu \geq 2$ to find both Pareto-optimal points $0^n$ and $1^n$. They significantly outperform GSEMO which can optimize $\textsc{OneTrapZeroTrap}$ only in expected $n^n$ time. A general dichotometry for conducting runtime analyses in the many-objective setting in terms of \emph{levels} can be found in~\citep{LevelBasedMOEAs}. However, this method is only limited to find the first point on the Pareto front.

All these results above do not take crossover operators in the many-objective setting into account and there are only a 
few papers about EMOAs with crossover which gave a rigorous runtime analysis about this topic. A few variants of GSEMO with crossover have been studied~\citep{Qian2013,Qian_Bian_Feng_2020,Doerr2022} and the first improvement of \nsga with crossover on the runtime on the classical \LOTZ, \OMM and \COCZ problems to $O(n^2)$ was provided by~\citet{Bian2022PPSN}. However, they used stochastic tournament selection and could not outperform~\citet{Covantes2020} which used SEMO with diversity-based parent selection schemes, but without crossover. Later, in parallel independent work,~\citet{DoerrQ23b} and~\citet{Dang2023} showed the first improvements on the runtime of \nsga with crossover for classical parent selection mechanisms in the bi-objective setting. The former studied \OJZJ and showed that crossover speeds up the expected runtime by a factor of $n$. The latter constructed a $\RRRfull$-function, similar to~\citet{Jansen2005c}, to show that crossover can even give an exponential speedup on the runtime. Those results have been extended to the many-objective setting by~\citet{Opris2025} for the \nsgaIII where the number of objectives is constant. They constructed an $m$-$\RRRfull$-function which can be optimized by \nsgaIII with one-point crossover in $O(n^3)$ generations (the runtime not depending on $\mu$ in contrast to~\citep{Dang2024}), but only in time $n^{\Omega(n)}$ without crossover for a polynomial large population size. 

\section{Preliminaries}

Let $\ln$ be the logarithm to base $e$ and $\log$ the logarithm to base $2$. let $[m]:=\{1, \ldots , m\}$ for $m \in \mathbb{N}$. For a finite set $A$ we denote by $\vert{A}\vert$ its cardinality. For two random variables $X$ and $Y$ on $\mathbb{N}_0$ we say that $Y$ stochastically dominates $X$ if $\Pr(Y \leq c) \leq \Pr(X \leq c)$ for every $c \geq 0$ (which also implies $\expect{Y} \leq \expect{X}$). We say that $f = O(\text{poly}(n))$ if there is a polynomial $g$ in $n$ with $f = O(g)$. For two bit strings $x,y$ denote by $H(x,y)$ its Hamming distance, i.e. $H(x,y) = \sum_{i=1}^n |x_i-y_i|$. The number of ones in a bit string $x$ is denoted by $\ones{x}$ and the number of zeros by $\zeros{x}$ respectively. The number of leading zeros in $x$, denoted by $\LZ(x)$, is the length of the longest prefix of $x$ which contains only zeros, and the number of trailing zeros in $x$, denoted by $\TZ(x)$, the length of the longest suffix of $x$ containing only zeros respectively. For example, if $x=00110110110000$, then $\LZ(x)=2$ and $\TZ(x)=4$. The number of leading ones in $x$, denoted by $\LO(x)$ and trailing ones in $x$, denoted by $\TOs(x)$, is the length of the longest prefix and suffix of $x$ which contains only ones respectively. Throughout this paper we use \emph{standard bit mutation} as mutation operator where each bit is flipped independently with probability $1/n$. For crossover, two operators are used: \emph{One-point crossover} on two parents $y_1,y_2 \in \{0,1\}^n$ chooses a \emph{cutting point} $k$ from $\{0, \ldots , n\}$ uniformly at random and creates an offspring $z$ with $z_i = (y_1)_i$ for all $1 \leq i \leq k$ and $z_i = (y_2)_i$ for $k < i \leq n$. \emph{Uniform crossover} creates a $z$ where $z_i = (y_1)_i$ with probability $1/2$ and $z_i = (y_2)_i$ with probability $1/2$ independently for each $i \in [n]$.

This article is about \emph{many-objective optimization}, particularly the maximization of a discrete $m$-objective pseudo-Boolean function $f(x):=(f_1(x), \ldots , f_m(x))$ where $f_i:\{0,1\}^n \to \mathbb{N}_0$ for each $i \in [m]$. When $m=2$, the function is also called \emph{bi-objective}. Let $f_{\max}$ be the maximum possible value of $f$ in one objective, i.e. $f_{\max}:=\max\{f_j(x) \mid x \in \{0,1\}^n, j \in [m]\}$. For $N \subseteq \{0,1\}^n$ let $f(N):=\{f(x) \mid x \in N\}$.

\begin{definition}
	Consider an $m$-objective function $f$.
	\begin{enumerate}
		\item[1)] Given two search points $x, y \in \{0, 1\}^n$,
		$x$ \emph{weakly dominates} $y$, denoted by $x \succeq y$,
		if $f_{i}(x) \geq f_{i}(y)$ for all $i \in [m]$ and $x$ \emph{(strictly) dominates} $y$, denoted by $x \succ y$, if one inequality is strict; if neither $x \succeq y$ nor $y \succeq x$ then $x$ and $y$ are \emph{incomparable}.
		\item[2)] A set $S \subseteq \{0,1\}^n$ is a \emph{set of mutually incomparable solutions} with respect to $f$ if all search points in $S$ are incomparable.
		\item[3)] Each solution not dominated by any other in $\{0, 1\}^n$ is called \emph{Pareto-optimal}. A mutually incomparable set of these solutions that covers all possible non-dominated fitness values is called a \emph{Pareto(-optimal) set} of $f$.
	\end{enumerate}
\end{definition}

To perform runtime analyses of \nsgaIII, it is essential to estimate the size of a maximum set of mutually incomparable solutions with respect to $f$. The following result is particularly useful for this purpose.

\begin{lemma}
\label{lem:dominance-preparation}
Let $f:\{0,1\}^n \to \mathbb{N}_0^m$ be any $m$-objective fitness function and let $S$ be a set of mutually incomparable solutions with respect to $f$. Then the following holds.
\begin{itemize}
\item[(i)] Suppose that $f$ attains at most $k$ different values in each objective. Then $|S| \leq k^{m-1}$.
\item[(ii)] Suppose that $m=2$ and let $a \in \mathbb{N}$ with $|f_1(x)-f_2(x)| \leq a$ for every $x \in \{0,1\}^n$. Then $|S| \leq a+1$.
\end{itemize}
\end{lemma}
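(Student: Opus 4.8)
The plan is to prove the two parts separately by short counting arguments, in each case exploiting that any two \emph{distinct} elements of $S$ are incomparable, so in particular neither weakly dominates the other.

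For part (i), I would consider the projection onto the first $m-1$ objectives, $\pi\colon S \to \prod_{i=1}^{m-1} f_i(\{0,1\}^n)$ given by $\pi(x)=(f_1(x),\dots,f_{m-1}(x))$, and show it is injective. Indeed, if $x,y\in S$ satisfy $\pi(x)=\pi(y)$, then $f_i(x)=f_i(y)$ for all $i\in[m-1]$; since either $f_m(x)\geq f_m(y)$ or $f_m(x)\leq f_m(y)$, it follows that $x\succeq y$ or $y\succeq x$, and because $S$ is a set of mutually incomparable solutions this forces $x=y$. By hypothesis each factor of the codomain has at most $k$ elements, so the codomain has size at most $k^{m-1}$, and injectivity yields $|S|\leq k^{m-1}$.

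For part (ii), I would first establish that when $m=2$ any two distinct incomparable points $x,y\in S$ satisfy, after possibly interchanging them, $f_1(x)<f_1(y)$ and $f_2(x)>f_2(y)$: if $f_1(x)=f_1(y)$ then whichever of $f_2(x),f_2(y)$ is larger gives a weak-domination relation, a contradiction, so the $f_1$-values differ; and once, say, $f_1(x)<f_1(y)$, incomparability rules out $y\succeq x$, which forces $f_2(x)>f_2(y)$. Hence I can list $S=\{x_1,\dots,x_t\}$ with $f_1(x_1)<\dots<f_1(x_t)$ and simultaneously $f_2(x_1)>\dots>f_2(x_t)$. Since these are strictly increasing (resp.\ decreasing) sequences of non-negative integers, $f_1(x_t)-f_1(x_1)\geq t-1$ and $f_2(x_1)-f_2(x_t)\geq t-1$. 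Using $f_1(x_t)-f_2(x_t)\leq a$ and $f_2(x_1)-f_1(x_1)\leq a$, which both follow from $|f_1-f_2|\leq a$, and adding these two inequalities, the $f_1$- and $f_2$-contributions regroup to give $2(t-1)\leq \big(f_1(x_t)-f_1(x_1)\big)+\big(f_2(x_1)-f_2(x_t)\big)\leq 2a$, whence $t\leq a+1$.

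Both arguments are essentially self-contained, so I do not expect a genuine obstacle; the only point needing care is the case analysis in part (ii) showing that incomparability in two objectives produces a chain that is strictly monotone in \emph{both} coordinates (this is what makes the integer-gap counting work), together with the bookkeeping in part (i) ensuring that $\pi(x)=\pi(y)$ really yields a weak-domination relation and hence $x=y$. Everything else is routine.
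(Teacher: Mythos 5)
Your proof of part (i) is correct and is essentially the paper's argument: project onto the first $m-1$ objectives and observe that two points of $S$ with the same projection would be comparable via the last objective, so the projection is injective into a set of size at most $k^{m-1}$.

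For part (ii) your argument is also correct, but it takes a slightly different route from the paper. You order $S$ as a chain $x_1,\dots,x_t$ with $f_1$ strictly increasing and $f_2$ strictly decreasing (which is exactly the right structural consequence of incomparability in two integer-valued objectives), then telescope: $f_1(x_t)-f_1(x_1)\geq t-1$ and $f_2(x_1)-f_2(x_t)\geq t-1$, while $f_1(x_t)-f_2(x_t)\leq a$ and $f_2(x_1)-f_1(x_1)\leq a$ sum to $2a$, giving $t\leq a+1$. The paper instead looks at the differences $d_v=v_1-v_2$ over $v\in f(S)$, shows that any two such differences are at least $2$ apart, and packs them into the interval $[-a,a]$ of $2a+1$ integers to get at most $a+1$ of them. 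The two arguments encode the same phenomenon (consecutive points of the antichain shift $f_1-f_2$ by at least $2$), but yours avoids the paper's somewhat delicate case analysis establishing $|d_v-d_w|\geq 2$ and is arguably cleaner; the paper's version has the minor advantage of not requiring an explicit ordering of $S$. Both are complete proofs.
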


\begin{proof}
(i): Let $V:=\{(v_1, \ldots , v_{m-1}) \mid \text{ there is } x \in \{0,1\}^n \text{ with } f(x)=(v_1, \ldots , v_m)\}$. Then $|V| = k^{m-1}$. We show that $|S| \leq |V|$. Let $x,y \in S$ with $x \neq y$. Then also $(f_1(x), \ldots , f_{m-1}(x)) \neq (f_1(y), \ldots , f_{m-1}(y))$ (if $(f_1(x), \ldots , f_{m-1}(x)) = (f_1(y), \ldots , f_{m-1}(y))$ then $x,y$ would be comparable since they differ only in the last component). Therefore $|S| = |\{(f_1(x), \ldots , f_{m-1}(x)) \mid x \in S\}| \leq |V|$.

(ii): We consider $W:=f(S) = \{f(x) \mid x \in S\}$. Then $|W| = |S|$ since two incomparable solutions have two different fitness vectors. For $v\in W$ denote by $d_v:=v_1-v_2$ the difference in both components. For $v,w \in W$ we observe that $d_w \notin \{d_v-1, d_v\}$ or, in other words, $w_2 \notin \{w_1 + d_v - 1, w_1 + d_v\}$. Otherwise $x,y$ with $f(x)=v,f(y)=w$ are comparable (since $v_1 < w_1$ implies $v_2 = v_1+d_v < w_1 + d_v$ and therefore $v_2 = v_1+d_v \leq w_2$ due to $w_2 \in \{w_1 + d_v-1,w_1 + d_v\}$ and $v_1 > w_1$ implies $v_2 = v_1+d_v > w_1 + d_v$ and therefore $v_2 = v_1+d_v > w_2$). Therefore $|d_v-d_w| \geq 2$ for every $v,w \in W$ implying that $|S| = |\{f_2(x)-f_1(x) \mid x \in S\}| \leq \lceil{(2a+1)/2}\rceil = a+1$ since $|f_1(x)-f_2(x)| \leq a$ concluding the proof.
\end{proof} 


\begin{algorithm}[t]
	\begin{algorithmic}[1]
	\STATE{Initialize $P_0 \sim \text{Unif}( (\{0,1\}^n)^{\mu})$;}
	\FOR{$t:= 0$ to $\infty$}
		\STATE{Initialize $Q_t:=\emptyset$;}
		\FOR{$i=1$ to $\mu/2$}
			\STATE{Sample $p_1,p_2$ from $P_t$ uniformly at random;}
			\STATE{Sample $r \sim \text{Unif}([0,1])$;}
			\IF{$r \leq p_c$}
				\STATE{Create $c_1$ by $1$-point crossover on $p_1,p_2$;}
				\STATE{Create $c_2$ by $1$-point crossover on $p_1,p_2$;}
			\ELSE
				\STATE{Create $c_1,c_2$ as copies from $p_1,p_2$;}
			\ENDIF
			\STATE{Create $s_1,s_2$ by standard bit mutation on $c_1,c_2$ with mutation probability $1/n$;}
			\STATE{Update $Q_t:=Q_t \cup \{s_1,s_2\}$;}
		\ENDFOR
		\STATE{Set $R_t := P_t \cup Q_t$;}
		\STATE{Partition $R_t$ into layers $F^1_t,F^2_t,\ldots ,F^k_t$ of non-dominated fitness vectors;}
		\STATE{Find $i^* \geq 1$ such that $\sum_{i=1}^{i^*-1} \lvert{F_t^i}\rvert < \mu$ and $\sum_{i=1}^{i^*} \lvert{F_t^i}\rvert \geq \mu$;}
		\STATE{Compute $Y_t = \bigcup_{i=1}^{i^*-1} F_t^i$;}
		\STATE{Choose $\tilde{F}_t^{i^*} \subset F_t^{i^*}$ such that $\lvert{Y_t \cup \tilde{F}_t^{i^*}}\rvert = \mu$ with Algorithm~\ref{alg:Survival-Selection};}
		\STATE{Create the next population $P_{t+1} := Y_t \cup \tilde{F}^t_{i^*}$;}
		\ENDFOR
	\end{algorithmic}
	\caption{\nsgaIII on an $m$-objective function $f$ with population size $\mu$ and crossover probability $p_c$~\citep{DebJain2014}}
	\label{alg:nsga-iii}
\end{algorithm}

The \nsgaIII algorithm~\citep{DebJain2014} is shown in Algorithm~\ref{alg:nsga-iii}. At first, a multiset of size $\mu$ is generated, called \emph{population}, by initializing $\mu$ individuals uniformly at random. Then in each generation, a population $Q_t$ of $\mu$ new offspring is created by conducting the following operations $\mu/2$ times, where a single sequence of this operations is often called a \emph{trial}. At first two parents $p_1$ and $p_2$ are chosen uniformly at random. Then $1$-point crossover will be applied on $(p_1,p_2)$ with some probability $p_c \in [0,1]$ to produce two solutions $c_1,c_2$. If $1$-point crossover is not executed (with probability $1-p_c$), $c_1,c_2$ are exact copies of $p_1,p_2$. Finally, two offspring $s_1$ and $s_2$ are created with \emph{standard bit mutation} on $c_1$ and $c_2$, i.e. by flipping each bit independently with probability $1/n$.\\ 
During the survival selection, the parent and offspring populations $P_t$ and $Q_t$ are merged into $R_t$, and then partitioned into layers $F^1_{t+1},F^2_{t+1},\dots$ using the \emph{non-dominated sorting algorithm}~\citep{Deb2002}. The layer $F^1_{t+1}$ consists of all non-dominated points, and $F^i_{t+1}$ for $i>1$ consists of points that are only dominated by
those from $F^1_{t+1},\dots,F^{i-1}_{t+1}$. Then the critical and unique index $i^*$ with $\sum_{i=1}^{i^*-1} \lvert{F_i^t}\rvert < \mu$ and $\sum_{i=1}^{i^*} \lvert{F_i^t}\rvert \geq \mu$ is determined (i.e. there are fewer than $\mu$ search points in $R_t$ with a lower rank than $i^*$, but at least $\mu$ search points with rank at most $i^*$). 
All individuals with a smaller rank than $i^*$ are taken into $P_{t+1}$ and the remaining points are chosen from $F_{i^*}^t$ with Algorithm~\ref{alg:Survival-Selection}. 

\begin{algorithm}[t]
	\begin{algorithmic}[1]
	\STATE{Compute the normalization $f^n$ of $f$;} 
	\STATE{Associate each $x\in Y_t \cup F_t^{i^*}$ with its reference point $\rp(x)$ such that the distance between $f^n(x)$ and the line through the origin and $\rp(x)$ is minimized;} 
	\STATE{For each $r \in \refer$, set $\rho_r:=|\{x\in Y_t \mid \mathrm{rp}(x)=r\}|$;}
	\STATE{Initialize $\tilde{F}_t^{i^*}=\emptyset$ and $R':=\refer$;}
	\WHILE{true}
		\STATE{Determine $r_{\min} \in R'$ such that $\rho_{r_{\min}}$ is minimal (where ties are broken randomly);}
		\STATE{Determine $x_{r_{\min}} \in F_t^{i^*} \setminus \tilde{F}_t^{i^*}$ which is associated with $r_{\min}$ and minimizes the distance between the vectors $f^n(x_{r_{\min}})$ and $r_{\min}$ (where ties are broken randomly)\label{line:association};}
		\IF{$x_{r_{\min}}$ exists}
			\STATE{$\tilde{F}_t^{i^*} = \tilde{F}_t^{i^*} \cup \{x_{r_{\min}}\}$;}
			\STATE{$\rho_{r_{\min}} = \rho_{r_{\min}} + 1$;}
			\IF{$\lvert{Y_t}\rvert + \lvert{\tilde{F}_t^{i^*}}\rvert = \mu$}
			\RETURN{$\tilde{F}_t^{i^*}$}
			\ENDIF
		\ELSE
			\STATE{$R'=R' \setminus \{r_{\min}\}$;}
		\ENDIF
	\ENDWHILE
	\end{algorithmic}
	\caption{Selection procedure utilizing a set $\refer$ of reference points for maximizing a function}
	\label{alg:Survival-Selection}
\end{algorithm}

At first in Algorithm~\ref{alg:Survival-Selection}, a normalized objective function $f^n$ is computed and then each individual with rank at most $i^*$ is associated with reference points. We use the same set of structured reference points $\refer$ as
proposed in the original paper~\citep{DebJain2014}, originated in~\citep{Das1998}. The points are defined on the simplex of the unit vectors 
$(1,0,\dots,0)^{\intercal},(0,1,\dots,0)^{\intercal},\dots,(0,0,\dots,1)^{\intercal}$ as:
\[
\left\{\left(\frac{a_1}{p}, \ldots ,\frac{a_m}{p} \right) 
\text{ }\Big|\text{ }
(a_1,\dots,a_m) \in \mathbb{N}_0^m, 
\sum_{i=1}^m a_i = p
\right\}
\]
where $p \in \mathbb{N}$ is a parameter one can choose according 
to the fitness function $f$. 

Now each individual $x$ is associated with the reference point $\text{rp}(x)$ such that the distance between $f^n(x)$ and the line through the origin and $\text{rp}(x)$ is minimal. 
Then, one iterates through all the reference points where the reference point with the fewest associated individuals that are already selected for the next generation $P_{t+1}$ is chosen. Ties are broken uniformly at random. A reference point is omitted if it only has associated individuals that are already selected for $P_{t+1}$. Then, among the not yet selected individuals of that reference point, the one nearest to the chosen reference point is taken for the next generation where ties are again broken uniformly at random. If the required number of individuals is reached (i.e. if $\lvert{Y_t}\rvert+\lvert{\tilde{F}_t^{i^*}}\rvert = \mu$) the selection ends. In Line~\ref{line:association} of Algorithm~\ref{alg:Survival-Selection} one could use any other diversity-perserving mechanism if $\rho_{r_{\min}} \geq 1$. Note that \nsga follows the same scheme as Algorithm~\ref{alg:nsga-iii} with the difference that $\tilde{F}_t^{i^*} \subset F_t^{i^*}$ is chosen based on sorting according to the crowding distance, rather than using Algorithm~\ref{alg:Survival-Selection}~\citep{Deb2002}.

Further, we use the normalization procedure from~\citep{WiethegerD23} which can be also used for maximization problems as shown in~\citep{OprisNSGAIII}. We omit detailed explanations as they are not needed for our purposes. For an $m$-objective function $f\colon \{0,1\}^n\rightarrow \mathbb{N}_0^m$, the normalized fitness vector $f^{n}(x):=(f_1^n(x),\dots,f_m^n(x))$ of a search point $x$ is given by
\begin{align}
	f_j^n(x)
	=\frac{f_j(x)-y_j^{\min}}{y_j^{\text{nad}}-y_j^{\min}}\label{eq:nsgaiii-normalize}
\end{align}
for each $j \in [m]$. The points $y^{\text{nad}}:=(y_1^{\text{nad}}, \ldots, y_m^{\text{nad}})$ and 
$y^{\min}:=(y_1^{\min}, \dots, y_m^{\min})$ from the objective space are denoted by
\emph{nadir} and \emph{ideal} points, respectively. In particular, $y_j^{\text{min}}$ is set to the minimum value in objective $j$ from all search points seen so far (i.e. from $R_0,\ldots , R_t$). Computing the nadir point is non-trivial, but the procedure ensures for each $j \in [m]$ that $y_j^{\text{nad}} \geq \varepsilon_{\text{nad}}$, and $y_j^{\text{min}} \leq y_j^{\text{nad}} \leq y_j^{\text{max}}$ 
where $y_j^{\max}$ is the maximum value in objective $j$ from all search points seen so far 
and $\varepsilon_{\text{nad}}$ is a positive threshold. The following crucial result from~\citep{OprisNSGAIII} shows that sufficiently many reference points protect good solutions. In other words, if a population covers a fitness vector $v$ with a first-ranked individual $x$, i.e. there is $x \in F_t^1$ with $f(x)=v$, then it is covered for all future generations as long as $x \in F_t^1$. (Compare also with~\citep{WiethegerD23} for a similar result, but limited to the 3-objective $m$-\OMM problem for a higher number $p$ of divisions.)

\begin{lemma}[\citet{OprisNSGAIII}, Lemma~3.4]
\label{lem:Reference-Points}
Consider \nsgaIII optimizing an $m$-objective function $f$ with $\varepsilon_{\text{nad}} \geq f_{\max}$ and a set $\refer$ of reference points for $p \in \mathbb{N}$ with $p \geq 2m^{3/2}f_{\max}$. Let $P_t$ be its current population and $F_t^1$ be the multiset describing the first layer of the merged population of parent and offspring. Assume the population size $\mu$ fulfills the condition $\mu \geq \lvert{S}\rvert$ where $S$ is a maximum set of mutually incomparable solutions. 
Then for every $x \in F_t^1$ there is a $x' \in P_{t+1}$ with $f(x')=f(x)$.
\end{lemma}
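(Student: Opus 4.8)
The plan is to split on the critical layer index $i^*$ and, in the non\nobreakdash-trivial case, to combine a geometric property of the reference set with a careful accounting of the counters $\rho_r$ during the run of Algorithm~\ref{alg:Survival-Selection}.

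First I would dispose of the case $i^*\ge 2$. Then the whole first layer has rank smaller than $i^*$, so $F_t^1\subseteq Y_t=\bigcup_{i=1}^{i^*-1}F_t^i\subseteq P_{t+1}$ and the claim holds with $x'=x$. So from now on assume $i^*=1$; then $Y_t=\emptyset$, $\lvert F_t^1\rvert\ge\mu$, and $P_{t+1}=\tilde F_t^1$ is the output of Algorithm~\ref{alg:Survival-Selection}, which starts with $\rho_r=0$ for every $r\in\refer$.

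Next I would establish the key geometric fact: within a fixed generation the reference point $\rp(x)$ is a function of $f(x)$ alone, and the induced map $v\mapsto\rp(v)$ is injective on the attainable fitness vectors. The first part is immediate, since $\rp(x)$ is computed from $f^n(x)$, which by~\eqref{eq:nsgaiii-normalize} depends only on $f(x)$ and on the generation\nobreakdash-uniform normalization constants; the second part is the real content and is where both hypotheses $\varepsilon_{\text{nad}}\ge f_{\max}$ and $p\ge 2m^{3/2}f_{\max}$ enter. Because $y_j^{\mathrm{nad}}-y_j^{\min}\le f_{\max}$ for every $j$, two distinct integer fitness vectors are mapped by $f^n$ to grid points at $\ell_\infty$\nobreakdash-distance at least $1/f_{\max}$; a direct computation then shows that with $p\ge 2m^{3/2}f_{\max}$ divisions the simplex of reference points is fine enough that each attainable normalized fitness vector has a unique closest reference point and no two distinct such vectors share it. Consequently, for every value $v$ attained in $F_t^1$ there is a reference point $r(v)$, all the $r(v)$ are pairwise distinct, and the individuals of $F_t^1$ associated with $r(v)$ are exactly $\{x\in F_t^1 : f(x)=v\}$, a non\nobreakdash-empty set.

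Finally I would run the bookkeeping for Algorithm~\ref{alg:Survival-Selection}. Let $D=f(F_t^1)$ be the set of distinct fitness vectors of the first layer; since these solutions are mutually incomparable, $\lvert D\rvert\le\lvert S\rvert\le\mu$ by assumption. Call $v\in D$ \emph{served} once an individual with fitness $v$ has entered $\tilde F_t^1$; by the previous paragraph $v$ is served exactly when $\rho_{r(v)}$ has been incremented, so an unserved $v$ keeps $\rho_{r(v)}=0$, and $r(v)$ cannot leave $R'$ while $v$ is unserved (its removal would require it to be picked as $r_{\min}$ with all associated individuals already chosen). Reference points carrying no individual of $F_t^1$ are discarded the first time they are picked and so cannot block progress indefinitely. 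Hence, while some $v\in D$ is unserved, every genuine selection step picks a reference point of minimal count $0$; as a once\nobreakdash-served $r(w)$ has count $1$, that reference point must be some $r(v)$ with $v$ still unserved, and the step serves $v$. Thus the first $\lvert D\rvert$ selection steps serve $\lvert D\rvert$ pairwise distinct elements, i.e.\ all of $D$, and since $\lvert D\rvert\le\mu$ the algorithm performs at least that many selections before stopping. Therefore every $v\in D$ is represented in $\tilde F_t^1=P_{t+1}$, which proves the lemma. I expect the main obstacle to be the geometric injectivity fact — checking that $p\ge 2m^{3/2}f_{\max}$ divisions really do separate all attainable normalized fitness vectors by reference points — while the remaining steps are essentially routine accounting.
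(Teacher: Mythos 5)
This lemma is imported into the paper from \citet{OprisNSGAIII} (their Lemma~3.4) and the paper gives no proof of it, so there is nothing in this manuscript to compare your argument against line by line; your proposal does, however, follow the same strategy as the original source: dispose of the case $i^*\ge 2$, reduce to $i^*=1$ with $Y_t=\emptyset$, invoke an injectivity property of the map from fitness vectors to associated reference points, and finish with a counting argument over the counters $\rho_r$. Your bookkeeping for Algorithm~\ref{alg:Survival-Selection} is correct and complete: the observation that $|f(F_t^1)|\le |S|\le\mu$ because distinct fitness vectors in the first layer are mutually incomparable, that an unserved fitness value keeps its reference point in $R'$ with counter $0$, and that consequently the first $|f(F_t^1)|$ genuine selections hit pairwise distinct fitness values, is exactly the right accounting.

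The genuine gap is the geometric step you yourself flag as the main obstacle. The assertion that with $p\ge 2m^{3/2}f_{\max}$ divisions no two attainable normalized fitness vectors share an associated reference point (and that each has a unique nearest one, which you also need so that $\rp$ is a well-defined function of $f(x)$ rather than depending on tie-breaking) is the entire substantive content of the lemma --- it is where the specific constant $2m^{3/2}$ and the hypothesis $\varepsilon_{\text{nad}}\ge f_{\max}$ are actually consumed. ``A direct computation then shows'' does not discharge it: one must bound the distance from a normalized fitness vector to its nearest reference line (roughly $\sqrt{m}/(2p)$ after projecting onto the simplex, whence the $m^{3/2}$ once combined with the $1/f_{\max}$ separation of distinct normalized values in $\ell_\infty$) and show this is strictly less than half the minimal distance between the projections of two distinct attainable vectors. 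Without that computation the proof is a correct reduction of the lemma to its hard kernel rather than a proof. Everything else in your write-up is sound.
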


\begin{algorithm}[t]
	\begin{algorithmic}[1]
		\STATE Initialize $P_0:=\{s\}$ where $s \sim \Unif(\{0,1\}^n)$;
		\FOR{$t:= 0$ \TO $\infty$}
		\STATE{Sample $p_1 \sim \Unif(P_t)$;}
		\STATE{Sample $u \sim \Unif([0,1])$;}
		\IF{($u<p_c$)}
		\STATE{Sample $p_2 \sim \Unif(P_t)$;}
		\STATE{Create $s$ by crossover between $p_1$ and $p_2$;}
		\ELSE
		\STATE{Create $s$ as a copy of $p_1$;}
		\ENDIF
		\STATE{Create $s'$ by mutation on $s$;}\label{alg:gsemo:line-mutation} 
	\IF{($s'$ is \NOT dominated by any individual in $P_t$)}
	\STATE{Create the next population $P_{t+1} := P_t \cup \{s'\}$;}
	\STATE{Remove all $x \in P_{t+1}$ 
		weakly dominated by $s'$;} 
\ENDIF
\ENDFOR
\end{algorithmic}
\caption{GSEMO Algorithm}
\label{alg:gsemo}
\end{algorithm}

The GSEMO algorithm is presented in Algorithm~\ref{alg:gsemo}. It starts from a randomly generated solution, and in each generation, a new search point
$s'$ is created as follows. With probability $p_c \in [0,1]$,
crossover is performed on two parents selected uniformly at random and without replacement, followed by mutation. Otherwise, mutation is applied directly to a randomly chosen parent. If the offspring $s'$ is not dominated by any solution in the current population $P_t$, it is added to the population, and all solutions weakly dominated by $s'$ are removed. Consequently, the population size $|P_t|$ may vary over time.

\subsection{Mathematical Tools}

At first we state some inequalities and monotonicity properties of functions which are very useful in our analyses below.

\begin{lemma}
\label{lem:inequality}
For every $k \geq 4$ and $x\geq 5$ the inequality $(\sqrt{2x+4})^k \leq (x+1)^{k-1}$ holds.
\end{lemma}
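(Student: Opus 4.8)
The plan is to reduce the statement to an elementary polynomial inequality. Since $(\sqrt{2x+4})^k = (2x+4)^{k/2}$, the claimed inequality is equivalent to $(2x+4)^{k/2}\le (x+1)^{k-1}$, and after dividing both sides by $(x+1)^k>0$ it takes the form
\[
r^k \le \frac{1}{x+1}, \qquad \text{where } r := \frac{\sqrt{2x+4}}{x+1}.
\]
The whole argument then rests on controlling the single number $r$.

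First I would note that $r<1$ for every $x\ge 5$ (indeed already for $x\ge 2$), because $\sqrt{2x+4}<x+1$ is equivalent to $2x+4<(x+1)^2$, i.e.\ to $3<x^2$. Since $0<r<1$, the map $k\mapsto r^k$ is decreasing, so for all $k\ge 4$ we have $r^k\le r^4$; hence it suffices to verify the bound for $k=4$, that is $r^4\le 1/(x+1)$. Clearing denominators, this is exactly
\[
(2x+4)^2 \le (x+1)^3 .
\]

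It remains to check this last inequality for $x\ge 5$. Put $g(x):=(x+1)^3-(2x+4)^2$. Then $g(5)=216-196=20>0$, and $g'(x)=3(x+1)^2-4(2x+4)=3x^2-2x-13$, whose only positive root is $\tfrac{1+2\sqrt{10}}{3}<3$, so $g'(x)>0$ for all $x\ge 5$. Therefore $g$ is increasing on $[5,\infty)$ and stays positive there, which yields $(2x+4)^2\le(x+1)^3$ and completes the proof. I do not anticipate a real obstacle; the one place that needs a moment's care is the reduction in $k$, which is only legitimate after confirming $r<1$ — otherwise passing from $k$ to $k=4$ would go the wrong way.
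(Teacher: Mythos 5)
Your proof is correct and follows essentially the same route as the paper: both reduce to the base case $k=4$, i.e.\ $(2x+4)^2\le(x+1)^3$, verify it at $x=5$ together with a derivative comparison (your $g'(x)=3x^2-2x-13$ is exactly the paper's difference of derivatives), and then pass to general $k\ge 4$ using $\sqrt{2x+4}\le x+1$ — which you phrase as $r<1$ and monotonicity of $r^k$, while the paper multiplies the two sides of the base inequality by these factors. The arguments are interchangeable.
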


\begin{proof}
Suppose that $k=4$. We have $(\sqrt{2x+4})^4 = (2x+4)^2 \leq (x+1)^3$ for $x=5$ and hence, also for $x \geq 5$ since the derivative of the left term with respect to $x$ is $8x+16$, the derivative of the left term with respect to $x$ is $3x^2+6x+3$ and for $x \geq 5$ we have $8x + 16 \leq 3x^2+6x+3$. The result follows also for $k > 4$ since $\sqrt{2x+4} \leq x+1$ for $x \geq 2$.
\end{proof}

\begin{lemma}
\label{lem:function-monotone}
The following properties are satisfied.
\begin{itemize}
    \item[(i)] For every $k>0$ the function $f: \; ]0,\infty[ \; \to \mathbb{R}, x \mapsto (k/x+1)^{x-1},$ is strictly monotone increasing.
    \item[(ii)] For every $k>0$ the function $g: \; ]0,\infty[ \; \to \mathbb{R}, x \mapsto x \cdot \ln(k/x+1),$ is strictly monotone increasing.
    \item[(iii)] For every $k>1$ the function $[1,k/e] \to \mathbb{R}, x \mapsto (k/x)^{x-1}$ is strictly monotone increasing.
    \item[(iv)] For every $k>0$ the function $]0,k/e] \to \mathbb{R}, x \mapsto (k/x)^{x/2},$ is strictly monotone increasing.
\end{itemize}
\end{lemma}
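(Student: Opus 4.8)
The plan is to treat each of the four items by the standard calculus recipe: take a logarithm (or, for item~(i), a logarithmic transformation) to turn a power into a product, differentiate, and show the derivative is strictly positive on the stated interval. For item~(ii), write $g(x) = x\ln(k/x+1)$ and compute $g'(x) = \ln(k/x+1) + x\cdot\frac{-k/x^2}{k/x+1} = \ln(k/x+1) - \frac{k/x}{k/x+1}$. Substituting $u := k/x > 0$, this is $\ln(1+u) - \frac{u}{1+u}$, which is a classical inequality: it equals $0$ at $u=0$ and has derivative $\frac{1}{1+u} - \frac{1}{(1+u)^2} = \frac{u}{(1+u)^2} > 0$ for $u>0$, so $g'(x) > 0$ for all $x\in\,]0,\infty[$.

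For item~(i), note $f(x) = (k/x+1)^{x-1} = \exp\big((x-1)\ln(k/x+1)\big)$, so it suffices to show $h(x) := (x-1)\ln(k/x+1)$ is strictly increasing. Differentiating, $h'(x) = \ln(k/x+1) + (x-1)\cdot\frac{-k/x^2}{k/x+1} = \ln(k/x+1) - \frac{(x-1)k/x^2}{k/x+1}$. Writing $u = k/x$ again, the subtracted term is $\frac{(x-1)u/x}{1+u} = \frac{u}{1+u}\big(1 - 1/x\big) \leq \frac{u}{1+u}$, so $h'(x) \geq \ln(1+u) - \frac{u}{1+u} > 0$ by the same classical inequality used for~(ii) (and strictly, since for $x>0$ we have $1-1/x < 1$ when $x<\infty$, and even the weaker bound already gives strict positivity because $\ln(1+u)-\frac{u}{1+u}>0$ for $u>0$). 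Items~(iii) and~(iv) are handled the same way: for~(iii) set $\phi(x) = (x-1)\ln(k/x) = (x-1)(\ln k - \ln x)$, so $\phi'(x) = \ln(k/x) - (x-1)/x = \ln(k/x) - 1 + 1/x$; on $[1,k/e]$ we have $k/x \geq e$, hence $\ln(k/x) \geq 1$, giving $\phi'(x) \geq 1/x > 0$. For~(iv) set $\psi(x) = \tfrac{x}{2}\ln(k/x)$, so $\psi'(x) = \tfrac12\ln(k/x) - \tfrac12 = \tfrac12(\ln(k/x) - 1)$, which is $\geq 0$ on $]0,k/e]$ since there $k/x \geq e$; strictness holds on the open-ended side and one notes that the endpoint $x=k/e$ only fails strictness of $\psi'$ at a single point, which does not affect strict monotonicity of $\psi$ itself.

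The only mild subtlety — and the one place to be slightly careful rather than a genuine obstacle — is bookkeeping about \emph{strict} versus non-strict monotonicity at the right endpoints $x=k/e$ in items~(iii) and~(iv), where the derivative of the exponent vanishes: one should remark that a continuous function whose derivative is nonnegative on an interval and strictly positive on its interior is strictly increasing on the whole interval, so a single zero of the derivative at an endpoint is harmless. Everything else reduces to the single elementary fact $\ln(1+u) > \frac{u}{1+u}$ for $u>0$, which I would prove once (via its value and derivative at $u=0$) and reuse for items~(i) and~(ii). No deeper ideas are needed; the lemma is a collection of routine one-variable estimates packaged for later use.
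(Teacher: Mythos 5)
Your proposal is correct and follows essentially the same route as the paper: take logarithms, differentiate, and reduce everything to the elementary inequality $\ln(1+u) > u/(1+u)$ for $u>0$ (which the paper cites rather than proves, and you prove directly). Your extra care about the endpoint $x=k/e$ in item (iv) is a minor refinement the paper glosses over, but the substance of the argument is identical.
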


\begin{proof}
    (i): By taking the logarithm, it is enough to show that the function $h: \; ]0,\infty[ \; \to \mathbb{R}, x \mapsto (x-1)\ln(k/x+1),$ is strictly monotone increasing. The derivative of $h$ with respect to $x$ is $$\ln\left(\frac{k}{x}+1 \right) - \frac{x-1}{k/x+1} \cdot \frac{k}{x^2} = \ln \left(\frac{k}{x}+1\right) - \frac{x-1}{x} \cdot \frac{k/x}{k/x+1} > 0$$
    since $y/(1+y) < \ln(1+y)$ for every $y > -1$ (see for example~\citep{Dragomir2020}), and therefore
    $$\ln\left(\frac{k}{x}+1\right) > \frac{k/x}{k/x+1} > \frac{x-1}{x}\frac{k/x}{k/x+1}.$$
    
    (ii): The derivative of $g$ with respect to $x$ is 
    $$\ln\left(\frac{k}{x}+1\right) - x \cdot \frac{1}{k/x+1} \cdot \frac{k}{x^2} > \frac{k/x}{k/x+1} - \frac{k/x}{k/x+1} = 0.$$

    (iii): Note that $\ln((k/x)^{x-1}) = (x-1) \ln (k/x)$ and the derivative with respect to $x$ of the right term is $\ln(k/x)-(x-1)/x > 0$ for $1 < x \leq k/e$.

    (iv): It is enough to show that the function $h: \; ]0,k/e] \to \mathbb{R}, x \mapsto x\ln(k/x)/2,$ is strictly monotone increasing. Its derivative with respect to $x$ is $\ln(k/x)/2 - 1/2 > 0$ for all $x \in \text{ } ]0,k/e[$.
\end{proof}

The following bounds are useful for estimating the probabilities when stochastic amplifications occur.

\begin{lemma}[Lemma~10 in \citet{Badkobeh2015}]\label{lem:Badkobeh}
	For every $p\in[0,1]$ and every $\lambda\in\Natural$,
	\begin{align*}
		1-(1-p)^\lambda \in \left[\frac{p\lambda}{1+p\lambda},\frac{2p\lambda}{1+p\lambda}\right].
	\end{align*}
\end{lemma}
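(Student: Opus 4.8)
The plan is to prove the two bounds separately, writing $t := p\lambda \ge 0$ so that everything reduces to one-variable scalar inequalities, each obtained by combining an elementary estimate on $(1-p)^\lambda$ with a familiar inequality.

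For the \textbf{lower bound}, I would start from $1-p \le \eulerE^{-p}$, which holds for all $p \in [0,1]$; since $1-p \ge 0$ here, raising both sides to the power $\lambda$ preserves the direction and gives $(1-p)^\lambda \le \eulerE^{-p\lambda} = \eulerE^{-t}$, hence $1-(1-p)^\lambda \ge 1-\eulerE^{-t}$. It then suffices to check $1-\eulerE^{-t} \ge t/(1+t)$, which after rearranging is $1/(1+t) \ge \eulerE^{-t}$, i.e.\ $1+t \le \eulerE^{t}$ — the standard exponential inequality. This yields $\frac{p\lambda}{1+p\lambda} \le 1-(1-p)^\lambda$.

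For the \textbf{upper bound}, I would first observe $1-(1-p)^\lambda \le \min\{1,\,p\lambda\}$: the bound by $1$ is immediate from $(1-p)^\lambda \ge 0$, and the bound by $p\lambda$ follows from Bernoulli's inequality $(1-p)^\lambda \ge 1-p\lambda$ (valid since $-p \ge -1$ and $\lambda \in \Natural$). It then remains to show $\min\{1,t\} \le \frac{2t}{1+t}$ for all $t \ge 0$, which I would do by a case split: if $t \le 1$ then $\frac{2t}{1+t} \ge \frac{2t}{2} = t = \min\{1,t\}$, and if $t \ge 1$ then $\frac{2t}{1+t} \ge 1 = \min\{1,t\}$ because $\frac{2t}{1+t} \ge 1 \iff 2t \ge 1+t \iff t \ge 1$. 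Combining the cases gives $1-(1-p)^\lambda \le \frac{2p\lambda}{1+p\lambda}$, finishing the proof.

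There is no genuine obstacle here: every ingredient is a one-line scalar fact (the exponential bound $1+t \le \eulerE^{t}$ and Bernoulli's inequality), so the only points requiring a little care are the bookkeeping at the boundary $p\lambda = 1$ in the case analysis for the upper bound, and noting that the hypothesis $p \in [0,1]$ (hence $1-p \ge 0$) is what makes the $\lambda$-th power step legitimate in the lower-bound argument.
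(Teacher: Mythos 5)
Your proof is correct. Note that the paper does not prove this statement at all --- it is imported verbatim as Lemma~10 of \citet{Badkobeh2015} --- so there is no in-paper argument to compare against. Both halves of your derivation check out: the lower bound follows from $(1-p)^\lambda \le \eulerE^{-p\lambda}$ together with $\eulerE^{t}\ge 1+t$, and the upper bound from $1-(1-p)^\lambda\le\min\{1,p\lambda\}$ (Bernoulli) plus the elementary case split at $p\lambda=1$; this is essentially the standard proof of the cited result.
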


We also require a tail bound for the sum of independent geometric random variables, which is a valuable tool for deriving runtime bounds that hold not only in expectation, but also with high probability.

\begin{theorem}[Theorem 15 in~\citet{DOERR2019115}]\label{thm:Doerr-dominance}
	Let $X_1, \ldots , X_n$ be independent geometric random variables with success probabilities $p_1, \ldots , p_n$. Let $X=\sum_{i=1}^n X_i$, $s=\sum_{i=1}^n (1/p_i)^2$ and $p_{\min}:=\min\{p_i \mid i \in [n]\}$. Then for all $\lambda \geq 0$
	\begin{enumerate}
		\item[(1)]
		$\Pr(X \geq \expect{X} + \lambda) \leq \exp \left(-\frac{1}{4} \min \left\{\frac{\lambda^2}{s}, \lambda p_{\min}\right\}\right)$,
		\item[(2)]
		$\Pr(X \leq \expect{X} - \lambda) \leq \exp \left(-\frac{\lambda^2}{2s}\right)$.
	\end{enumerate} 
\end{theorem}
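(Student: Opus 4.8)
The plan is to prove both parts by the exponential-moment (Chernoff--Cram\'er) method, exploiting that a geometric variable is sub-exponential; the upper tail will be the constrained case (its moment generating function has a finite domain), while the lower tail is an essentially unconstrained one-sided sub-Gaussian estimate, which is why only the upper bound carries a $\min$.

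For part~(1) I would start from the moment generating function of $X_i\sim\text{Geom}(p_i)$ on $\Natural$, namely $\E{e^{tX_i}}=\frac{p_i e^t}{1-(1-p_i)e^t}$, which is finite precisely for $t<-\ln(1-p_i)$ and hence, since $-\ln(1-p)\ge p$, in particular on $[0,p_i/2]$. Writing $\mu_i=1/p_i=\E{X_i}$ and $g_i(t)=\ln\E{e^{t(X_i-\mu_i)}}$, a direct computation gives $g_i(0)=g_i'(0)=0$ and $g_i''(t)=\frac{(1-p_i)e^{t}}{(1-(1-p_i)e^{t})^2}$, so bounding $g_i''$ on $[0,p_i/2]$ from above (controlling the denominator via $(1-p)e^{p/2}\le e^{-p/2}$) yields an estimate of the form $g_i(t)\le c\,t^2/p_i^2$ there for an absolute constant $c$. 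By independence, $\ln\E{e^{t(X-\E{X})}}=\sum_i g_i(t)\le c\,t^2 s$ for $t\in[0,p_{\min}/2]$, and Markov's inequality applied to $e^{t(X-\E{X})}$ gives $\prob{X\ge\E{X}+\lambda}\le\exp(c\,t^2 s-t\lambda)$. Then I would optimise the free parameter over $[0,p_{\min}/2]$: the unconstrained optimiser is $t^\star=\lambda/(2cs)$, so when $\lambda/s$ is small enough that $t^\star\le p_{\min}/2$ one gets $\exp(-\Theta(\lambda^2/s))$, and otherwise the boundary choice $t=p_{\min}/2$ gives $\exp(-\Theta(\lambda p_{\min}))$ (one checks the residual $c p_{\min}^2 s/4$ is dominated in this regime); gluing the two regimes and bookkeeping the constants produces the single bound $\exp(-\tfrac14\min\{\lambda^2/s,\lambda p_{\min}\})$.

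For part~(2) I would use the reflected variables, i.e. consider $\E{e^{-t(X_i-\mu_i)}}$ for $t\ge0$. The decisive structural point is that $\E{e^{-tX_i}}=\frac{p_i e^{-t}}{1-(1-p_i)e^{-t}}$ is finite for \emph{all} $t\ge0$, so there is no cap on the free parameter. Setting $h_i(t)=\ln\E{e^{-t(X_i-\mu_i)}}$ one checks $h_i(0)=h_i'(0)=0$ and $h_i''(t)=\frac{(1-p_i)e^{-t}}{(1-(1-p_i)e^{-t})^2}$, which is decreasing on $[0,\infty)$ with maximum $h_i''(0)=(1-p_i)/p_i^2=\var{X_i}\le 1/p_i^2$; hence $h_i(t)\le t^2/(2p_i^2)$ for all $t\ge0$, and by independence $\ln\E{e^{-t(X-\E{X})}}\le t^2 s/2$. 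Markov then gives $\prob{X\le\E{X}-\lambda}\le\exp(t^2 s/2-t\lambda)$ for every $t\ge0$, and minimising over the always-admissible optimiser $t=\lambda/s$ yields exactly $\exp(-\lambda^2/(2s))$.

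The main obstacle will be the upper tail, and specifically extracting the clean constant $\tfrac14$: because $\E{e^{tX_i}}$ blows up at $t=-\ln(1-p_i)$, the Chernoff parameter cannot be pushed arbitrarily high, which both forces the $\min$ and makes the whole delicacy reside in the right choice of admissible $t$-range (roughly $[0,p_{\min}/2]$) together with a sufficiently sharp second-order estimate of $g_i$, so that the sub-Gaussian regime and the linear regime join with the stated constants. The lower tail, being a one-sided sub-Gaussian bound with no domain restriction, is routine by comparison.
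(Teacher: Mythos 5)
First, note that the paper does not prove this statement at all: it is quoted verbatim as Theorem~15 of \citet{DOERR2019115} (itself a restatement of Witt's tail bounds for sums of independent geometric variables), so there is no in-paper proof to compare against. Your overall strategy -- the exponential-moment method, with the lower tail as an unconstrained sub-Gaussian bound and the upper tail as a constrained optimisation forced by the finite domain of the MGF -- is exactly the standard route to this result. Your part~(2) is complete and correct as outlined: $h_i''$ is maximised at $t=0$ where it equals $\var{X_i}\le 1/p_i^2$, so $\ln\E{e^{-t(X-\E{X})}}\le t^2s/2$ and $t=\lambda/s$ gives precisely $\exp(-\lambda^2/(2s))$.

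For part~(1), however, there is a genuine quantitative gap: the estimates you propose cannot produce the constant $\tfrac14$ in the sub-Gaussian regime. Bounding $g_i''(t)=\frac{(1-p_i)e^t}{(1-(1-p_i)e^t)^2}$ uniformly on $[0,p_i/2]$ gives at best $1-(1-p_i)e^t\ge (p_i-t)/(1-t)\ge p_i/2$, hence $g_i''\le 4/p_i^2$ and $g_i(t)\le 2t^2/p_i^2$, i.e.\ $c=2$ (your cruder denominator bound $1-e^{-p_i/2}$ gives $c=8$). Optimising $\exp(ct^2s-t\lambda)$ at $t^\star=\lambda/(2cs)$ then yields $\exp(-\lambda^2/(4cs))$, which is $\exp(-\lambda^2/(8s))$ or $\exp(-\lambda^2/(32s))$ -- not $\exp(-\lambda^2/(4s))$. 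The "bookkeeping" you defer is therefore not bookkeeping: no choice of $t$ in $[0,p_{\min}/2]$ recovers $\tfrac14$ from a uniform second-derivative bound of the form $c/p_i^2$ with $c\ge 1$. The standard fix is to replace the Taylor estimate by the sharper MGF bound valid on the whole interval $0\le t<p_i$: from $e^t\le 1/(1-t)$ one gets $\E{e^{tX_i}}\le p_i/(p_i-t)$, hence $\ln\E{e^{t(X_i-1/p_i)}}\le -\ln(1-t/p_i)-t/p_i\le \frac{(t/p_i)^2}{2(1-t/p_i)}$, so that $\ln\E{e^{t(X-\E{X})}}\le \frac{t^2s}{2(1-t/p_{\min})}$. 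Choosing $t=\tfrac12\min\{\lambda/s,\,p_{\min}\}$ makes the denominator at least $\tfrac12$ and yields $\exp(-\lambda^2/(4s))$ when $\lambda\le sp_{\min}$ and $\exp(-\lambda p_{\min}/4)$ otherwise, i.e.\ exactly the stated bound. So your architecture is right, but the upper-tail argument needs this sharper per-variable estimate rather than the second-derivative bound restricted to $[0,p_i/2]$.
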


Finally, we employ drift analysis (originated in~\citep{Jun2004}), which became a standard tool in runtime analysis over the last years (see for example~\citep{Koetzing2016} for a detailed overview).

\begin{theorem}[Additive drift theorem from \citet{Jun2004}]
\label{thm:additive-drift}
Let $(X_t)_{t\geq 0}$ be a sequence of non-negative random variables with a
finite state space $S\subseteq \Real_{\geq 0}$ with $0\in S$. Define
$T := \inf\{t \geq 0 \mid X_{t} = 0\}$.
\begin{enumerate}
	\item[(i)] Suppose that there is $\delta>0$ such that $\expect{X_{t}-X_{t+1}\mid X_{t}=s}\geq\delta$ for all $s\in S\setminus\{0\}$ and all $t\geq 0$. Then $\expect{T \mid X_0}\leq X_0/\delta$.
	\item[(ii)] Suppose that there is $\delta>0$ such that $\expect{X_{t}-X_{t+1}\mid X_{t}=s}\leq\delta$ for all $s\in S\setminus\{0\}$ and all $t\geq 0$. Then $\expect{T \mid X_0}\geq X_0/\delta$.
\end{enumerate}
\end{theorem}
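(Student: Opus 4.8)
The plan is to run the standard telescoping/optional-stopping argument on the stopped process $(X_{t\wedge T})_{t\geq 0}$, which reduces both parts to elementary manipulations with the tower rule. Throughout we may condition on the value of $X_0$ and treat it as a constant. First I would fix the natural filtration $\filt{t}:=\sigma(X_0,\dots,X_t)$ and observe that $T$ is a stopping time for it: the event $\{T>i\}=\{X_0\neq 0,\dots,X_i\neq 0\}$ is $\filt{i}$-measurable, and on this event $X_i\in S\setminus\{0\}$, so the drift hypothesis is applicable to $\expect{X_i-X_{i+1}\mid\filt{i}}$.

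Next, for every $t\geq 0$ the identity
\[
X_0-X_{t\wedge T}=\sum_{i=0}^{t-1}(X_i-X_{i+1})\indc{i<T}
\]
holds pathwise (the terms with $i\geq T$ vanish, and for $t<T$ all indicators up to $t-1$ are $1$). Taking expectations and applying the tower rule to each summand, pulling out the $\filt{i}$-measurable factor $\indc{i<T}$, gives
\[
\expect{X_0-X_{t\wedge T}}=\sum_{i=0}^{t-1}\expect{\indc{i<T}\,\expect{X_i-X_{i+1}\mid\filt{i}}}.
\]
In case (i) the inner conditional expectation is $\geq\delta$ on $\{i<T\}$, so the right-hand side is at least $\delta\sum_{i=0}^{t-1}\Prob{i<T}=\delta\,\expect{t\wedge T}$; in case (ii) it is $\leq\delta$, giving the reverse bound. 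Using $X_{t\wedge T}\geq 0$ in case (i) we get $\delta\,\expect{t\wedge T}\leq \expect{X_0-X_{t\wedge T}}\leq X_0$ for all $t$, and since $t\wedge T\uparrow T$, monotone convergence yields $\delta\,\expect{T}\leq X_0$; in particular $\expect{T}<\infty$, so $T<\infty$ almost surely, and part (i) follows.

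For part (ii) the symmetric computation gives $X_0-\expect{X_{t\wedge T}}\leq\delta\,\expect{t\wedge T}\leq\delta\,\expect{T}$, so it remains to show $\expect{X_{t\wedge T}}\to 0$ as $t\to\infty$. If $\Prob{T=\infty}>0$ then $\expect{T}=\infty$ and the bound is trivial, so assume $T<\infty$ almost surely; then $X_{t\wedge T}\to X_T=0$ pointwise, and since $S$ is finite the variables $X_{t\wedge T}$ are uniformly bounded, so bounded convergence gives $\expect{X_{t\wedge T}}\to 0$ and hence $X_0\leq\delta\,\expect{T}$. The only delicate point is precisely this last limit interchange: it is where the finiteness of the state space enters (to supply uniform boundedness) and where one must separately dispose of the event $\{T=\infty\}$; the rest is routine bookkeeping with the tower rule and the $\filt{i}$-measurability of $\{T>i\}$.
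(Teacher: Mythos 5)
The paper gives no proof of this statement: it is imported as a black box from the cited reference, so there is nothing internal to compare against. Your argument is the standard telescoping/optional-stopping derivation and it is correct. The pathwise identity $X_0-X_{t\wedge T}=\sum_{i=0}^{t-1}(X_i-X_{i+1})\indc{i<T}$, the tower-rule step exploiting the $\mathcal{F}_i$-measurability of $\{i<T\}$, the monotone-convergence passage $t\wedge T\uparrow T$ in part (i), and the bounded-convergence passage in part (ii) -- where the finiteness of $S$ and the separate disposal of $\{T=\infty\}$ are indeed exactly the delicate points -- are all handled properly.

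One mismatch between your proof and the literal statement deserves a remark. The hypothesis as stated bounds $\expect{X_t-X_{t+1}\mid X_t=s}$, i.e., conditions only on the current value, whereas your telescoping step needs the bound for $\expect{X_i-X_{i+1}\mid\mathcal{F}_i}$ on the event $\{i<T\}$; that event depends on the whole history $X_0,\dots,X_i$, not on $X_i$ alone. For a non-Markovian process the pointwise-in-$s$ condition does not formally imply the filtration version, so you are silently strengthening the hypothesis. This is a well-known imprecision in the He--Yao formulation and is harmless here (every application of the theorem in this paper conditions on, or is compatible with, the full history), but a fully rigorous write-up should either assume the Markov property or restate the drift condition relative to the natural filtration, as the paper's own negative-drift theorem (Theorem~\ref{thm:negative-drift}) does.
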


In our analysis of evolutionary algorithms on the proposed royal road functions, we encounter situations where a part of the bit string provides no fitness signal. As a result, standard bit mutations tend to drive this region toward an approximately equal number of zeros and ones, regardless of the initialization. Since this behavior plays a key role in several of our proofs, we provide a lemma that bounds the expected time to reach such a balanced state. As a side note, our result generalizes Lemma~5 from~\citep{Dang2024}, as the size of the considered region can be also $o(n)$.

\begin{lemma}
\label{lem:additive-drift-main}
Consider a sequence of search points $x(0), x(1), \ldots $ from $\{0,1\}^n$ such that $x(t+1)$ results from $x(t)$ by applying a standard bit mutation. Let $L \subset [n]$ be a set of fixed locations in $x(t)$ with $|L| = cn$ where $c \in (0,1]$. Further, let $X_t$ be the number of ones in $x(t)$ from these locations $L$ and let $0<\lambda<1$ be a constant such that $\lambda c n \geq 2$. Let $T:=\inf\{t \mid cn(1-\lambda)/2 \leq X_t \leq cn(1+\lambda)/2\}$. Then $\expect{T}=O(n)$. This means also that $\expect{T^*}=O(n)$ for $T^*:=\inf\{t \mid X_t \leq cn(1+\lambda)/2\}$.
\end{lemma}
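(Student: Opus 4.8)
The plan is to show that, from any initial string, the process $X_t$ is driven toward its equilibrium value $cn/2$ quickly enough to enter the interval $I:=[cn(1-\lambda)/2,cn(1+\lambda)/2]$ within $O(n)$ expected steps. Since $X_T\le cn(1+\lambda)/2$ we have $T^*\le T$, so it suffices to prove $\E{T}=O(n)$. First I would compute the one-step drift: conditioned on $x(t)$, standard bit mutation flips each of the $cn$ locations of $L$ independently with probability $1/n$, the number of $1\to0$ flips among them is $\Bin(X_t,1/n)$ and the number of $0\to1$ flips is $\Bin(cn-X_t,1/n)$, so
\[
\E{X_{t+1}-X_t\mid x(t)}=\frac{cn-X_t}{n}-\frac{X_t}{n}=c-\frac{2X_t}{n}.
\]
Thus whenever $X_t>cn/2$ the downward drift equals $2X_t/n-c>0$ and grows linearly in $X_t-cn/2$; by the symmetry of standard bit mutation under exchanging $0$ and $1$, the number $cn-X_t$ of zeros in $L$ obeys the same recurrence, so the ``too many zeros'' situation is handled identically.

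The core step bounds, for an arbitrary start with $X_0>cn(1+\lambda)/2$, the expected value of $\tau:=\inf\{t:X_t\le cn(1+\lambda)/2\}$. I would run a variable-drift argument with rate function $h(x):=2x/n-c$, which lower-bounds the drift on $[x_{\min},cn]$ and is monotone increasing with $h(x_{\min})\ge c\lambda>0$, where $x_{\min}:=\lfloor cn(1+\lambda)/2\rfloor+1$ (so that $\tau=\inf\{t:X_t<x_{\min}\}$). Concretely this is an application of the additive drift theorem (Theorem~\ref{thm:additive-drift}) to the potential $g(x):=x_{\min}/h(x_{\min})+\int_{x_{\min}}^{x}h(z)^{-1}\,dz$ for $x\ge x_{\min}$ and $g(x):=0$ otherwise: using that $h$ is increasing and $X_{t+1}\ge0$, a short estimate gives $g(X_t)-g(X_{t+1})\ge(X_t-X_{t+1})/h(X_t)$ whenever $X_t\ge x_{\min}$, and taking expectations yields $\E{g(X_t)-g(X_{t+1})\mid x(t)}\ge1$. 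Since $g(X_0)=x_{\min}/h(x_{\min})+\tfrac n2\ln\!\bigl((2X_0-cn)/(2x_{\min}-cn)\bigr)$ when $X_0\ge x_{\min}$ (and $\tau=0$ otherwise), and since $X_0\le cn$, $2x_{\min}-cn\ge\lambda cn$, and $1/(c\lambda)\le n/2$ (the last two using $\lambda cn\ge2$), both terms are $O(n)$ with the implied constant depending only on the constant $\lambda$; hence $\E{\tau\mid X_0}=O(n)$ uniformly in $c$ and $X_0$. In particular $\E{T^*}=O(n)$.

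To upgrade this to $\E{T}=O(n)$ one must rule out jumping over $I$ (whose width $\lambda cn\ge2$ also guarantees that $I$ contains an integer). When $X_t$ first falls to $\le cn(1+\lambda)/2$ it overshoots below $cn(1-\lambda)/2$ only if at that step at least $3$ of the $cn$ locations of $L$ flipped, which has probability at most $\binom{cn}{3}n^{-3}\le c^3/6$; a short calculation — distinguishing the regime where $|L|$ exceeds a suitable constant from the degenerate regime $|L|=O(1)$, in which $X_t$ is a Markov chain on $O(1)$ states and $\E{T}=O(n)$ is immediate — shows that a single descent ends inside $I$ rather than below it with probability bounded away from $0$. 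Applying the core step alternately from above and below, each round then costs $O(n)$ expected steps and lands in $I$ with a positive constant probability independent of the history, so summing a geometric series gives $\E{T}=O(n)$; the cases $X_0\in I$ and $X_0<cn(1-\lambda)/2$ are trivial or symmetric.

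The main obstacle is the core step. A naive potential such as $\max\{0,X_t-cn(1+\lambda)/2\}$ fails: near the boundary the truncation can make its drift non-positive — when $\lambda$ is small relative to $c$, the $\Theta(\sqrt c)$ noise of standard bit mutation dominates the $\Theta(c\lambda)$ downward drift — and since $c$ may be as small as $\Theta(1/n)$ one cannot treat the drift as a positive constant either. The logarithmic potential above handles both issues at once, delivering a bound that is $O(n)$ uniformly over all admissible $c$ and the constant $\lambda$, which also explains why the hypothesis needs $\lambda cn\ge2$ rather than merely $L\ne\emptyset$.
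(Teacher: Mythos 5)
Your overall route is the same as the paper's: show a drift of the right sign toward the interval from either side, bound the hitting time of the nearer boundary by $O(n)$ via additive drift, and separately rule out jumping over the interval (using $\lambda cn\ge 2$). The differences are in execution. For the core step the paper simply notes that for $X_t>cn(1+\lambda)/2$ the drift satisfies $\E{X_t-X_{t+1}\mid X_t}>c(1+\lambda)/2-c(1-\lambda)/2=c\lambda$ and applies additive drift to the non-negative process $X_t$ itself (stopped at the threshold), getting $\E{T^*}\le cn/(c\lambda)=n/\lambda$; your worry that the drift "cannot be treated as a positive constant" when $c=\Theta(1/n)$ is unfounded, since additive drift only needs $X_0/\delta\le cn/(c\lambda)=n/\lambda=O(n)$, which is uniform in $c$. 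Your logarithmic/variable-drift potential is correct (I checked the inequality $g(X_t)-g(X_{t+1})\ge (X_t-X_{t+1})/h(X_t)$ in both the case $X_{t+1}\ge x_{\min}$ and the truncated case, using $X_{t+1}\ge 0$) and gives the same $O(n)$ bound, but it is machinery you do not need. Your observation that the truncated potential $\max\{0,X_t-cn(1+\lambda)/2\}$ can have non-positive drift near the boundary is a fair technical point, but the standard fix is to use the untruncated non-negative $X_t$, not a new potential.

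The one place where your writeup does not hold up as stated is the overshoot estimate. The bound you quote — overshooting at the crossing step requires at least $3$ flips in $L$, probability at most $\binom{cn}{3}n^{-3}\le c^3/6$ — is an \emph{unconditional} per-step bound, and it does not by itself yield "a single descent ends inside $I$ with probability $\Omega(1)$": summing it over the $O(n)$ expected steps of a descent gives the vacuous $O(c^3 n)$, and conditioning on the crossing happening at a given step changes the relevant probability, since the crossing event itself can have probability $\Theta(c)$ or smaller. The estimate that actually works is a conditional comparison: for every $x>\theta_+:=cn(1+\lambda)/2$, with $k:=\lceil x-\theta_+\rceil$ and $j$ the least integer exceeding $x-\theta_-$ (so $j\ge k+2$ by $\lambda cn\ge 2$), one has $\Pr(\text{land in }I\mid X_t=x)\ge\Pr(D=k,B=0)\ge\binom{x}{k}n^{-k}(1-1/n)^{cn}$ while $\Pr(\text{overshoot}\mid X_t=x)\le\sum_{i\ge j}\binom{x}{i}n^{-i}\le e\,c^{\,j-k}\tfrac{k!}{j!}\binom{x}{k}n^{-k}$, so the ratio is at most $4e/((k+1)(k+2))=O(1)$; summing over the step at which the crossing occurs then gives the constant success probability per descent. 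So your claim is true and your geometric-rounds architecture is sound, but the decisive calculation is this conditional ratio, not the $c^3/6$ bound. The paper avoids the issue entirely by bounding the probability that the interval is ever skipped during a descent: a skip forces $\lfloor\lambda cn\rfloor\ge 2$ simultaneous flips in $L$, whose per-step probability is $\binom{cn}{\lfloor\lambda cn\rfloor}n^{-\lfloor\lambda cn\rfloor}\le 2^{cn}n^{-\lfloor\lambda cn\rfloor}=O(n^{-2})$, and multiplying by $\E{T^*}=O(n)$ gives failure probability $O(1/n)$ per descent — a cleaner and quantitatively stronger route you may prefer.
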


\begin{proof}
Suppose that $X_0 >cn(1+\lambda)/2$. Let $T_{\text{down}}^*:=\inf\{t \mid X_t \leq cn(1+\lambda)/2\}$. We have that
$$\expect{X_t - X_{t+1} \text{ } \Big| \text{ } X_t >\frac{cn(1+\lambda)}{2}} > \frac{c(1+\lambda)}{2}-\frac{c(1-\lambda)}{2} = c \lambda$$
since $X_t >cn(1+\lambda)/2$ means that there are at more than $cn(1+\lambda)/2$ ones at locations from $L$ and hence, in expectation, at least $c(1+\lambda)/2$ ones, but at most $c-c(1+\lambda)/2=c(1-\lambda)/2$ zeros are flipped. Due to $X_0 \leq cn$ we obtain with the additive drift theorem (Theorem~\ref{thm:additive-drift}) 
$$\expect{T_{\text{down}}^* \mid X_0 > \frac{cn(1+\lambda)}{2}} \leq \frac{cn}{c \lambda} = \frac{n}{\lambda}.$$
Similarly, if $X_0 <cn(1-\lambda)/2$ define $Y_t := cn-X_t$ counting the number of zeros at locations from $L$ and obtain $Y_0 > cn(1+\lambda)/2$ as well as
$$\expect{Y_t - Y_{t+1} \text{ } \Big| \text{ } Y_t >\frac{cn(1+\lambda)}{2}} > c \lambda$$
since $Y_t >cn(1+\lambda)/2$ means that there are more than $cn(1+\lambda)/2$ zeros, but fewer than $cn(1-\lambda)/2$ ones in $L$. Hence, for $T_{\text{up}}^*:=\inf\{t \geq 0 \mid X_t \geq cn(1-\lambda)/2\} = \inf\{t \geq 0 \mid Y_t \leq cn(1+\lambda)/2\}$ we obtain again with Theorem~\ref{thm:additive-drift}
$$\expect{T_{\text{up}}^* \mid X_0 < \frac{cn(1-\lambda)}{2}} = \expect{T_{\text{up}}^* \mid Y_0 > \frac{cn(1+\lambda)}{2}} \leq \frac{cn}{c \lambda} = \frac{n}{\lambda}.$$

Let $A$ be the event that $X_{T_\text{down}^*} < (cn(1-\lambda))/2$ when $X_0 > (cn(1+\lambda))/2$, and $X_{T_\text{up}^*} > (cn(1+\lambda))/2$ when $X_0 < (cn(1-\lambda))/2$. Denote the occurence of the event $A$ as a \emph{failure}. Let $T^*:=T_\text{down}^*$ if $X_0 > cn(1+\lambda)/2$ and $T^*:=T_\text{up}^*$ if $X_0 < cn(1-\lambda)/2$. In every case we have that $\expect{T^*} \leq n/\lambda$. If no failure occurs then $T=T^*$. We show that $\Pr(A) = o(1)$. For the event $A$ to occur, at least $\lfloor{c\lambda n}\rfloor$ distinct bits at positions in $L$ must be flipped during some iteration $t < T^*$, since the process $(X_t)_{t \geq 0}$ must skip the interval $[cn(1-\lambda)/2,\, cn(1+\lambda)/2]$ which has length $c\lambda n$. In any fixed iteration, the probability of  flipping at least $\lfloor{c\lambda n}\rfloor$ bits is at most 
$$\frac{\binom{cn}{\lfloor{c\lambda n}\rfloor}}{n^{\lfloor{c\lambda n}\rfloor}} \leq \frac{2^{cn}}{n^{\lfloor{c\lambda n}\rfloor}} = O\left(\frac{1}{n^2}\right)$$
where the last equality holds due to $c \lambda n \geq 2$. By a union bound, the probability that at least $c\lambda n$ bits are flipped in an iteration $t \leq T^*$ is at most $O(1/n^2) \expect{T^*} = O(1/n)=o(1)$ and hence, with probability $o(1)$ a failure occurs. If a failure occurs we repeat the above arguments (either for $Y_t$ if $X_0>cn(1+\lambda)/2$ or for $X_t$ if $X_0<cn(1-\lambda)/2$). The expected number of periods is $1+o(1)$ which proves the lemma.
\end{proof}

Throughout this paper, we also use a variant of the negative drift theorem that applies even when the drift bound $\varepsilon$ is smaller than a constant. In particular, this bound can also depend on the size $\ell$ of the drift interval. For more common versions, where the drift bound is at least a constant, see \citep{LenglerS18} and \citep{Oliveto2011}.

\begin{theorem}[Negative drift theorem, see Theorem~3 in \citep{WittDrift}]
\label{thm:negative-drift}
Let $(Y_t)_{t \in \mathbb{N}}$ be a stochastic process over $\mathbb{R}$, adapted to a filtration $\mathcal{F}_t$. Suppose there exists an interval $[a,b] \subset \mathbb{R}$ and, possibly depending on $\ell := b - a$, a drift bound $\varepsilon := \varepsilon(\ell)>0$, a typical forward jump factor $\kappa := \kappa(\ell)>0$, a scaling factor $r:= r(\ell)>0$ as well as a sequence of functions $\Delta_t:=\Delta_t(Y_{t+1}-Y_t)$ satisfying $\Delta_t \preceq Y_{t+1}-Y_t$ such that for all $t \geq 0$ the following three conditions hold:
\begin{itemize}
\item[(1)] $\expect{\Delta_t \cdot \mathbbm{1}\{\Delta_t \leq \kappa \varepsilon\} \mid \mathcal{F}_t; a < Y_t < b} \geq \varepsilon$,
\item[(ii)] $\Pr(\Delta_t \leq -ir \mid \mathcal{F}_t ; a < Y_t) \leq e^{-i}$ for all $i \in \mathbb{N}$,
\item[(iii)] $\chi \ell \geq 2 \ln(4/(\chi \varepsilon))$, where $\chi:=\min\{1/(2r),\varepsilon/(17r^2),1/(\kappa \varepsilon)\}$.
\end{itemize}
Then for $T^*:=\min\{t \geq 0 \mid Y_t \leq a\}$ it holds that $\Pr(T^* \leq e^{\chi \ell/4} \mid \mathcal{F}_0 ; Y_0 \geq b) = O(e^{-\chi \ell/4})$.
\end{theorem}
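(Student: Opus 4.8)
The plan is the classical Hajek-type drift argument: turn the additive lower bound on the drift of $Y_t$ into a multiplicative upper bound on the drift of an exponential potential, establish an (almost) supermartingale, and conclude with Markov's inequality. Fix $\chi$ as defined in condition (iii) and set $g(y):=e^{-\chi(\min\{y,b\}-a)}$. Then $g$ is non-increasing, equals $1$ at the absorbing value $a$, and equals its floor $e^{-\chi\ell}$ for every $y\geq b$; in particular $g(Y_0)=e^{-\chi\ell}$ when $Y_0\geq b$ and $g(Y_{T^*})=1$. I would bound $\Pr(T^*\leq\tau)$ for the horizon $\tau:=e^{\chi\ell/4}$ by tracking $g(Y_{\min\{t,T^*\}})$ and comparing its initial value with its value at absorption.

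The technical heart is a \emph{drift-to-potential} lemma: for $a<Y_t<b$ one shows $\E{g(Y_{t+1})\mid\mathcal{F}_t}\leq (1-c\chi\varepsilon)\,g(Y_t)+e^{-\chi\ell}$ for a constant $c>0$, the second summand accounting only for the rare event that a single jump overshoots $b$. The multiplicative part reduces to $\E{e^{-\chi\Delta_t}\mid\mathcal{F}_t; a<Y_t<b}\leq 1-c\chi\varepsilon$, which one proves by expanding $e^{-\chi z}\leq 1-\chi z+\chi^2 z^2$ on the range of typical jumps (legitimate since $\chi\kappa\varepsilon\leq 1$ and $\chi r\leq 1/2$ by (iii)), extracting the linear term $-\chi\,\E{\Delta_t\mathbbm{1}\{\Delta_t\leq\kappa\varepsilon\}}\leq-\chi\varepsilon$ from condition (1), bounding the quadratic remainder by $O(\chi^2 r^2)$ via the exponential tail (ii) (which controls the second and higher moments of the downward jumps), and summing the contribution of atypical large downward jumps $\Delta_t\leq-ir$ as a convergent geometric series since $\chi r\leq 1/2$. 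The three-way minimum defining $\chi$ is exactly calibrated: $1/(2r)$ makes the geometric series converge, $\varepsilon/(17r^2)$ makes the quadratic remainder dominated by the linear drift term $\chi\varepsilon$, and $1/(\kappa\varepsilon)$ keeps the truncated exponential within the regime of its linear approximation.

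For steps started with $Y_t\geq b$ the clipped potential already sits at its floor $e^{-\chi\ell}$, and it can increase only by a factor $1+O(\chi r)$, since any overshoot below $b$ has the geometric tail (ii); hence the per-step leakage is $O(e^{-\chi\ell})$ in all cases. Running the resulting inequality $\E{g(Y_{\min\{t+1,T^*\}})\mid\mathcal{F}_t}\leq g(Y_{\min\{t,T^*\}})+O(e^{-\chi\ell})$ for the $\tau=e^{\chi\ell/4}$ steps of the horizon yields $\E{g(Y_{\min\{\tau,T^*\}})}\leq e^{-\chi\ell}+O(\tau\, e^{-\chi\ell})=O(e^{-\chi\ell/4})$ (here condition (iii), i.e. $\chi\ell\geq 2\ln(4/(\chi\varepsilon))$, is used to absorb any logarithmic corrections hidden in the leakage), and since $g$ equals $1$ at the absorbing value $a$, Markov's inequality gives $\Pr(T^*\leq\tau\mid\mathcal{F}_0;Y_0\geq b)\leq\E{g(Y_{\min\{\tau,T^*\}})}=O(e^{-\chi\ell/4})$.

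I expect the main obstacle to be the drift-to-potential lemma with usable constants: only the \emph{truncated} first-moment bound (1) is available, so the linear term of the Taylor expansion must be produced from the event $\{\Delta_t\leq\kappa\varepsilon\}$ while the discarded upward mass is harmless and the heavy downward tail is reabsorbed through (ii), and the bookkeeping must be tight enough for the explicit constant $17$ in (iii) to suffice. The secondary, more routine, difficulty is the treatment of the boundary $b$: the potential genuinely grows there, so one only obtains a supermartingale up to an $O(e^{-\chi\ell})$ error per step, and one must verify that over the horizon $\tau$ this error does not accumulate beyond $O(e^{-\chi\ell/4})$.
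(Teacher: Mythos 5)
This theorem is not proved in the paper at all: it is imported verbatim as Theorem~3 of \citep{WittDrift}, so there is no in-paper argument to compare your proposal against. Your sketch follows the standard Hajek-type route — exponential potential $e^{-\chi(\min\{y,b\}-a)}$, a drift-to-potential (moment-generating-function) lemma, a supermartingale-up-to-$O(e^{-\chi\ell})$-leakage over the horizon $e^{\chi\ell/4}$, and Markov's inequality — which is exactly how the cited source establishes the result, and you assign the correct role to each of the three terms defining $\chi$. The one place where the sketch is genuinely loose is the quadratic remainder for the \emph{upward} jumps truncated at $\kappa\varepsilon$: condition (ii) controls only downward jumps, so the term $\chi^2\,\E{\Delta_t^2\,\mathbbm{1}\{0\le\Delta_t\le\kappa\varepsilon\}}$ is not covered by your ``$O(\chi^2 r^2)$ via (ii)'' bookkeeping; pointwise one only gets $\chi^2\Delta_t^2\le\chi\Delta_t\cdot\chi\kappa\varepsilon$, and since the theorem permits $\chi\kappa\varepsilon=1$ this threatens to cancel the entire positive drift extracted from condition~(1), so the cancellation between the truncated first moment and this remainder has to be carried out more carefully than your plan indicates — this is precisely the delicate step hiding behind the constant $17$, which you flag but do not resolve. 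Apart from that unverified bookkeeping, the architecture is sound.
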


As soon as the part of the bit string not providing a fitness signal contains approximately the same number of ones and zeros, the negative drift theorem shows that, with high probability, this ratio remains stable over a long time for a large parameter regime of $c$, as described in the following lemma.  

\begin{lemma}
\label{lem:negative-drift-main}
Consider a sequence of search points $x(0), x(1), \ldots $ from $\{0,1\}^n$ such that $x(t+1)$ results from $x(t)$ by applying a standard bit mutation. Let $L \subset [n]$ be a set of fixed locations in $x(t)$ with $|L| = cn$ where $c \geq \sqrt{\ln(n)}/(d \sqrt{n})$ for a sufficiently small constant $d>0$. Let $X_t$ be the number of ones in $x(t)$ from these locations $L$ and let $0<\lambda<1$ be a constant. Suppose that $cn(1-\lambda/2)/2 \leq X_0 \leq cn(1+\lambda/2)/2$ and let $T:=\inf\{t \geq 0 \mid X_t \notin [cn(1-\lambda)/2,cn(1+\lambda)/2]\}$. Then $T \leq n^3$ with probability at most $O(1/n^3)$. 
\end{lemma}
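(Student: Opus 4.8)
The plan is to apply the negative drift theorem (Theorem~\ref{thm:negative-drift}) once for each endpoint of the interval $I:=[cn(1-\lambda)/2,\,cn(1+\lambda)/2]$, exploiting that whenever $X_t\in I$ a standard bit mutation pushes $X_t$ towards $cn/2$, i.e.\ away from both endpoints, with drift of order $\Theta(c)$. For the upper endpoint I would track the number of zeros of $L$, $Y_t:=cn-X_t$, and invoke the theorem with $a:=cn(1-\lambda)/2$ and $b:=cn(1-\lambda/2)/2$, so that $\ell=b-a=cn\lambda/4$, the hypothesis $X_0\le cn(1+\lambda/2)/2$ gives $Y_0\ge b$, and $\{Y_t\le a\}=\{X_t\ge cn(1+\lambda)/2\}$. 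For the lower endpoint I would repeat the argument with $Y_t:=X_t$ and the same $a,b$. Since $T=\min\{T^*_{\uparrow},T^*_{\downarrow}\}$ for the two escape times, a union bound over the two failure probabilities the theorem provides finishes the proof. Moreover, as $\lambda$ is a constant, I may assume $c\le c_0$ for a small constant $c_0=c_0(\lambda)$: for $c=\Omega(1)$ the drift is $\Theta(1)$, $\ell=\Theta(n)$, and the theorem already gives stability for $e^{\Omega(n)}\gg n^3$ steps.

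It then remains to verify the three hypotheses for, say, the zeros-process $Y_t$; the other case is symmetric. Take $\Delta_t:=Y_{t+1}-Y_t$ and $\varepsilon:=c\lambda/4$. For condition~(1): when $a<Y_t<b$ one has $2Y_t<cn(1-\lambda/2)$, so $\E{\Delta_t\mid\mathcal{F}_t}=(cn-2Y_t)/n>c\lambda/2$; and since $\Delta_t$ is at most the number of $1$-bits of $L$ that flip, a binomial-tail estimate gives $\E{\Delta_t\mathbbm{1}\{\Delta_t>s\}\mid\mathcal{F}_t}\le\sum_{j>s}jc^j/j!\le 2c/s!$, which is at most $c\lambda/4$ once $s$ is a constant with $s!\ge 8/\lambda$. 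Choosing the truncation threshold $\kappa\varepsilon:=s$ (i.e.\ $\kappa:=4s/(c\lambda)$) yields $\E{\Delta_t\mathbbm{1}\{\Delta_t\le\kappa\varepsilon\}\mid\mathcal{F}_t}\ge c\lambda/2-c\lambda/4=\varepsilon$. For condition~(ii) with a suitable constant $r$ (e.g.\ $r=3$): $\Delta_t\le-ir$ forces at least $ir$ of the at most $cn\le n$ zero-bits of $L$ to flip, so $\Pr(\Delta_t\le-ir\mid\mathcal{F}_t)\le\binom{n}{ir}n^{-ir}\le 1/(ir)!\le e^{-i}$ for all $i\ge 1$. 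For condition~(iii): since $1/(2r)$ and $1/(\kappa\varepsilon)=1/s$ are constants while $\varepsilon/(17r^2)=c\lambda/(68r^2)=o(1)$ for $c\le c_0$, we get $\chi=c\lambda/(68r^2)$, hence $\chi\ell=c^2n\lambda^2/(272r^2)\ge\lambda^2\ln(n)/(272r^2d^2)$ using $c^2n\ge\ln(n)/d^2$; as $\chi\varepsilon\ge\lambda^2/(272r^2d^2n)$ gives $2\ln(4/(\chi\varepsilon))\le 2\ln n+O(1)$, condition~(iii) holds for every sufficiently small constant $d$.

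With (iii) in force, Theorem~\ref{thm:negative-drift} gives $\Pr(T^*_{\uparrow}\le e^{\chi\ell/4})=O(e^{-\chi\ell/4})$, and the same $d$, taken small enough, also makes $\chi\ell/4\ge 3\ln n$, so $e^{\chi\ell/4}\ge n^3$ and $O(e^{-\chi\ell/4})=O(n^{-3})$; thus $\Pr(T^*_{\uparrow}\le n^3)=O(n^{-3})$, symmetrically $\Pr(T^*_{\downarrow}\le n^3)=O(n^{-3})$, and therefore $\Pr(T\le n^3)=O(n^{-3})$ by the union bound. I expect condition~(iii) to be the only real obstacle: it forces one to carry the constants $r$, $s$, $\kappa$, the factor $17$ hidden in $\chi$, and the constant $d$ from the admissible range of $c$ through a single inequality of the shape $\chi\ell\ge\max\{2\ln(4/(\chi\varepsilon)),\,12\ln n\}$ and to confirm it is satisfiable; the drift computation for~(1) and the tail bounds for~(1) and~(ii) are routine, and the regime $c=\Omega(1)$ is even easier since there $\chi\ell=\Omega(n)$ dwarfs $\ln n$.
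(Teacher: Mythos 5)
Your proposal is correct and follows essentially the same route as the paper's proof: two applications of Theorem~\ref{thm:negative-drift} on the interval of length $\ell=cn\lambda/4$ (one per endpoint, via a reflected process for the upper one), with $r=3$, a drift bound of order $c\lambda$, verification that $\chi\ell=\Omega(c^2n)=\Omega(\log n/d^2)$ makes condition~(iii) and the bound $e^{-\chi\ell/4}=O(n^{-3})$ hold for small $d$, and a final union bound. The only differences are cosmetic (you use $Y_t=cn-X_t$ where the paper uses $Y_t=-X_t$, and your direct computation of the untruncated drift is slightly cleaner than the paper's case distinction over the events $A_i$, yielding $\varepsilon=c\lambda/4$ instead of $c\lambda/(4e)$).
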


\begin{proof}
Define $a:=-cn(1+\lambda)/2$, $b:=-cn(1+\lambda/2)/2$ and $Y_t:= -X_t$. Note $b>a$ and that $b-a=cn \lambda/4$. Denote by $A_i$ the event that $|Y_{t+1}-Y_t| = i$. Then $A_i$ can only happen if at least $i$ bits among the $L$ locations are flipped and $A_1$ coincides with the event to flip exactly one bit there. The latter happens with probability $c \cdot (1-1/n)^{cn-1}$. Hence, we obtain for $\mathcal{F}_t := Y_0, \ldots , Y_t$
\begin{align*}
& \Pr(A_1) \cdot \expect{Y_{t+1} - Y_t \text{ }\Big|\text{ } \mathcal{F}_t; A_1; a < Y_t < b} = c\left(1-\frac{1}{n}\right)^{cn-1} \cdot \expect{X_t - X_{t+1} \text{ }\Big|\text{  } \mathcal{F}_t; A_1; -b < X_t < -a}. \\
\intertext{Since under the condition $-b<X_t<-a$ the number of ones at the $L$ locations is at least $cn(1+\lambda/2)/2$, and the number of zeros is at most $cn-cn(1+\lambda/2)/2 = cn(1-\lambda/2)/2$, we obtain}
&\geq c\left(1-\frac{1}{n}\right)^{cn-1} \cdot \left(\frac{cn(1+\lambda/2)/2}{cn} - \frac{cn(1-\lambda/2)/2}{cn} \right) \geq c\left(1-\frac{1}{n}\right)^{cn-1} \cdot \frac{\lambda}{2} \geq \frac{c}{e} \cdot \frac{\lambda}{2} = \frac{c \lambda}{2e}.
\end{align*}
Further, we obtain for all $i \geq 1$ due to $\binom{cn}{i} \leq (cn)^i/i! \leq cn^i/i!$ and $|Y_{t+1} - Y_t| = |X_{t+1} - X_t| \leq i$ if $A_i$ occurs
\begin{align*}
\Pr(A_i) &\cdot \expect{|Y_{t+1} - Y_t| \text{ }\Big|\text{ } \mathcal{F}_t; A_i; a < Y_t < b} \leq i \cdot \binom{cn}{i} \frac{1}{n^i} \leq \frac{ci}{i!} = \frac{c}{(i-1)!} \leq \frac{2c}{2^{i-1}} = \frac{c}{2^{i-2}}
\end{align*}
and therefore, for $i \geq \lceil{\log(4e/\lambda)}\rceil + 3=:\beta \in \Theta(1)$ 
\begin{align*}
\sum_{i=\beta}^{cn}\Pr(A_i) &\cdot \expect{|Y_{t+1} - Y_t| \text{ }\Big|\text{ } \mathcal{F}_t; A_i; a < Y_t < b} \leq \sum_{i=\beta}^{cn}\frac{c}{2^{i-2}} \leq \frac{1}{2^{\beta-2}}\sum_{i=0}^{\infty}\frac{c}{2^i} =  \frac{c}{2^{\beta-3}} \leq \frac{c \lambda}{4e}.
\end{align*}
Furthermore, if $a < Y_t < b$, there are more ones than zeros at locations $L$ and hence, for every $0 \leq i \leq cn$, the probability of flipping $i$ zeros more than ones is at least the probability of flipping $i$ ones more than zeros. Therefore, for every $0 \leq i \leq cn$,
\begin{align*}
& \expect{(Y_{t+1} - Y_t) \mathbbm{1}\{|Y_{t+1} - Y_t| = i\} \text{ }\Big|\text{  } \mathcal{F}_t, a < Y_t < b} \\
&= \expect{(X_t - X_{t+1}) \mathbbm{1}\{|X_t - X_{t+1}| = i\} \text{ }\Big|\text{  } \mathcal{F}_t, -b < X_t < -a} \geq 0.
\end{align*}
Hence, we obtain with the law of total probability by applying the latter inequality for $2 \leq i \leq \beta-1$ and the first for $i=1$
\begin{align*}
& \expect{(Y_{t+1} - Y_t) \mathbbm{1}\{Y_{t+1} - Y_t \leq \beta-1\} \text{ }\Big|\text{  } \mathcal{F}_t; a < Y_t < b} \\
&= \sum_{i=1}^\infty \Pr(A_i) \cdot \expect{(Y_{t+1} - Y_t) \mathbbm{1}\{Y_{t+1} - Y_t \leq \beta-1\} \text{ }\Big|\text{  } A_i ; \mathcal{F}_t; a < Y_t < b}\\
&\geq \sum_{i=1}^{\beta-1} \Pr(A_i) \cdot \expect{(Y_{t+1} - Y_t) \text{ }\Big|\text{  } A_i ; \mathcal{F}_t; a < Y_t < b} -  \sum_{i=\beta}^\infty \Pr(A_i) \cdot \expect{|Y_{t+1} - Y_t| \text{ }\Big|\text{  } A_i ; \mathcal{F}_t; a < Y_t < b}\\
&\geq \frac{c \lambda}{2e} - \sum_{i=\beta}^{\infty} \Pr(A_i) \cdot \expect{|Y_{t+1} - Y_t| \text{ }\Big|\text{  } A_i ; \mathcal{F}_t; a < Y_t < b} \geq \frac{c \lambda}{2e} - \frac{c \lambda}{4e} = \frac{c \lambda}{4e} =: \varepsilon(\ell)=:\varepsilon
\end{align*}
and hence, property~(i) of Theorem~\ref{thm:negative-drift} is satisfied for $\kappa:=(\beta-1)/\varepsilon$. Note also $|X_t-X_{t+1}| \geq i$ requires to flip $i$ bits among locations in $L$ and hence
\begin{align*}
\Pr(Y_{t+1}-Y_t &\leq -3i \mid \mathcal{F}_t; a < Y_t) = \Pr(X_t-X_{t+1} \leq -3i \mid \mathcal{F}_t; a < Y_t) \\
&\leq \binom{cn}{3i} \frac{1}{n^{3i}} \leq \frac{1}{(3i)!} \leq \frac{1}{2^{3i-1}} \leq e^{-i}
\end{align*}
which implies property (ii) in Theorem~\ref{thm:negative-drift} for $r=3$. Further, 
\[
\chi := \min\left\{\frac{1}{2r},\frac{\varepsilon}{17r^2},\frac{1}{\kappa \varepsilon}\right\} = \min\left\{\frac{1}{6},\frac{c \lambda}{612e},\frac{1}{\beta-1}\right\} \geq \frac{c \lambda}{612e}
\]
and therefore, for $\ell=b-a=cn\lambda/4$ we obtain 
$$\chi \cdot \ell \geq \frac{c^2n \lambda^2}{2448e} \geq \frac{\ln(n) \lambda^2}{2448ed^2}$$
and $1/(\chi \varepsilon) = 4e/(c \lambda \chi) \leq 2448e^2/(c^2 \lambda^2)$ which implies 
$$2\ln(4/(\chi \varepsilon)) \leq 2\ln(9792e^2/(c^2 \lambda^2)) \leq 2\ln(9792e^2nd^2/(\ln(n) \lambda^2)) $$ and hence, we can assume that $\chi \ell \geq 2\ln(4/(\chi \varepsilon))$ since $d$ is sufficiently small. This implies also property~(iii) in Theorem~\ref{thm:negative-drift}. Hence, Theorem~\ref{thm:negative-drift} is applicable and we obtain for $T^*:= \min\{t \geq 0 \mid X_t \geq -a\} = \min\{t \geq 0 \mid Y_t \leq a\}$, $d$ sufficiently small and $n$ sufficiently large
\begin{align*}
&\Pr(T^* \leq n^3 \mid \mathcal{F}_0; Y_0 \geq b) \leq \Pr(T^* \leq e^{\chi \ell/4} \mid \mathcal{F}_0; Y_0 \geq b) = O(e^{-\chi \ell/4}) = O(1/n^3).
\end{align*}
By swapping the roles of ones and zeros we obtain by a completely symmetric argument on $\tilde{a}=cn(1-\lambda)/2$, $\tilde{b}=cn(1-\lambda/2)/2$ and the process $(X_t)_{t \geq 0}$ that 
$$\expect{(X_{t+1} - X_t) \mathbbm{1}\{X_{t+1} - X_t \leq \beta-1\} \text{ }\Big|\text{  } \mathcal{F}_t; \tilde{a} < X_t < \tilde{b}} \geq c \lambda/(4 e)$$
and the remaining properties (ii) to (iii) of Theorem~\ref{thm:negative-drift} are fulfilled for $r$, $\kappa$ and $\chi$ defined as above. This implies for $T^+:=\min\{t \geq 0 \mid X_t \leq \tilde{a}\}$
\begin{align*}
&\Pr(T^+ \leq n^3 \mid \mathcal{F}_0; X_0 \geq \tilde{b}) \leq \Pr(T^+ \leq e^{\chi \ell/4} \mid \mathcal{F}_0; X_0 \geq \tilde{b}) = O(e^{-\chi \ell/4}) = O(1/n^3).
\end{align*}
With a union bound on both intervals defined by $a,b$ and $\tilde{a},\tilde{b}$ respectively, the lemma is proven.
\end{proof} 

\section{A Many-Objective Royal-Road Function For One-Point Crossover}
In this section, we define the many-objective $\RRRfull$ function for one-point crossover, denoted as $m$-\RRRMO and give some elementary properties. To this end, let $m \in \mathbb{N}$ be an even number and $n$ be divisible by $5m/2$. For a bit string $x$, let $x:=(x^1, \ldots , x^{m/2})$, where each substring $x^j$ has length $2n/m$. 

\begin{definition}
\label{def:RRRMO-one-point}
Regarding the substring $x^j$ for $j \in [m/2]$, let 
\begin{itemize}
\item $B:=\{y \in \{0,1\}^{2n/m} \mid \ones{y} = 6n/(5m), \LZ(y)+\TZ(y)=4n/(5m)\}$, and 
\item $A:=\{y \in \{0,1\}^{2n/m} \mid \ones{y} = 8n/(5m), \LZ(y)+\TZ(y)=2n/(5m)\}$.
\end{itemize}
Regarding the whole bit string $x$ let
\begin{itemize}
\item $L:=\{x \in \{0,1\}^n \mid 0 \leq \ones{x^j} \leq 6n/(5m) \text{ for all } j \in [m/2], \ones{x^i} < 6n/(5m) \text{ for an } i \in [m/2]\},$
\item $M:=\{x \in \{0,1\}^n \mid \ones{x^j} = 6n/(5m) \text{ for all } j \in [m/2] \text{ and } x^i \notin B \text{ for an } i \in [m/2]\},$
\item $N:=\{x \in \{0,1\}^n  \mid x^j \in A \cup B \text{ for all } j \in [m/2]\}$.
\end{itemize}

Then the function class $m$-$\text{\RRRMO}: \{0,1\}^n \to \mathbb{N}_0^m,$ is defined as 
\[
m\text{-\RRRMO}(x) = (f_1(x), f_2(x), \ldots ,f_m(x))
\]
with 
\[
f_k(x) = g_k(x):= \begin{cases}
	\ones{x^{1+(k-1)/2}} \text{ if $k$ is odd,} \\
	\ones{x^{1+(k-2)/2}} \text{ if $k$ is even,}
\end{cases}
\]
if $x \in L$,
\[
f_k(x) = h_k(x) := g_k(x) + \begin{cases}
	\LZ(x^{1+(k-1)/2}) \text{ if $k$ is odd, } \\
	\TZ(x^{1+(k-2)/2}) \text{ if $k$ is even, }
\end{cases}
\]
if $x \in M$, 
\[
f_k(x) = 4n\vert{K(x)}\vert/(5m)+ h_k(x)
\]
if $x \in N$ where $K(x):=\{j \in [m/2] \mid x^j \in A\}$, and $f_k(x)=0$ otherwise. 
\end{definition}

In the $m$-objective $\RRRfull$ function, the bit string is divided into $m/2$ blocks, each of length $2n/m$ and the maximum in each objective is $2n/5+2n/m$. We briefly describe the typical behaviour of EMOAs on this function for $m=o(\sqrt{n})$ as it is also shown in Figure~\ref{fig:RRRMO-Prozess}.

\begin{figure}[t]
    \label{fig:RRRMO}
    \centering
    \includegraphics[scale=0.79]{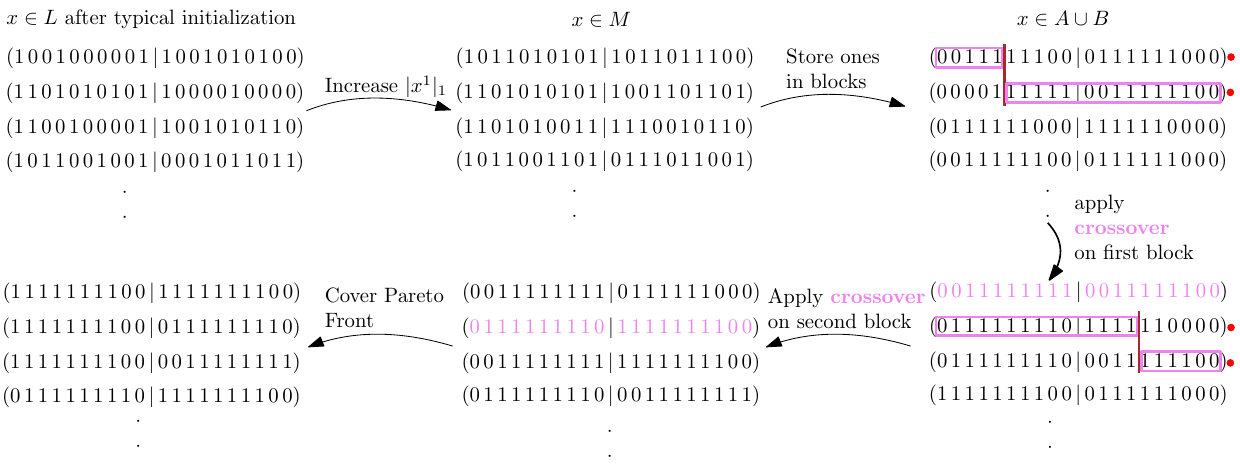}
    \caption{Once all ones are accumulated in cumulative blocks, one-point crossover can extend this block size by $2n/(5m)$. The two red dots represent the selected parents, while the brown line marks a potential cutting point. The segments of the bit strings inherited by the offspring are highlighted in purple. Firstly, crossover is applied on the first block and then, crossover is applied to the second one, using two parents that are already optimized in the first block. Finally, the entire Pareto front is completely covered.}
    \label{fig:RRRMO-Prozess}
\end{figure}

Algorithms that initialize their population uniformly at random typically start with search points $x$ where $0 < \ones{x^j} \leq 3/5 \cdot (2n/m) = 6n/(5m)$ for all $j \in [m/2]$, meaning that $x \in L \cup M \cup N$. For these points, we give a fitness signal that encourages increasing the number of ones in each $x^j$ to $6n/(5m)$, leading to search points $x \in M \cup N$. Next, we aim to collect these ones within a cumulative block by increasing the sum of leading and trailing zeros in each block $j$, thereby achieving $x^j \in B$ for all $j \in [m/2]$, or equivalently, $x \in N$. Finally, when $x \in N$, a strong fitness signal is assigned \emph{equally} to each objective based on $\vert{K(x)}\vert$, which is the number of blocks $j \in [m/2]$ where $x^j \in A$. We will see that the cardinality of $K(x)$ can be increased easily using recombination, in particular $1$-point crossover, by recombining two suitable individuals with $x^j \in B$ in order to create an individual $y$ with $y^j \in A$. The Pareto set is reached when $K(x)=[m/2]$ and can then easily be covered through mutation. EMOAs show a very similar behavior on $2$-\RRRMO as on the bi-objective version from~\citep{Dang2024}, but there are some differences in its structure: Instead of scaling the number of ones by a factor of $n$, we consider the leading ones and trailing zeros in each block only if the number of ones is $6n/(5m)$ or $8n/(5m)$ in each block. The reason for our approach is that the maximum possible value in each objective is $\Theta(n)$ instead of $\Theta(n^2)$ as it is the case in~\citep{Dang2024}. This implies, according to Lemma~\ref{lem:Reference-Points}, that for our function \nsgaIII requires a significantly lower number of reference points to protect good solutions.

The next two lemmas summarize important properties of $m$-\RRRMO. In the first we discuss dominance relations of search points and in the second we estimate the maximum number of mutually incomparable solutions.

\begin{lemma}
\label{lem:RRMO-properties}
The following properties are satisfied.
\begin{enumerate}
	\item[(1)] Let $x,y$ with $x \in L$ and $y \in M \cup N$. Then $y$ dominates $x$.
	\item[(2)] Let $\mathcal{Q}:=\{x \in \{0,1\}^n \mid x^j \in B \text{ for all } j \in [m/2]\}$. Then for all $x \in M$ there is $y \in \mathcal{Q}$ dominating $x$.
	\item[(3)] Let $x,y \in N$ with $\vert{K(x)}\vert<\vert{K(y)}\vert$. Then $y$ dominates $x$. 
	\item[(4)] Let $x,y \in N$ with $x \neq y$ and  $\vert{K(x)}\vert=\vert{K(y)}\vert$. Then $x$ and $y$ are incomparable.
	\item[(5)] The Pareto set $\mathcal{P}$ of $m$-\RRRMO is 
	\[
	\mathcal{P}:=\{x \in \{0,1\}^n \mid x^j \in A \text{ for all } j \in [m/2]\}.
	\]
\end{enumerate}
\end{lemma}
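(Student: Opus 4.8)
The plan is to reduce everything to a block-wise analysis, exploiting that objectives $2j-1$ and $2j$ see only the block $x^j$: on $L\cup M\cup N$ one has $f_{2j-1}(x)=\ones{x^j}+(\text{the }\LZ\text{-term})$ and $f_{2j}(x)=\ones{x^j}+(\text{the }\TZ\text{-term})$, plus the common summand $\tfrac{4n}{5m}|K(x)|$ on $N$. First I would record three structural facts. (a) A block $z$ of length $2n/m$ with $\ones{z}>0$ has disjoint leading and trailing zero-runs, so $\LZ(z)+\TZ(z)\le 2n/m-\ones{z}$; when $\ones{z}$ has the value prescribed for $A$ or $B$, equality holds iff all ones form one contiguous run, so a block in $A$ (resp.\ $B$) is uniquely determined by its $\LZ$-value. (b) Hence for $z\in N$ the number $h_{2j-1}(z)=\ones{z^j}+\LZ(z^j)$ lies in $[\,6n/(5m),\,2n/m\,]$ whether $z^j\in A$ or $z^j\in B$ (and symmetrically for $h_{2j}$): passing from a $B$-block to an $A$-block gains $2n/(5m)$ in the ones-term but may lose up to $2n/(5m)$ in the $\LZ$/$\TZ$-term, so a single objective varies by at most $4n/(5m)$ over all of $N$ --- exactly the per-objective weight of one unit of $|K|$. (c) The two objectives of a fixed block sum to $2\ones{z^j}+\LZ(z^j)+\TZ(z^j)$, which is $18n/(5m)$ for an $A$-block and $16n/(5m)$ for a $B$-block.

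For part (1): any $y\in M\cup N$ satisfies $f_k(y)\ge\ones{y^j}\ge 6n/(5m)$ in every objective, while $x\in L$ has $f_k(x)=\ones{x^j}\le 6n/(5m)$, and the block $i$ with $\ones{x^i}<6n/(5m)$ makes the inequality in objective $2i-1$ strict. For part (2): given $x\in M$ I build $y$ block by block, keeping $y^j=x^j$ where $x^j\in B$ and, where $x^j\notin B$, replacing it by the unique $B$-block with $\LZ(y^j)=\LZ(x^j)$; then $\TZ(y^j)=4n/(5m)-\LZ(x^j)>\TZ(x^j)$ by the slack in fact~(a). Thus $y\in\mathcal Q\subseteq N$ with $K(y)=\emptyset$, $f_k(y)\ge f_k(x)$ for all $k$, and $f_{2i}(y)>f_{2i}(x)$ for the bad block $i$, so $y$ strictly dominates $x$.

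For part (3): write $f_k(y)-f_k(x)=\tfrac{4n}{5m}\bigl(|K(y)|-|K(x)|\bigr)+\bigl(h_k(y)-h_k(x)\bigr)$; the first summand is $\ge 4n/(5m)$ and the second is $\ge -4n/(5m)$ by fact~(b), so $f_k(y)\ge f_k(x)$ everywhere. For strictness, pick $j_0\in K(y)\setminus K(x)$ (nonempty as $|K(y)|>|K(x)|$); there $x^{j_0}\in B$ and $y^{j_0}\in A$, so $h_{2j_0-1}(y)-h_{2j_0-1}(x)\ge 8n/(5m)-2n/m=-2n/(5m)$ and hence $f_{2j_0-1}(y)-f_{2j_0-1}(x)\ge 2n/(5m)>0$. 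For part (4): equal $|K|$ cancels the common summand, so I compare $h$-values on a block $j$ with $x^j\ne y^j$. If $\ones{x^j}=\ones{y^j}$, then by fact~(a) $\LZ(x^j)\ne\LZ(y^j)$ while $\LZ+\TZ$ is the same constant for both, so objectives $2j-1$ and $2j$ rank $x$ and $y$ oppositely --- done. Otherwise every differing block has one string in $A$ and the other in $B$, and equal $|K|$ forces both a block $j$ with $x^j\in A,\ y^j\in B$ and a block $j'$ with $x^{j'}\in B,\ y^{j'}\in A$; by fact~(c), block $j$ yields an objective on which $x$ strictly beats $y$ and block $j'$ one on which $y$ strictly beats $x$.

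For part (5): I show that every point outside $\mathcal P$ is strictly dominated by every point of $\mathcal P$ (which is nonempty by the divisibility assumption). Points with $f=0$ are dominated trivially; $x\in L$ by part~(1); $x\in N\setminus\mathcal P$ directly by part~(3), since $|K(x)|<m/2=|K(z)|$ for $z\in\mathcal P$; and $x\in M$ by part~(2) followed by part~(3) and transitivity of strict domination. Combined with part~(4), which gives mutual incomparability inside $\mathcal P$, this shows simultaneously that $\mathcal P$ consists of Pareto-optimal points, that no point outside $\mathcal P$ is Pareto-optimal, and that $\mathcal P$ is mutually incomparable; hence $\mathcal P$ is the Pareto set. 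The step I expect to require the most care is the constant-chasing in parts~(3)--(5): one must check that the $\le 4n/(5m)$ slack of any single objective over $N$ never exceeds the $4n/(5m)$ earned per unit of $|K|$, and that a block ``downgraded'' from $A$ in $x$ to $B$ in $y$ never spoils a dominating comparison --- the parameters $2n/m$, $6n/(5m)$, $8n/(5m)$, $4n/(5m)$ are chosen precisely to make these margins tight; the remaining arithmetic is routine.
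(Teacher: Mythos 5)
Your proposal is correct and follows essentially the same route as the paper: part (1) via comparing the number of ones, part (2) via consolidating the ones into a contiguous run, part (3) via the observation that the $4n/(5m)$ gain per unit of $|K|$ outweighs the at most $4n/(5m)$ variation of $h_k$ over $N$, and part (4) via the block-sum comparison $18n/(5m)$ vs.\ $16n/(5m)$ together with the uniqueness of an $A$- or $B$-block given its $\LZ$-value. The only (cosmetic) deviations are that you build the dominating point in (2) in one shot rather than by the paper's iterated single-bit swaps, and your case analysis in (5) is spelled out more explicitly than the paper's.
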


\begin{proof}
(1): Note that $f_k(y) \geq 6n/(5m)$ for every $k \in [m]$ since each block $j$ contains at least $6n/(5m)$ ones. On the other hand, $f_k(x) \leq 6n/(5m)$ and there is a block $i \in [m/2]$ with $\ones{x^i}< 6n/(5m)$, i.e. $f_{2i}(x)<6n/(5m)$. Hence, $y$ dominates $x$.

(2): Let $i \in [m/2]$ such that $\LZ(x^i)+\TZ(x^i)$ is not maximum (i.e. $\LZ(x^i)+\TZ(x^i) < 4n/(5m)$). Then there is a zero in $x^i$ not contributing to $\LZ(x^i)+\TZ(x^i)$ (i.e. between the leftmost and rightmost one in $x^i$). Hence, exchanging that zero with the leftmost one creates a search point $w$ with $w^j=x^j$ for $j \in [m/2] \setminus \{i\}$, $\ones{w^i} = \ones{x^i}$, $\LZ(w^i) = \LZ(x^i)+1$ and $\TZ(w^i) = \TZ(x^i)$. Hence, $w$ dominates $x$. Repeating this operation in $w^i$ until there is no such zero left gives the desired search point $y$ by the transitivity of dominance.

(3): Note that $h_k(x) \leq 2n/m$ for all $k \in [m]$ since in every block the sum of the number of ones and leading (trailing) zeros does not exceed $2n/m$. Since each block contains at least $6n/(5m)$ ones, $h_k(x) \geq 6n/(5m)$. Putting this together gives 
\begin{align*}
f_k(x) &= \frac{4n\vert{K(x)}\vert}{5m}+h_k(x) \leq \frac{4n\vert{K(x)}\vert}{5m} + \frac{2n}{m} = \frac{4n(\vert{K(x)}\vert+1)}{5m} -\frac{4n}{5m} + \frac{2n}{m} = \frac{4n(\vert{K(x)}\vert+1)}{5m} + \frac{6n}{5m}\\
&\leq \frac{4n\vert{K(y)}\vert}{5m} + \frac{6n}{5m} \leq \frac{4n\vert{K(y)}\vert}{5m}+h_k(y) = f_k(y).
\end{align*}
Since either $h_1(x) < 2n/m$ or $h_2(x)<2n/m$ (the leading and trailing zeros in block~1 are not $4n/(5m)$ at the same time since $\ones{x^1}=6n/(5m)$), the first inequality is strict for $k=1$ or $k=2$.

(4): At first suppose that $K(x) \neq K(y)$. Let $i \in K(x) \setminus K(y)$ and $j \in K(y) \setminus K(x)$. Then $\ones{x^i} = 8n/(5m)$ and $\LO(x^i) + \TZ(x^i) = 2n/(5m)$ whereas $\ones{y^i} = 6n/(5m)$ and $\LO(x^i) + \TZ(x^i) = 4n/(5m)$. Therefore, 
$$h_{2i-1}(x)+h_{2i}(x) = \frac{16n}{5m}+\frac{2n}{5m} = \frac{18n}{5m} > \frac{16n}{5m} = \frac{12n}{5m} + \frac{4n}{5m} = h_{2i-1}(y)+h_{2i}(y)$$
and analoguously $h_{2j-1}(x)+h_{2j}(x) =  16n/(5m) < 18n/(5m) = h_{2j-1}(y)+h_{2j}(y)$. Since $f_k(x) = 4n|K(x)|/(5m) + h_k(x)$ we see that $f_{2i-1}(x)+f_{2i}(x)>f_{2i-1}(y)+f_{2i}(y)$ and $f_{2j-1}(y)+f_{2j}(y)>f_{2j-1}(x)+f_{2j}(x)$. This implies incomparability of $x$ and $y$ (since if $x$ and $y$ were comparable then either $f_\ell(x) \leq f_\ell(y)$ for all $\ell \in [m]$ or $f_\ell(y) \leq f_\ell(x)$ for all $\ell \in [m]$ implying $f_k(x)+f_{k+1}(x) \leq f_k(y)+f_{k+1}(y)$ for every odd $k \in [m]$ in the first case and $f_k(x)+f_{k+1}(x) \geq f_k(y)+f_{k+1}(y)$ for every odd $k \in [m]$ in the second).

Suppose that $K(x) = K(y)$. Since $x^i$ and $y^i$ both consist either of a cumulative block of $6n/(5m)$ ones or $8n/(5m)$ ones, we find a block $i \in [m/2]$ where $\LZ(x^i) > \LZ(y^i)$ (and hence $\TZ(x^i)< \TZ(y^i)$) or $\LZ(x^i) < \LZ(y^i)$ (and hence $\TZ(x^i)> \LZ(y^i)$). This implies that in the first case, $f_{2i-1}(x)>f_{2i-1}(y)$ and $f_{2i}(x)<f_{2i}(y)$, while in the second case, $f_{2i-1}(x)<f_{2i-1}(y)$ and $f_{2i}(x)>f_{2i}(y)$. As a result, $x$ and $y$ are incomparable.

(5): Note that $\vert{K(z)}\vert=m/2$ for $z \in \mathcal{P}$ and $\vert{K(w)}\vert<m/2$ for every $w \notin \mathcal{P}$. Hence, by (2) every point $z \in \mathcal{P}$ dominates every point $w \notin \mathcal{P}$. By Property~(4) applied to $K(x)=[m/2]$, two distinct $x,y \in \mathcal{P}$ are incomparable.  
\end{proof}

\begin{lemma}\label{lem:RRMO-size-non-dom-set}
Let $S$ be a set of mutually incomparable solutions of $m$-\RRRMO. Let $E:=\{0,1\}^n \setminus (M \cup N)$ and for $\ell \in \{0, \ldots , m/2\}$ let $N_\ell:=\{x \in N \mid |K(x)|=\ell\}$. The following properties are satisfied.
\begin{enumerate}
\item[(1)]
Either $S \subset E$ or $S \subset M \cup N_0$ or there is $\ell \in [m/2]$ with $S\subset N_\ell$.
\item[(2)]
Suppose $S \subset E$. Then $|S| \leq (6n/(5m)+1)^{m/2} \leq (4n/(5m)+1)^{m-1}$.  
\item[(3)]
Suppose $S \subset M \cup N_0$. Then $|S| \leq (4n/(5m)+1)^{m-1}$.
\item[(4)] 
Suppose $S \subset N_\ell$ for $\ell>0$. Then 
$$|S| \leq \binom{m/2}{\ell}\left(\frac{2n}{5m}+1\right)^\ell \left(\frac{4n}{5m}+1\right)^{m/2-\ell} \leq \left(\frac{4n}{5m}+1\right)^{m-1}.$$ 
\end{enumerate}
Hence, $|S| \leq (4n/(5m)+1)^{m-1}$ in every case.
\end{lemma}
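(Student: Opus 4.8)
The plan is to prove the four items in the stated order: item~(1) splits the situation into three disjoint regions, and items~(2)--(4) bound $|S|$ inside each region. The common principle throughout is that two incomparable search points have different fitness vectors, so $|S|$ never exceeds the number of fitness vectors realised on the region in question.

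\emph{Item~(1).} I would first recall that $\{0,1\}^n$ is the disjoint union of $L$, $M$, $N=\bigcup_{\ell=0}^{m/2}N_\ell$ and the set $E_0:=\{0,1\}^n\setminus(L\cup M\cup N)$, on which $f$ is identically $(0,\dots,0)$, and that $E=L\cup E_0$. Suppose $S\not\subseteq E$, so $S$ contains some $y\in M\cup N$. Then $f_k(y)\geq 6n/(5m)>0$ for all $k$, so $y$ dominates every point of $E_0$; and by Lemma~\ref{lem:RRMO-properties}(1), $y$ dominates every point of $L$. Since $S$ is an antichain, this forces $S\cap E=\emptyset$, i.e.\ $S\subseteq M\cup N$. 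Next, Lemma~\ref{lem:RRMO-properties}(3) forbids two points of $S\cap N$ to have different values of $|K(\cdot)|$, so $S\cap N\subseteq N_\ell$ for a single $\ell$. If $\ell=0$ (or $S\cap N=\emptyset$), then $S\subseteq M\cup N_0$. If $\ell\geq 1$, I claim every $y\in N_\ell$ dominates every $x\in M$: by Lemma~\ref{lem:RRMO-properties}(2) there is $x'\in\mathcal{Q}=N_0$ with $x'\succ x$, and by Lemma~\ref{lem:RRMO-properties}(3) applied with $|K(x')|=0<\ell=|K(y)|$ we get $y\succ x'$, hence $y\succ x$ by transitivity of $\succ$. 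Thus $S\cap M=\emptyset$ and $S\subseteq N_\ell$, completing item~(1).

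\emph{Items~(2)--(4).} For~(2): on $E$, a point $x\in L$ has fitness vector $(\ones{x^1},\ones{x^1},\dots,\ones{x^{m/2}},\ones{x^{m/2}})$, determined by the tuple $(\ones{x^j})_{j\in[m/2]}\in\{0,\dots,6n/(5m)\}^{m/2}$, and the vector $(0,\dots,0)$ realised on $E_0$ is of this form too; hence $|S|\leq(6n/(5m)+1)^{m/2}$. For~(3): on $M\cup N_0$ every block contains exactly $6n/(5m)$ ones, so $f_k(x)=h_k(x)=6n/(5m)+(\text{a leading- or trailing-zero count of one block})$, which lies in $\{6n/(5m),\dots,2n/m\}$; thus each objective takes at most $4n/(5m)+1$ values on $M\cup N_0$, and the counting argument from the proof of Lemma~\ref{lem:dominance-preparation}(i) (two points of $S$ agreeing on the first $m-1$ objectives would be comparable) gives $|S|\leq(4n/(5m)+1)^{m-1}$. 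For~(4): on $N_\ell$ the fitness vector of $x$ is determined by the set $K(x)$ ($\binom{m/2}{\ell}$ possibilities) together with, for each block $j$, the single number $\LZ(x^j)$, which ranges over $\{0,\dots,2n/(5m)\}$ when $j\in K(x)$ (so $x^j\in A$ is a cumulative block of $8n/(5m)$ ones and $\TZ(x^j)$ is then fixed) and over $\{0,\dots,4n/(5m)\}$ when $j\notin K(x)$ (so $x^j\in B$); hence $|S|\leq\binom{m/2}{\ell}(2n/(5m)+1)^{\ell}(4n/(5m)+1)^{m/2-\ell}$.

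\emph{Final simplification and main obstacle.} It remains to see each per-case bound is at most $(4n/(5m)+1)^{m-1}$: for~(3) this is the stated bound itself, and for~(2) and~(4) the right-hand sides are both at most $2^{m/2}(4n/(5m)+1)^{m/2}\leq(8n/(5m)+4)^{m/2}=(\sqrt{2x+4})^{m}$ with $x:=4n/(5m)$, which Lemma~\ref{lem:inequality} bounds by $(x+1)^{m-1}=(4n/(5m)+1)^{m-1}$ (using $m\geq 4$ and $4n/(5m)\geq 5$, the latter for $n$ large); for $m=2$ one instead observes $|f_1(x)-f_2(x)|\leq 2n/5$ for all $x\in\{0,1\}^n$ and applies Lemma~\ref{lem:dominance-preparation}(ii). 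Combining this with item~(1) yields $|S|\leq(4n/(5m)+1)^{m-1}$ in every case. I expect the only delicate step to be the claim in item~(1) that every $y\in N_\ell$ with $\ell\geq 1$ dominates every $x\in M$: deriving it from Lemma~\ref{lem:RRMO-properties}(2),(3) plus transitivity of $\succ$ keeps it short, but one must confirm that $\mathcal{Q}=N_0$ and that the domination asserted in Lemma~\ref{lem:RRMO-properties}(2) is strict. Everything else is routine bookkeeping of how a fitness vector is reconstructed from block-level data, together with the elementary estimate of Lemma~\ref{lem:inequality}.
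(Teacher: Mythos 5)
Your overall strategy coincides with the paper's: establish the trichotomy via the dominance relations of Lemma~\ref{lem:RRMO-properties}, bound $|S|$ in each region by counting realisable fitness vectors (or, in the paper's item~(4), the search points themselves -- the counts agree), and reduce everything to $(4n/(5m)+1)^{m-1}$ via Lemma~\ref{lem:inequality}. Your item~(1) is in fact more careful than the paper's one-line justification: the paper does not explicitly argue why $S$ cannot contain a point of $M$ together with a point of $N_\ell$ for $\ell\geq 1$, whereas you derive this from Lemma~\ref{lem:RRMO-properties}(2) and (3) plus transitivity of strict dominance; the points you flag there (that $\mathcal{Q}=N_0$ and that the dominance in Lemma~\ref{lem:RRMO-properties}(2) is strict) do check out against the definitions and the proof of that lemma. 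Items~(3) and~(4) match the paper's arguments. In item~(2) you count the full tuple $(\ones{x^j})_{j\in[m/2]}$ and get exponent $m/2$, matching the lemma statement, while the paper's proof drops the last coordinate and gets exponent $m/2-1$; this matters only for $m=2$, where the statement's intermediate inequality $(6n/(5m)+1)^{m/2}\leq(4n/(5m)+1)^{m-1}$ actually fails, and your fallback to Lemma~\ref{lem:dominance-preparation}(ii) correctly rescues the final bound there.

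The one genuine gap is in your final simplification for $m\geq 4$. You funnel the bounds of items~(2) and~(4) through $(\sqrt{2x+4})^{m}$ with $x=4n/(5m)$ and invoke Lemma~\ref{lem:inequality}, which requires $x\geq 5$; you dismiss the complementary case with ``the latter for $n$ large.'' But $x=4n/(5m)$ is not large for large $n$ alone -- the lemma is stated for all $n$ divisible by $5m/2$, so $2n/(5m)\in\{1,2\}$ (i.e.\ $x\in\{2,4\}$) is an admissible regime, reachable when $m$ scales linearly with $n$, and there $(\sqrt{2x+4})^{m}\leq(x+1)^{m-1}$ is simply false (e.g.\ $x=2$, $m=4$ gives $64\not\leq 27$), so your chain of inequalities breaks even though the target bound still holds. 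The paper handles exactly this regime as a separate case (Case~1 in its proof of item~(4)), using $\sqrt{2n/(5m)+1}\leq 2$ to trade a factor $2$ per block for one power of $4n/(5m)+1$ and then $\sqrt{2x+4}\leq x+1$ for $x\geq 2$. Your argument needs an analogous case distinction (or a direct verification of $(3x/2+1)^{m/2}\leq(x+1)^{m-1}$ and $\binom{m/2}{\ell}(2n/(5m)+1)^{\ell}(4n/(5m)+1)^{m/2-\ell}\leq(x+1)^{m-1}$ for $x\in\{2,4\}$, both of which do hold); as written it only covers $4n/(5m)\geq 5$.
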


\begin{proof}
(1): This follows directly from Lemma~\ref{lem:RRMO-properties}(1), which states that every $x \in M \cup N_0$ dominates every $y \in E$, and from Lemma~\ref{lem:RRMO-properties}(3), which states that every $x \in N_{\ell}$ dominates every $y \in N_j$ for $j < \ell$.

(2): Every $v \in f(S)$ fulfills $v_k \in \{0, \ldots , 5n/(6m)\}$, and $v_k = v_{k+1}$ for every odd $k \in [m]$. Note also that $(u_1, \ldots , u_{m-2}) \neq (v_1, \ldots ,v_{m-2})$ for two different $u,v \in f(S)$. Otherwise, $x,y$ with $f(x)=u$ and $f(y)=v$ are comparable (due to $f(S) \subset f(E)$ and $v_{m-1}=v_m$ for every $v\in f(E)$). Therefore, 
$$\vert{S}\vert = \vert{f(S)}\vert \leq \left(\frac{6n}{5m}+1\right)^{m/2-1} \leq \left(\sqrt{\frac{6n}{5m}+1}\right)^{m-1}\leq \left(\frac{4n}{5m}+1\right)^{m-1}$$
where the last inequality holds because $\sqrt{x+1} \leq 2x/3+1$ for every $x \geq 0$.

(3): We have that $\ones{y^j}=6n/(5m)$ for every $j \in [m/2]$ and therefore $\LZ(y^j),\TZ(y^j) \in \{0, \ldots ,4n/(5m)\}$. This implies that each objective has at most $(4n/(5m)+1)$ different values. With Lemma~\ref{lem:dominance-preparation}(i) we see that $|S| \leq (4n/(5m)+1)^{m-1}$.

(4): Two distinct search points $x,y \in N_\ell$ are incomparable by Lemma~\ref{lem:RRMO-properties}(4). The number of distinct search points $y \in N_\ell$ is
\begin{align*}
\binom{m/2}{\ell} \left(\frac{2n}{5m}+1\right)^\ell \left(\frac{4n}{5m}+1\right)^{m/2-\ell} =:\alpha(\ell,m,n)
\end{align*}
since there are $\binom{m/2}{\ell}$ possibilities for $K(x)$ with $|K(x)|=\ell$ and $(2n/(5m)+1)^\ell \cdot (4n/(5m)+1)^{m/2-\ell}$ search points with a common $K(x)$: If $i \in K(x)$ there are $2n/(5m)+1$ different possibilities for $x^i$ and if $i \notin K(x)$ there are $4n/(5m)+1$ different possibilities. 

We have that $\alpha(1,2,n) = \alpha(0,2,n) \leq 4n/10+1 = (4n/(5m)+1)^{m-1}$ which gives the result for $m=2$. So we may assume that $m \geq 4$. We obtain 
\begin{align*}
\alpha(\ell,m,n) &\leq 2^{m/2-\ell} \cdot \binom{m/2}{\ell}\left(\frac{2n}{5m}+1\right)^{m/2} \leq 4^{m/2-1}\left(\frac{2n}{5m}+1\right)^{m/2} =:\beta(m,n)
\end{align*}
where the last inequality is obvious for $\ell=0$ and holds due to $\binom{m/2}{\ell} \leq 2^{m/2}$ for $\ell \geq 1$. We further estimate $\beta(m,n)$ by considering two cases.

\textbf{Case 1:} Suppose $4n/(5m) \leq 5$ (or in other words $2n/(5m) \in \{1,2\}$). Since $2n/(5m)+1 \leq 3$ we see $\sqrt{2n/(5m)+1} \leq 2$ and therefore
\begin{align*}
    \beta(m,n) &\leq 2 \cdot 4^{m/2-1}\left(\frac{2n}{5m}+1\right)^{(m-1)/2} = 4^{(m-1)/2}\left(\frac{2n}{5m}+1\right)^{(m-1)/2}\\
    &= \left(\sqrt{\frac{8n}{5m}+4}\right)^{m-1} \leq \left(\frac{4n}{5m}+1\right)^{m-1}
\end{align*} 
where the last inequality holds due to $\sqrt{2x+4} \leq x+1$ for $x \geq 2$.

\textbf{Case 2:} Suppose $4n/(5m) > 5$ (or in other words $2n/(5m)>2$). Then 
\begin{align*}
\beta(m,n) \leq \left(\sqrt{\frac{8n}{5m}+4}\right)^m \leq \left(\frac{4n}{5m}+1\right)^{m-1}
\end{align*}
by Lemma~\ref{lem:inequality} on $k=m$ by pluggin in $x=4n/(5m) > 5$. This concludes the proof.
\end{proof}

Lemma~\ref{lem:RRMO-size-non-dom-set} refines the result from the conference version~\citep{Opris2025}. In that version it was shown that for $m$-\RRRMO the maximum number of mutually incomparable solutions is bounded by $c \cdot (4n/(5m)+1)^{m-1}$ for a suitable constant $c \geq 1$ where the number $m$ of objectives is constant.

\section{Runtime Analyses of GSEMO and NSGA-III with One-Point Crossover on $m$-\RRRMO}

We now show that both NSGA-III and GSEMO exhibit a significant performance gap depending on whether one-point crossover is used, across a wide range of the number $m$ of objectives. In particular, when $m$ is constant, both algorithms with one-point crossover are able to find the entire Pareto front in expected polynomial time, whereas without crossover, they require exponential time even to find a single Pareto-optimal point.

\subsection{Analysis of NSGA-III}
We begin with NSGA-III, as our analysis builds on the conference version~\citep{Opris2025}, where NSGA-III has already been analyzed on $m$-\RRRMO for a constant number $m$ of objectives.  

\begin{theorem}
\label{thm:Runtime-Analysis-NSGA-III-mRRMO-onepoint}
Let $m \in \mathbb{N}$ be divisible by $2$ and $n$ be divisible by $5m/2$. Then the algorithm \nsgaIII (Algorithm~\ref{alg:nsga-iii}) with $p_c \in (0,1)$, $\varepsilon_{\text{nad}} \geq 2n/5+2n/m$, a set $\refer$ of reference points as defined above for $p \in \mathbb{N}$ with $p \geq 2m^{3/2}(2n/5+2n/m)$, crossover probability $p_c \in (0,1)$ and a population size $\mu \geq (4n/(5m)+1)^{m-1}$, $\mu = 2^{O(n^2)}$, finds the Pareto set of $f:=m\text{-\RRRMO}$ in expected $O(n^3/(1-p_c)+m^2/p_c)$ generations and $O(\mu n^3/(1-p_c)+\mu m^2/p_c)$ fitness evaluations.  
\end{theorem}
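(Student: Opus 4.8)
\noindent\textit{Strategy.} The plan is to track a single ``best so far'' individual through a chain of improving stages and to use Lemma~\ref{lem:Reference-Points} so that no progress is ever lost. First I would verify the hypotheses of that lemma: the maximum value of $m$-\RRRMO in one objective is $f_{\max}=2n/5+2n/m$, so the assumed $\varepsilon_{\text{nad}}$ and $p$ are large enough, and by Lemma~\ref{lem:RRMO-size-non-dom-set} every set of mutually incomparable solutions has size at most $(4n/(5m)+1)^{m-1}\le\mu$; hence any fitness value carried by a first-ranked individual is still carried by some population member in every later generation. I also need the mild strengthening of the ``protection of good solutions'' argument of \citep{OprisNSGAIII} announced in the introduction: together with the dominance layering of Lemma~\ref{lem:RRMO-properties} (phases $E\prec M\cup N_0\prec N_1\prec\cdots\prec N_{m/2}$), it shows that the potential-maximal individual of the currently most advanced non-empty phase is first-ranked, so an appropriate potential of the population is non-decreasing. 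If $m$ is small enough that a uniformly random string lies in $L\cup M\cup N$ with non-negligible probability, the initial population already contains a positive-fitness individual; otherwise $O(n\log n/(1-p_c))$ generations suffice to reduce the number of ones in over-full blocks of one individual until it enters $L$, after which it dominates all zero-fitness individuals.

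Next, two standard additive-drift sub-phases drive the best individual $x^*$ into $\mathcal Q=N_0$. Using the potential $\sum_j(6n/(5m)-\ones{(x^*)^j})=O(n)$, a one-bit mutation filling a zero of a deficient block of $x^*$ decreases it and, by protection, is never undone, which gives $O(nm/(1-p_c))$ generations to reach $M\cup N$. Switching to $\sum_j\bigl(4n/(5m)-\LZ((x^*)^j)-\TZ((x^*)^j)\bigr)=O(n)$, each interior zero of $x^*$ can be moved to a boundary position by a specific $2$-bit flip exactly as in the proof of Lemma~\ref{lem:RRMO-properties}(2) (the result stays in $M\cup N_0$ and dominates $x^*$); such a flip occurs with probability $\Omega((1-p_c)/(\mu n^2))$ per trial, hence $\Omega((1-p_c)/n^2)$ per generation, so Theorem~\ref{thm:additive-drift} yields a first-ranked individual in $N_0$ within $O(n^3/(1-p_c))$ generations.

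The core of the proof, and the step I expect to be hardest, is the crossover chain $N_0\to N_1\to\cdots\to N_{m/2}=\mathcal P$. By Lemma~\ref{lem:RRMO-properties}(4) shifting a $B$-block is dominance-neutral, so mutation performs a \emph{protected} random walk over the $4n/(5m)+1$ admissible positions of a block; running it, amortized over all $m/2$ steps, for $O(n^3/(1-p_c))$ further generations populates, for the step $N_{\ell-1}\to N_\ell$, enough individuals of $N_{\ell-1}$ that agree on all blocks except block $\ell$. A direct computation, illustrated by Figure~\ref{fig:RRRMO-Prozess}, then shows that one-point crossover of two such individuals whose block-$\ell$ windows of $6n/(5m)$ ones are displaced by exactly $2n/(5m)$, combined with any of the $\Theta(n/m)$ cutting points lying inside block $\ell$, creates an offspring with block $\ell\in A$, hence in $N_\ell$, which dominates all of $N_{\ell-1}$ and is afterwards protected. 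The obstacle is the probabilistic bookkeeping: I must show that, once the population has been seeded with these $N_{\ell-1}$ individuals, a usable \emph{ordered} pair of parents together with a useful cutting point is sampled with probability $\Omega(p_c/(\mu m))$ per trial --- using that one of the two parents need not be a single fixed individual but may range over a large, already-created family of population members --- so that each of the $m/2$ promotions costs only $O(m/p_c)$ generations while the mutation preparation between promotions stays inside the $O(n^3/(1-p_c))$ budget.

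Finally I would cover all non-dominated fitness vectors. By Lemma~\ref{lem:RRMO-properties}(5) these are in bijection with the tuples $(\LZ(x^1),\dots,\LZ(x^{m/2}))\in\{0,\dots,2n/(5m)\}^{m/2}$, two such tuples adjacent in the grid differ by a single dominance-neutral $2$-bit flip, every newly realised tuple is protected once first-ranked, and $\mu\ge(2n/(5m)+1)^{m/2}$ again by Lemma~\ref{lem:RRMO-size-non-dom-set}, so the population can hold the whole front. Since the set of not-yet-covered ``frontier'' tuples is large, many new ones are produced per generation, and the front is completed within a further $O(n^3/(1-p_c))$ generations (crossover, which here merely recombines $A$-blocks, only helps). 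Summing the phase costs gives $O(n^3/(1-p_c)+m^2/p_c)$ generations, and since each generation performs $\mu$ fitness evaluations this is $O(\mu n^3/(1-p_c)+\mu m^2/p_c)$ evaluations.
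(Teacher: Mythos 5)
Your road map matches the paper's almost phase for phase (initialization, fill each block to $6n/(5m)$ ones, collect the ones into contiguous blocks, reach $\mathcal Q=N_0$, climb $N_0\to N_1\to\cdots\to N_{m/2}$ by crossover, then cover the front), and your crossover mechanism — two parents whose block-$j$ windows are displaced by exactly $2n/(5m)$, cut inside block $j$ — is exactly the paper's. However, there is a genuine gap at the step you yourself flag as the hardest one, and the mechanism you describe does not close it. To get $O(m/p_c)$ generations per promotion you need a good \emph{ordered} pair to be selected with probability $\Omega(1/\mu)$ per trial, i.e.\ you need $\Omega(\mu)$ good ordered pairs among the $\mu^2$ possible ones. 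But your good pairs are restricted to parents that ``agree on all blocks except block $\ell$'' and both lie in a seeded family; within the $O(n^3/(1-p_c))$ amortized seeding budget that family has only $O(n/m)$ members, hence only $O(n/m)$ good pairs, and the per-generation success probability degrades to $\Theta(n p_c/(m\mu))$. That yields $\Theta(\mu m/(np_c))$ generations per promotion — the $\mu$-dependence you are trying to avoid, and catastrophic for $\mu=2^{O(n^2)}$. After the population is flooded with non-dominated individuals (your analogue of the paper's Subphase~A), a typical population member is an \emph{arbitrary} element of $N_{\gamma_t}$ whose $A$- and $B$-blocks are scattered; it agrees with no seeded individual outside a single block, so your template simply does not apply to it. The paper's resolution is its Claim~\ref{claim:good-pairs-NSGAIII}: it defines ``conform'' individuals ($x^j\in A$ for $j<\alpha$, $x^\alpha\in B$) and proves by a case analysis over the structure of an \emph{arbitrary} $x\in P_t$ (including the awkward cases where $x$ has a $B$-block to the left of $\alpha$) that every one of the $\mu$ population members can be matched with some tailored conform partner so that one-point crossover at one of $\Theta(n/m)$ cutting points promotes it. That is what produces $\mu/2$ good pairs and the $\mu$-free generation bound; your sketch states the target probability but supplies neither this case analysis nor any substitute for it.

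A secondary, more easily repaired flaw: your Phase-1 fallback (``reduce the number of ones in over-full blocks of one individual until it enters $L$'') does not work for \nsgaIII, because while every individual has fitness zero there is no selection signal and no guarantee that the lineage you are tracking survives the reference-point selection, so the additive-drift argument of Lemma~\ref{lem:additive-drift-main} (which the paper uses only for GSEMO, where the population is a singleton) does not transfer. The paper instead exploits the lower bound $\mu\ge(4n/(5m)+1)^{m-1}$ to show the probability that \emph{all} $\mu$ initial individuals miss $L\cup M\cup N$ is $e^{-\Omega(n^3)}$, which makes the expected cost of this phase $1+o(1)$ even after accounting for an $n^{n}$-generation recovery; you need that computation (or something like it) for the superconstant-$m$ regime your ``if $m$ is small enough'' clause does not cover.
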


\begin{proof}
Note that $f_{\max}=2n/5+2n/m$ by noticing that $\vert{K(x)}\vert \leq m/2$ and $\ones{x^j}+\LZ(x^j),\ones{x^j}+\TZ(x^j) \leq 2n/m$ for every $x \in \{0,1\}^n$. So during the whole optimization procedure we may apply Lemma~\ref{lem:Reference-Points} to obtain that non-dominated solutions are never lost which means that for every $x \in P_t$ there is $x' \in P_{t+1}$ weakly dominating $x$. Further, we use the method of typical runs~\citep[Section~11]{Wegener2002} and divide the run into several phases. For every phase we compute the expected waiting time to reach one of the next phases. A phase can be skipped if the goal of a later phase is achieved.

\textbf{Phase 1:} Create $x$ with $f(x) \neq 0$ for every $j \in [m]$.\\
Let $x$ be initialized uniformly at random. By a classical Chernoff bound the probability that $0 < \ones{x^j} \leq 6n/(5m)$ for a fixed $j \in [m/2]$ is $p^{\text{init}} = 1-e^{-\Omega(n/m)}$ since the expected number of ones in block $j$ of a search point is $1/2 \cdot 2n/m = n/m$ after initialization. Note that $p^{\text{init}} \geq 1/2$ regardless of the value of $m$. 
We consider two cases to estimate the expected number of generations required to complete this phase by $1+o(1)$.

\textbf{Case 1:} Suppose $m=2$. Then $p^{\text{init}} = 1- e^{-\Omega(n)}$ and hence, the probability that every individual $x$ satisfies $\ones{x}=0$ or $\ones{x} > 3n/5$ is $e^{-\Omega(\mu n)}$, because all individuals are initialized independently. Since the probability is at least $n^{-n}$ to create any individual with mutation (no matter if crossover is executed) and hence, one with fitness distinct from $0$, the expected number of generations to finish this phase in this case is at most $1 - e^{-\Omega(\mu n)} + n^n \cdot e^{-\Omega(\mu n)} = 1+o(1)$.

\textbf{Case 2:} Suppose that $m>2$ or, in other words, $m \geq 4$ since $m$ is even. After initializing $x$, the probability that there is $j \in [m/2]$ with $\ones{x^j}> 6n/(5m)$ or $\ones{x^j}=0$ is $1-(p^{\text{init}})^{m/2}$. Hence, the probability that all individuals have such a block $j$ after initialization is at most 
\begin{align*}
(1-(p^{\text{init}})^{m/2})^\mu &\leq e^{-\mu \cdot (p^{\text{init}})^{m/2}} \leq e^{-(4n/(5m)+1)^{m-1} /2^{m/2}} = e^{-1/\sqrt{2} \cdot (4n/(5m)+1)^{m-1} /\sqrt{2}^{m-1}} \\
&= e^{-1/\sqrt{2} \cdot (4n/(5 \sqrt{2}m)+1/\sqrt{2})^{m-1}}
\end{align*}
by using $1-x \leq e^x$ for every $x \in \mathbb{R}$, $\mu \geq ((4n/(5m)+1)^{m-1}$ and $p^{\text{init}} \geq 1/2$. We also have $4n/(5 \sqrt{2} m)-n/(5m) = n/(5m)(4/\sqrt{2}-1)\geq 1/2 \cdot (4/\sqrt{2}-1) = \sqrt{2}-1/2 \geq 1-1/\sqrt{2}$ and therefore $4n/(5 \sqrt{2}m)+1/\sqrt{2} \geq n/(5m)+1$. Hence, we further estimate 
\begin{align*}
(1-(p^{\text{init}})^{m/2})^\mu &\leq e^{-1/\sqrt{2} \cdot (n/(5m)+1)^{m-1}} \leq e^{-1/\sqrt{2} \cdot (n/20+1)^3} =e^{-\Omega(n^3)}
\end{align*}
where we use the monotonicity of the function $]0,\infty[ \to \mathbb{R}, x \mapsto (n/(5x)+1)^{x-1},$ to derive the last inequality (see Lemma~\ref{lem:function-monotone}(i)). 
So the expected number of generations to finish this phase is at most 
$(1-e^{-\Omega(n^3)})+ e^{-\Omega(n^3)} \cdot n^n = 1+o(1)$.

\textbf{Phase 2:} Create $x$ with $\ones{x^j} = 6n/(5m)$ for all $j \in [m/2]$.\\ 
Let $O_t:=\max\{\ones{x} \mid x \in P_t, f(x) \neq 0\}$, i.e. the maximum number of ones occuring in an indvidual in the current population. Note that $0 \leq \ones{x}^i \leq 6n/(5m)$ for every $i \in [m/2]$ and there is $j \in [m/2]$ with $\ones{x}^j > 0$ (since $f(x) \neq 0$). Hence, $1 \leq O_t < m/2 \cdot 6n/(5m) = 3n/5$. Further, $O_t$ cannot decrease by Lemma~\ref{lem:Reference-Points} since a solution $x$ with $\ones{x}=O_t$ is non-dominated. 
To increase $O_t$ in one trial it suffices to choose a parent $z \in P_t$ with $\ones{z}=O_t$ (prob. at least $1-(1-1/\mu)^2 \geq 1/\mu$), omit crossover (prob. $(1-p_c)$) and flip one of $2n/m-\ones{x^j} \geq 4n/(5m)$ zero bits to one in a block $j$ with $\ones{x^j} < 6n/(5m)$ (prob. at least $\binom{4n/(5m)}{1} \cdot 1/n \cdot (1-1/n)^{n-1} = 4/(5m) \cdot (1-1/n)^{n-1} \geq 4/(5me)$). Since in each generation $\mu/2$ pairs of two individuals are generated independently of each other, the probability to increase $O_t$ is at least $1-(1-r_t)^{\mu/2} \geq \frac{\mu r_t}{2}/(1+\tfrac{\mu r_t}{2})$ for $r_t:=4(1-p_c)/(5\mu me)$ (for this inequality see Lemma~\ref{lem:Badkobeh}). Hence, the expected number of generations to complete this phase is at most $$\left(\frac{3n}{5}-1\right)\left(1+\frac{2}{\mu r_t}\right) = \left(\frac{3n}{5}-1\right)\left(1+\frac{10me}{4(1-p_c)}\right) = O\left(\frac{n m}{1-p_c}\right).$$
\textbf{Phase 3:} Create $x$ with $x^j \in B$ (which means that $\LZ(x^j)+\TZ(x^j) = 4n/(5m)$) for each $j \in [m/2]$.

Let $W_t:=\{x \in P_t \mid \ones{x^j}=6n/(5m) \text{ for all } j \in [m/2]\}$ and for $k \in [m]$, $x\in \{0,1\}^n$ let $T_k(x) = \LZ(x^{1+(k-1)/2})$ if $k$ is odd and $T_k(x) = \TZ(x^{1+(k-2)/2})$ otherwise. Note that for $x \in W_t$ we have $f_k(x) = 6n/(5m) + T_k(x)$ and $0 \leq \LZ(x^j)+\TZ(x^j) \leq 4n/(5m)$ for each $j \in [m/2]$. Set $\alpha_t=\max\{\sum_{i=1}^{m/2} (T_{2i-1}(x)+T_{2i}(x)) \mid x \in W_t\}$. Then $\alpha_t \in \{0, \ldots , 2n/5-1\}$ and this phase is finished if $\alpha_t$ becomes $2n/5$. According to Lemma~\ref{lem:Reference-Points}, $\alpha_t$ cannot decrease since a corresponding solution $w$ with value $\alpha_t$ has the largest sum in all objectives and is therefore non-dominated. In $w$ the total number of zeros not contributing to any $\LZ(w^j)+\TZ(w^j)$ for a $j \in [m/2]$ (i.e. which are between two ones from $w^j$) is $2n/5-\alpha_t=:\sigma_t$. To increase $\alpha_t$ in one trial, is suffices to choose such a solution $w$ from $P_t$, omit crossover and execute mutation as follows: Flip one of the $\sigma_t$ zeros to one and the leftmost one bit in the same block $i$ as the flipped zero to zero to increase $\LZ(w^i)+\TZ(w^i)$ while keeping the remaining bits unchanged (prob. $\sigma_t/(n^2) \cdot (1-1/n)^{n-2} \geq \sigma_t/(en^2)$). Then $\LZ(w^j)+\TZ(w^j)$ remains unchanged for every $j \in [m/2] \setminus \{i\}$ and hence, $\alpha_t$ increases. 
Let $r_t:=(1-p_c)\sigma_t/(en^2\mu)$. Then the probability of increasing $\alpha_t$ in one generation is at least $1-(1-r_t)^{\mu/2} \geq \tfrac{r_t\mu}{2}/(1+\tfrac{r_t \mu}{2})$. Therefore, the expected number of generations required to increase $\alpha_t$ is at most $1+2/(r_t \mu) = 1+2en^2/((1-p_c)\sigma_t)$. 
Since $\sigma_t \in [2n/5]$, the expected number of generations to obtain $\alpha_t=2n/5$ in total is at most 
$$\sum_{j=1}^{2n/5}\left(1+\frac{2en^2}{(1-p_c)j}\right) \leq \frac{2n}{5}+\frac{2en^2(\ln(2n/5)+1)}{1-p_c} = O\left(\frac{n^2\log(n)}{1-p_c}\right)$$
where the first inequality holds due to $\sum_{k=1}^\alpha 1/k \leq \ln(\alpha)+1$ for $\alpha \in \mathbb{N}$.

\textbf{Phase 4:} Find all individuals in $\mathcal{Q}:=\{x \in \{0,1\}^n \mid x^j \in B \text{ for all } j \in [m/2]\}$.

Note that $\mathcal{Q} = \{x \in N \mid K(x) = \emptyset\}$. For a specific search point $w \in \mathcal{Q}$ not already found we first upper bound the probability by $e^{-\Omega(n)}$ that a solution $x$ with $x=w$ has not been created after $8en^3/(1-p_c)$ generations. Let $D_t:=\{x \in P_t \mid x^j \in B \text{ for every } j \in [m/2]\}$. We consider $d_t:= \min_{x \in D_t} \sum_{i \in [m/2]} H(x^i,w^i)/2$. For $x \in D_t$ we have that $H(x^i,w^i)$ is even and $H(x^i,w^i) \leq 8n/(5m)$ for all $i \in [m/2]$ (since $\ones{x^i}=\ones{y^i}=6n/(5m)$ and every $x^i$ has length $2n/m$). This implies $0 < d_t \leq m/2 \cdot 4n/(5m) = 2n/5=:s(n)$. Since a solution $x \in D_t$ is non-dominated (by Lemma~\ref{lem:RRMO-properties}(2),~(4)), $d_t$ cannot increase (by Lemma~\ref{lem:Reference-Points}). Note that we created $w$ if $d_t=0$. For $1 \leq \beta \leq s(n)$, define the random variable $X_\beta$ as the number of generations $t$ with $d_t=\beta$. Then the total number of generations required to find a solution $x$ with $x=w$ is at most $X=\sum_{\beta=1}^{s(n)} X_\beta$. Fix $y \in D_t$ with $\sum_{i \in [m/2]} H(w^i,y^i)/2=d_t \neq 0$. 
To decrease $d_t$, it suffices to choose $y$ as a parent, omit crossover and flip two specific bits during mutation in order to shift a block of ones in $y$ in that direction of the corresponding block of $w$   
(prob. $1/n^2 \cdot (1-1/n)^{n-2} \geq 1/(en^2)$). Hence, for $a_t:=(1-p_c)/(\mu en^2)$, the probability to decrease $d_t$ in one generation is at least $1-(1-a_t)^{\mu/2} \geq \frac{a_t \mu/2}{1+a_t \mu/2} \geq a_t \mu/4 = (1-p_c)/(4en^2)$. Thus, the random variable $X$ is stochastically dominated by an independent sum $Z := \sum_{\beta=1}^{s(n)} Z_\beta$ of geometrically distributed random variables $Z_\beta$ with success probability $q:=q_\beta:=(1-p_c)/(4en^2)$. Note that $\expect{Z}=4en^2s(n)/(1-p_c)$. Now we use Theorem~\ref{thm:Doerr-dominance}(1): For $d:=\sum_{\beta=1}^{s(n)} 1/q_\beta^2 = 16e^2n^4s(n)/(1-p_c)^2$ and $\lambda \geq 0$ we obtain
\[
\Pr(Z \geq \expect{Z} + \lambda) \leq \exp\left(-\frac{1}{4} \min\left\{\frac{\lambda^2}{d}, \lambda q \right\}\right).
\]
For $\lambda = 4en^3/(1-p_c)$ we obtain 
$$\Pr(X \geq 8en^3/(1-p_c)) \leq \Pr(Z \geq 8en^3/(1-p_c)) \leq \exp\left(-\frac{1}{4} \min\left\{\frac{n^2}{s(n)}, n\right\}\right) = \exp\left(-\frac{n}{4}\right).$$
By a union bound over all possible~$w$, the probability that $\mathcal{Q}$ is completely found after $8en^3/(1-p_c)$ generations is at most
$$(4n/(5m)+1)^{m/2} \cdot e^{-n/4} = e^{-n/4+m/2 \cdot \ln(4n/(5m)+1)} \leq e^{-n/4+n \cdot \ln(3)/5} = e^{-\Omega(n)}$$
where the last inequality holds due to $m \leq 2n/5$ and Lemma~\ref{lem:function-monotone}(ii): The function $]0,\infty[ \to \mathbb{R}, x \mapsto x \cdot \ln(4n/(10x)+1)$ is strictly monotone increasing. 

If this does not happen, we can repeat the argument. Since the expected number of periods is $1+o(1)$, the expected number of generations to finish this phase is at most $(1+o(1))(8en^3/(1-p_c)) = O(n^3/(1-p_c))$.

For defining the next phases define $\gamma_t:=\max\{\vert{K(x)}\vert \mid x \in P_t \cap N\}$. 
Suppose that $\gamma_t \in \{0, \ldots , m/2-1\}$ which means that there is an individual $z \in P_t$ with $z \in \{x \in N \mid |K(x)| = \gamma_t\}$, but no corresponding individual $w \in P_t$ with $w \in N$ and $|K(w)| > \gamma_t$. By Lemma~\ref{lem:RRMO-properties}(3) and Lemma~\ref{lem:Reference-Points}, $\gamma_t$ cannot decrease.

\textbf{Phase $\gamma_t$+5:} Create an individual $x$ with $\vert{K(x)}\vert = \gamma_t+1$.

Note that Phase~5 starts when $\gamma_t=0$. If Phase $m/2-1+5=m/2+4$ is finished, $K(x) = [m/2]$ and hence, by Lemma~\ref{lem:RRMO-properties}(5), a Pareto optimal search point is found. Let $S_t=\{x \in P_t \mid n \in N \text{ and } |K(x)| = \gamma_t\}$. Note that $S_t \neq \emptyset$ and an individual $x$ is non-dominated if and only if $x \in S_t$: This follows from Lemma~\ref{lem:RRMO-properties}(3) in the case where $\gamma_t>0$. If $\gamma_t=0$, the set $\mathcal{Q}$ is completely found (as we have progressed through Phase~4), implying that $\mathcal{Q} \subset S_t$ and hence, for all $w \notin S_t$ (i.e. $w \in L \cup M$) there exists some $z \in S_t$ that dominates $w$ by Lemma~\ref{lem:RRMO-properties}(2). Hence, $|S_t|$ cannot decrease.

\textbf{Subphase A:} Every $x \in P_t$ satisfies $x \in S_t$.

In expected $O(n/(1-p_c))$ generations we have that $|S_t| \geq n$ since the probability is at least $(1-p_c)(1-1/n)^n/\mu \geq (1-p_c)/(4 \mu)=:\beta$ to increase $|S_t|$ in one trial (choose one $x \in S_t$ as a parent, omit crossover and do not flip any bit during mutation) and hence, at least 
$$1-(1-\beta)^{\mu/2} \geq \frac{\beta \mu/2}{1+\beta \mu/2} = \frac{(1-p_c)/8}{1+(1-p_c)/8} \geq \frac{1-p_c}{16}$$
to increase $|S_t|$ in one generation. So suppose that $|S_t| \geq n$. Denote by $X_t$ the number of newly created individuals $x$ with $x \in S_t$ in the next $\ell:=\lceil{1/(1-p_c)}\rceil$ generations. Then, we have $\expect{X_t} \geq |S_t|/4$. This follows from the fact that, in $\lceil{1/(1-p_c)}\rceil$ generations, the expected number of mutation only steps is at least $\mu$ and therefore, the expected number of individuals chosen as parents from $S_t$ in mutation only steps is at least $\mu \cdot |S_t|/\mu = |S_t|$ where the parent is then cloned with probability $(1-1/n)^n \geq 1/4$ to produce an offspring in $S_t$. 
Hence, by a classical Chernoff bound, $\Pr(X_t \leq 0.5\expect{X_t}) \leq e^{-\Omega(|S_t|)} = e^{-\Omega(n)}$ and with probability $1-e^{-\Omega(n)}$ we have that $|S_{t+\ell}| \geq \min\{|S_t|+|S_t|/8,\mu\} = \min\{9S_t/8,\mu\}$ and by a union bound over $\lceil{\log_{9/8}(\mu/|S_t|)}\rceil=O(n^2)$ (since $\mu \in 2^{O(n^2)}$) periods of $\lceil{1/(1-p_c)}\rceil$ generations, we obtain with probability at least $1-e^{-\Omega(n)}$ 
that Subphase~A is finished in at most $\lceil{\log_{9/8}(\mu/|S_t|)}\rceil \cdot \lceil{1/(1-p_c)}\rceil = O(\log(\mu/|S_t|)/(1-p_c))=O(n^2/(1-p_c))$ generations. If this does not happen, we repeat the above argument and obtain an expected number of $(1+o(1))O(n^2/(1-p_c)) = O(n^2/(1-p_c))$ generations.

\textbf{Subphase B:} Create an individual $y$ with $\vert{K(y)}\vert \geq \gamma_t+1$.

Let $t_0$ be the first generation when $|S_{t_0}| = \mu$. Regarding this generation $t_0$, we define
$$\alpha:=\min\{j \in [m/2] \mid \text{there is $x \in P_{t_0}$ with }x^j \in B\}.$$
Note that $\alpha =1$ if $\ell_t = 0$. For $t \geq t_0$ we say that an individual $x$ is \emph{conform} if $|K(x)|=\gamma_t$, $x^j \in A$ for $j<\alpha$, and $x^j \in B$ for $j = \alpha$. By definition of $\alpha$ there is at least one conform individual $z$ in generation $t_0$. Note also that conform individuals cannot be lost between generations since they are non-dominated. The goal is to bound the expected waiting time for increasing $\gamma_t$ by recombining two suitable conform individuals. However, those may need to be created in a preceding step, as discussed in the following claim.

\begin{claim}
\label{claim:conformity-NSGAIII}
After $O(n^3/(m(1-p_c)))$ generations in expectation the following holds.
\begin{itemize}
    \item[(i)] For every bit string $z \in B \subset \{0,1\}^{2n/m}$ there is a conform $y \in P_t$ with $y^\alpha = z$.
    \item[(ii)] Let $\alpha>1$. Then for every bit string $w \in A \subset \{0,1\}^{2n/m}$ there is a conform $x \in P_t$ with $x^{\alpha-1} = w$.
\end{itemize}
\end{claim}

\begin{proofofclaim}
We show that we obtain (i) and (ii) after $O(n^3/(m(1-p_c)))$ generations in expectation, respectively.

(i): To obtain the statement, we may perform the following operations at most $4n/(5m)$ times. Select a conform $x \in P_t$, such that $H(x^\alpha,z) = 2$ for a $z \in B$, where no conform $w \in P_t$ satisfies $w^\alpha=z$, as parent. Then omit crossover, and flip two specific bits during mutation in block $\alpha$ to create such a $z$. All this happens with probability at least $(1-p_c)/(4en^2)$ in one generation (compare also with Subphase~4 above).

(ii): Note that we may execute the following operations at most $2n/(5m)$ times. Select a conform $x \in P_t$, such that $H(x^{\alpha-1},z) = 2$, where no conform $w \in P_t$ satisfies $w^{\alpha-1}=z$, as parent, omit crossover and flip two specific bits during mutation in block $\alpha-1$ to create such a $z$ which can be done with probability at least $(1-p_c)/(4en^2)$. This proves the claim.
\end{proofofclaim}

Now, suppose that properties (i) and (ii) from Claim~\ref{claim:conformity-NSGAIII} hold after passing through the preceding $O(n^3/(m(1-p_c)))$ generations in expectation. Note that it may happen that
$$\min\{j \in [m/2] \mid \text{there is $x \in P_t$ with }x^j \in B\} < \alpha.$$ Now we move on to the recombination step. We say that a pair $(x_1,x_2)$ of individuals is \emph{good} if, with probability $\Omega(1/m)$, applying one-point crossover to $(x_1,x_2)$ followed by mutation creates an individual $y \in N$ with $|K(y)| \geq \gamma_t+1$. In the next claim we estimate the number of such good pairs.
\begin{claim}
\label{claim:good-pairs-NSGAIII}
Every $x \in P_t$ is part of a good pair. Therefore, there are at least $\mu/2$ good pairs of individuals. 
\end{claim}

\begin{proofofclaim}
Let $x \in P_t$. We consider three cases where the case "$\alpha=1$" is covered by Case~1 and Case~2. The possibilities of recombinations are depicted in Figure~\ref{fig:Conform}.

\begin{figure}[t]
    \centering
    \includegraphics[scale=0.9]{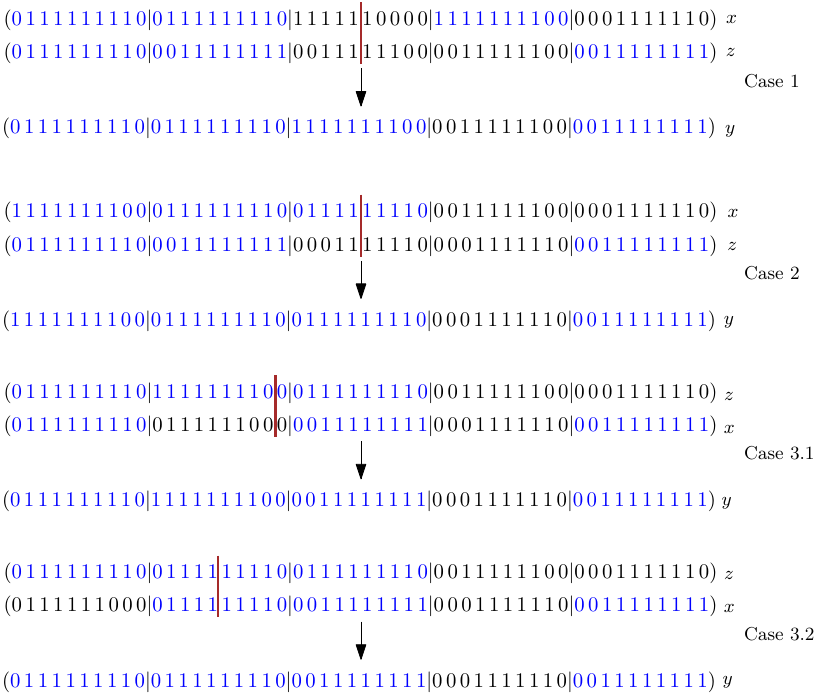}
    \caption{Illustration of the different recombination possibilities after generating suitable conform individuals for parameters $n = 50$, $m = 10$, $\gamma_t = 3$, and $\alpha = 3$. For any given $x \in P_t$, a conform individual $z \in P_t$ can always be found such that the pair $(x, z)$ is good. With probability $\Omega(1/m)$, one-point crossover produces an offspring $y \in N$ such that $|K(y)| > \gamma_t$. Substrings $x^j,z^j \in A$ are highlighted in blue, while the remaining blocks are shown in black. A potential crossover point is marked in brown.}
    \label{fig:Conform}
\end{figure}

\textbf{Case 1:} Suppose that $x^\alpha \in B$ and $x^\ell \in A$ for all $1 \leq \ell<\alpha$ (i.e. $x$ is conform). We consider two more subcases.

\textbf{Case 1.1:} There is $a \in \{0, \ldots , 2n/(5m)\}$ such that $x^\alpha=0^a1^{6n/(5m)}0^{4n/(5m)-a}$.

By Claim~\ref{claim:conformity-NSGAIII}(i) there is a conform $z \in P_t$ with $z^\alpha = 0^{2n/(5m)+a}1^{6n/(5m)}0^{2n/(5m)-a}$. Performing one-point crossover on $(x,z)$ with cutting point in $\{(\alpha-1)(2n)/m+2n/(5m) + a, \ldots , (\alpha-1)(2n)/m+6n/(5m)+a\}$ (i.e. in block $\alpha$ at position $b \in \{2n/(5m) + a, \ldots , 6n/(5m)+a\}$) to create $y^\alpha=0^a1^{8n/(5m)}0^{2n/(5m)-a}$ and omitting mutation afterwards, which happens with probability $(4n/(5m)+1)/(n+1) \cdot (1-1/n)^n \geq (4n/(5m)+1)/(4(n+1)) = \Omega(1/m)$, creates an individual $y$ with $|K(y)| \geq \gamma_t+1$: We have $y^j=x^j$ for $j<\alpha$, $y^\alpha \in A$, and $y^j=z^j$ for $j>\alpha$, and there are $\gamma_t-(\alpha-1)=\gamma_t-\alpha+1$ blocks $j$ in $z$ to the right of block $\alpha$ with $z^j \in A$ since $z$ is conform. Hence, $(x,z)$ is good. 

\textbf{Case 1.2:} There is $a \in \{1, \ldots , 2n/(5m)\}$ such that $x^\alpha=0^{2n/(5m)+a}1^{6n/(5m)}0^{2n/(5m)-a}$.

By Claim~\ref{claim:conformity-NSGAIII}(i) there is a conform $z \in P_t$ with $z^\alpha = 0^a1^{6n/(5m)}0^{4n/(5m)-a}$. The same argument as in Case~1.1 shows that $(z,x)$ is good.

\textbf{Case 2:} Suppose that $x^\ell \in A$ for every $1 \leq \ell \leq \alpha$. 

Let $a \in \{0,\ldots , 2n/(5m)\}$ such that $x^\alpha=0^a1^{8n/(5m)}0^{2n/(5m)-a}$. By Claim~\ref{claim:conformity-NSGAIII}(i) there exists a conform $z \in P_t$ with $z^\alpha=0^{2n/(5m) + a}1^{6n/(5m)}0^{2n/(5m)-a}$. Then performing one-point crossover on $(x,z)$ with cutting point in $\{(\alpha-1)(2n)/m+2n/(5m) + a, \ldots , 2n\alpha/m\}$, and omitting mutation afterwards (happening with probability at least $(6n/(5m)+1)/(4(n+1)) = \Omega(1/m)$) creates a $y$ with $y^\alpha = 0^a1^{8n/(5m)}0^{2n/(5m)-a}$ and $|K(y)| \geq \gamma_t+1$: We have $y^\ell \in A$ for every $\ell \in [\alpha]$ (since $x^\ell \in A$ for every $1 \leq \ell \leq \alpha$), $y^\ell = z^\ell$ for every $\ell>\alpha$ and since $z$ is conform, there are $\gamma_t-\alpha+1$ blocks $j$ in $z$ right to block $\alpha$ with $z^j \in A$. Hence, $(x,z)$ is good.

\textbf{Case 3:} Suppose that there is $k < \alpha$ with $x^k \in B$. We consider two subcases. 

\textbf{Case 3.1:} Suppose that $x^{\alpha-1} \in B$. Let $a \in \{0, \ldots , 4n/(5m)\}$ with $x^{\alpha-1}=0^a1^{6n/(5m)}0^{4n/(5m)-a}$. With Claim~\ref{claim:conformity-NSGAIII}(ii) we may choose a conform $z \in P_t$ with $z^{\alpha-1} = 1^{8n/(5m)}0^{2n/(5m)}$ if $a \leq 2n/(5m)$ and $z^{\alpha-1} = 0^{a-2n/(5m)}1^{8n/(5m)}0^{4n/(5m)-a}$ otherwise. Performing one-point crossover on $(z,x)$ with cutting point in block $\alpha-1$ at $\{2n(\alpha-2)/m+8n/(5m), \ldots , 2n(\alpha-1)/m\}$ if $a \leq 2n/(5m)$ and $\{2n(\alpha-2)/m+a, \ldots ,2n(\alpha-2)/m + 6n/(5m)+a\}$ otherwise, and omitting mutation afterwards (happening with probability $\Omega(1/m)$) generates an individual $y$ with $y^j=z^j$ for $j < \alpha-1$, $y^{\alpha-1}=z^{\alpha-1} \in A$, and $y^j=x^j$ for $j>\alpha-1$. We have that $|K(y)| \geq \gamma_t+1$, since the number of blocks $j < \alpha$ with $y^j = z^j \in A$ is at least by one greater than the number of blocks $j < \alpha$ with $x^j \in A$ (due to the conformity of $z$). Hence, $(z,x)$ is good.

\textbf{Case 3.2:} Suppose that $x^{\alpha-1} \in A$. Let $a \in \{0, \ldots , 2n/(5m)\}$ with $x^{\alpha-1}=0^a1^{8n/(5m)}0^{2n/(5m)-a}$. Consider a conform $z \in P_t$ with $z^{\alpha-1}=x^{\alpha-1}$. Performing one-point crossover on $(z,x)$ with cutting point in $\{2n(\alpha-2)/m, \ldots , 2n(\alpha-1)/m\}$ and omitting mutation afterwards (happening with probability $\Omega(1/m)$) generates an individual $y$ with $y^j=z^j$ for $j \leq \alpha-1$, $y^\alpha=z^\alpha$ and $y^j=x^j$ for $j>\alpha-1$. The number of blocks $j < \alpha$ with $y^j \in A$ is again at least by one greater than the number of blocks $j < \alpha$ with $x^j \in A$ which implies that $(z,x)$ is good. 

All these cases together prove the claim.
\end{proofofclaim}

Now, to create an individual $y$ with $|K(y)|=\gamma_t+1$ in one trial, one can choose a pair of good individuals as parents (prob. at least $(\mu/2)/\mu^2 = 1/(2\mu)$), and create a desired $y$ via one-point crossover and omitting mutation (prob. at least $s=\Omega(p_c/m)$). Hence, the probability to create such a $y$ in one generation is at least $1-(1-s/(2\mu))^{\mu/2} \geq \frac{s/4}{1+s/4} \geq s/8 = \Omega(p_c/m)$. Therefore, the expected number of generations required to complete this phase is $O(n^3/(m(1-p_c)) + m/p_c)$, considering Claim~\ref{claim:conformity-NSGAIII} as a prerequisite for generating suitable conform individuals that serve as parents for crossover.

\textbf{Phase m/2+5:} Cover the whole Pareto front.

The treatment of this phase is similar to Phase~4. Let $V:=\{x \in P_t \mid x^i \in A \text{ for every } i \in [m/2]\}$. We have $V \neq \emptyset$ and every $x \in V$ is Pareto-optimal (by Lemma~\ref{lem:RRMO-size-non-dom-set}(4)). 
Fix a search point $w \in V$ with $w \notin P_t$ and let $d_t:=\min_{x \in P_t} H(x,w)/2$. Note that $0<d_t \leq (2n/5m) \cdot m/2 = n/5$ (since $H(x,w)$ is even and $H(x,w)\leq 4n/(5m)$) and $d_t$ cannot increase. Define for $j \in [n/5]$ the random variable $X_j$ as the number of generations $t$ with $d_t=j$. Then this phase is finished in $\sum_{j=1}^{n/5} X_j$ generations. As in Subphase~A, with probability at least $(1-p_c)/(4en^2)$, the value $d_t$ decreases in one generation. Hence, $X:= \sum_{j=1}^{n/5} X_j$ is stochastically dominated by a sum $Z:= \sum_{j=1}^{n/5} Z_j$ of independently geometrically distributed random variables $Z_j$ with success probability $q:=q_j:=(1-p_c)/(4en^2)$. Hence, we can apply the remaining arguments from Subphase~A adapted to this situation to complete this phase in expected $O(n^3/(1-p_c))$ generations.

Now we upper bound the total runtime. Since Subphases~A,B and~C are passed at most $m/2$ times, the complete Pareto front is covered in expected 
$$O\left(\frac{nm}{1-p_c} + \frac{n^2 \log(n)}{1-p_c} + \frac{n^3}{1-p_c}+\frac{m}{2} \cdot \left(\frac{n^2}{1-p_c} + \frac{n^3}{m(1-p_c)} + \frac{m}{p_c}\right) + \frac{n^3}{1-p_c}\right) = O\left(\frac{n^3}{1-p_c}+ \frac{m^2}{p_c}\right)$$
generations and $\mu$ evaluations per generation provide $O(\mu n^3/(1-p_c) + \mu m^2/p_c)$ fitness evaluations, proving the theorem. 
\end{proof} 
If $p_c \in (0,1)$ is constant we see that in Theorem~\ref{thm:Runtime-Analysis-NSGA-III-mRRMO-onepoint}, the expected number of generations to optimize $m$-\RRRMO is $O(n^3)$ and hence, does not asymptotically depend on $m$ and even not on the population size $\mu$. If additionally $m$ is constant, this runtime is polynomial in terms of fitness evaluations if $\mu$ is also polynomial. \citet{Dang2024} showed for their bi-objective version of $m$-\RRRMO a runtime bound of $O(n^3/(1-p_c)+\mu/(p_cn))$ for \nsga in terms of generations which is by a factor of $\mu/n^4$ worse for population sizes $\mu \in \Omega(n^4)$ and constant $p_c \in (0,1)$. 

\subsection{Analysis of GSEMO}
In the following, we show that a result analogous to Theorem~\ref{thm:Runtime-Analysis-NSGA-III-mRRMO-onepoint} can also be established for the classical GSEMO with one-point crossover (see Algorithm~\ref{alg:gsemo}), where the population consists solely of mutually incomparable solutions, in terms of fitness evaluations. An important difference to \nsgaIII is that the population of GSEMO consists only of mutually incomparable solutions and hence, its size may vary over time. As a consequence, we are able to provide two slightly different runtime results for GSEMO on $m$-\RRRMO, depending on the crossover probability~$p_c$. Another difference is that GSEMO initializes only a single search point chosen uniformly at random, making it more likely that the initial solution has fitness zero. In such a case, we guarantee that a solution~$x$ with nonzero fitness, specifically $0 < \ones{x^j} \leq 6n/(5m)$ for every block $j \in [m/2]$, can be found by first applying the additive drift theorem to establish this condition in one block~$j$. Subsequently, the negative drift theorem ensures that the condition $0 < \ones{x^j} \leq 6n/(5m)$ is maintained over a sufficiently long period, providing enough time for it to be satisfied across all other blocks~$i$. For the latter condition to be satisfied we need $m \leq d\sqrt{n/\log n}$ where $d>0$ is a sufficiently small constant. Our result would also hold for any number~$m$ of objectives, provided that GSEMO is initialized with a search point with non-zero fitness. 

\begin{theorem}
\label{thm:Runtime-Analysis-GSEMO-mRRMO}
Let $m \in \mathbb{N}$ be divisible by $2$ and $n$ be divisible by $5m/2$. Suppose that $m\leq d\sqrt{n/\log(n)}$ for a sufficiently small constant $d$. Then the algorithm GSEMO (Algorithm~\ref{alg:gsemo}) with $p_c \in (0,1)$, finds the Pareto set of $f:=m\text{-\RRRMO}$ in expected
\[
O\left(\frac{n^3(4n/(5m))^{m-1}}{1-p_c}+\frac{mn}{p_c}\right)
\]
fitness evaluations, if $1/p_c = O(\text{poly}(n))$ and $1/(1-p_c) = O(\text{poly}(n))$, and in expected 
\[
O\left(\frac{n^3(4n/(5m))^{m-1}}{1-p_c}+\frac{n(4n/(5m))^{m/2}}{p_c}\right)
\]
fitness evaluations otherwise. 
\end{theorem}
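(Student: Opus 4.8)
The plan is to run the phase decomposition from the proof of Theorem~\ref{thm:Runtime-Analysis-NSGA-III-mRRMO-onepoint}, adapting every estimate to the facts that GSEMO produces one offspring per generation and keeps only mutually incomparable solutions. Two features help. A non-dominated search point is never discarded, so no analogue of Lemma~\ref{lem:Reference-Points} is needed. And by Lemma~\ref{lem:RRMO-size-non-dom-set} the population never exceeds $\mu_{\max}:=(4n/(5m)+1)^{m-1}$ points, so a fixed individual is chosen as a parent in a generation with probability at least $1/|P_t|\ge 1/\mu_{\max}$; consequently every ``mutation only'' step that had success probability $\Theta(1)$ per generation in the \nsgaIII analysis now succeeds with probability at least $(1-p_c)/\mu_{\max}$ times the corresponding per-trial probability, which is exactly what turns the $O(n^3/(1-p_c))$ generation bound of the \nsgaIII phases into the term $O(n^3(4n/(5m))^{m-1}/(1-p_c))$ in fitness evaluations.

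First I would treat the extra initial phase, needed only because GSEMO starts from a single uniformly random point whose fitness can be $0$ (precisely when some block carries more than $6n/(5m)$ ones). While the fitness is $0$ the population is a singleton performing a plain random walk under standard bit mutation, so for $c:=2/m$ and a constant $\lambda$ small enough that $[cn(1-\lambda)/2,\,cn(1+\lambda)/2]\subseteq(0,6n/(5m)]$, Lemma~\ref{lem:additive-drift-main} (applied with $\lambda/2$) shows that each block enters $[cn(1-\lambda/2)/2,\,cn(1+\lambda/2)/2]$ within expected $O(n)$ generations, and then --- since $m\le d\sqrt{n/\log n}$ gives $c\ge\sqrt{\ln n}/(d'\sqrt n)$ for a constant $d'$ --- Lemma~\ref{lem:negative-drift-main} keeps that block inside $[cn(1-\lambda)/2,\,cn(1+\lambda)/2]$ for at least $n^3$ generations with probability $1-O(1/n^3)$. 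At the latest block-entry time $\tau$, which has $\E{\tau}=O(mn)=o(n^3)$, all $m/2$ blocks are therefore still in their (outer) bands with probability $1-o(1)$, whence $x\in L\cup M\cup N$ and $f(x)\ne0$; a failure costs an expected factor $1+o(1)$, so this phase takes $O(mn)$ generations in expectation (and is vacuous if GSEMO is started from a nonzero-fitness point).

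Next I would reproduce Phases~2--4 of the \nsgaIII proof: drive $\max\{\ones{x}\mid x\in P_t,\ f(x)\ne0\}$ up to $3n/5$, then maximise leading/trailing zeros blockwise to reach $\mathcal{Q}=\{x\mid x^j\in B\ \forall j\}=\{x\in N\mid K(x)=\emptyset\}$, and then find all of $\mathcal{Q}$ by two-bit mutations, now with per-generation success probability $(1-p_c)/(en^2\mu_{\max})$ and with concentration from Theorem~\ref{thm:Doerr-dominance}; this costs $O(n^3\mu_{\max}/(1-p_c))$ generations and yields the first claimed term. The core is the recombination phases, which raise $\gamma_t:=\max\{|K(x)|\mid x\in P_t\cap N\}$ by one at a time: at each level one first creates the necessary conform individuals by two-bit mutations (the GSEMO counterpart of Claim~\ref{claim:conformity-NSGAIII}, again absorbed in the first term via the population-size bound) and then applies one-point crossover to a good pair as in Claim~\ref{claim:good-pairs-NSGAIII}. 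Here the accounting bifurcates. In general, once the conform individuals are present every individual lies in a good pair, a good pair is picked with probability at least $1/(2|P_t|)$, and the target offspring is produced with probability $\Omega(p_c/(m|P_t|))$; bounding $|P_t|$ by $|\mathcal{Q}|=(4n/(5m)+1)^{m/2}$ gives $O(m(4n/(5m))^{m/2}/p_c)=O(n(4n/(5m))^{m/2}/p_c)$ for the recombination phases. If in addition $1/p_c=O(\text{poly}(n))$ and $1/(1-p_c)=O(\text{poly}(n))$, one argues more sharply that at level~$0$ each point of $\mathcal{Q}$ has $\Theta((4n/(5m))^{m/2-1})$ good partners, so a good pair is picked with probability $\Theta(m/n)$ and the target offspring with probability $\Omega(p_c/n)$, while at every higher level the population stays of size $O(n/m)$ and the same $\Omega(p_c/n)$ estimate applies; each of the $m/2$ levels then costs $O(n/p_c)$ generations, for a total of $O(mn/p_c)$. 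A final coverage phase (as in the last \nsgaIII phase) completes the Pareto front in $O(n^3\mu_{\max}/(1-p_c))$ further generations, and summing, with one evaluation per generation, gives both bounds.

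I expect the main obstacle to be the initial phase together with the population-size bookkeeping in the recombination phases. For the former one must control $m/2$ essentially independent random walks so that they are \emph{simultaneously} inside prescribed bands at one common generation; this is exactly where the sharp statement of Lemma~\ref{lem:negative-drift-main}, and hence the restriction $m\le d\sqrt{n/\log n}$, is forced, and one must check both that $[cn(1-\lambda)/2,cn(1+\lambda)/2]\subseteq(0,6n/(5m)]$ and that leaving this band is exponentially unlikely over the relevant $o(n^3)$ generations. For the latter one must verify that while conform individuals are being produced --- especially in the polynomial regime --- the population does not inflate towards size $\Theta(|N_{\gamma_t}|)$ in a way that would spoil the $O(n/p_c)$-per-level estimate, and that directly after Phase~4 the population really equals $\mathcal{Q}$, which is what legitimises the dense good-pair count at level~$0$.
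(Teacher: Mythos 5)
Your phase decomposition (initial drift/negative-drift phase, then the analogues of Phases~2--4, then the recombination levels, then coverage) is the same as the paper's, and Phases~1--3 and the final coverage phase are handled essentially as in the paper, with the correct use of Lemma~\ref{lem:additive-drift-main} and Lemma~\ref{lem:negative-drift-main} and the correct translation of per-trial probabilities into per-iteration probabilities via the incomparable-set bounds. The gap is in the recombination levels, in two places.

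First, your claim that ``at every higher level the population stays of size $O(n/m)$'' is false: once the algorithm has found all of $S_I$ for $I=K(z)$ with $|I|=\gamma_t$, the population has size $(2n/(5m)+1)^{\gamma_t}(4n/(5m)+1)^{m/2-\gamma_t}$, which is exponential in $m$ (already $\Theta(n^2)$ for $m=4$, $\gamma_t=1$). The per-iteration success probability $\Omega(p_c/n)$ at level $\gamma_t>0$ therefore cannot come from a small population; it must come from the same density argument you use at level~$0$: since $P_t=S_I$, at least half of the individuals $y_1$ have a block $i\notin I$ in a given ``left'' position, and each such $y_1$ has $(4n/(5m)+1)^{m/2-\gamma_t-1}(2n/(5m)+1)^{\gamma_t}=|S_I|/(4n/(5m)+1)$ suitable partners $y_2$, so a suitable pair is selected with probability $\Omega(m/n)$ and a correct cut point has probability $\Omega(1/m)$, giving $\Omega(p_c/n)$ independently of $\gamma_t$. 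This is what the paper does; without it your $O(mn/p_c)$ bound in the polynomial regime is unsupported. (The paper also does not need the conform-individual/good-pair machinery of the NSGA-III proof here, precisely because $P_t=S_I$ holds automatically after the Subphase-A analogue.)

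Second, both your population-size bound $(4n/(5m)+1)^{m/2}$ and the good-partner density rely on the population at level $\gamma_t$ being confined to a \emph{single} $K$-set $I$, i.e.\ $P_t\subseteq S_I$. This can fail: a mutation can turn some block $j\in I$ from $A$ to $B$ and some $j'\notin I$ from $B$ to $A$, producing $y$ with $|K(y)|=\gamma_t$ but $K(y)\neq I$, which is incomparable to everything in $S_I$ and hence accepted; without control of this event, Lemma~\ref{lem:RRMO-size-non-dom-set} only gives $|P_t|\le(4n/(5m)+1)^{m-1}$, which would inflate your general-case recombination cost to order $m^2(4n/(5m))^{m-1}/p_c$, exceeding the claimed $n(4n/(5m))^{m/2}/p_c$. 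You flag this as an obstacle but do not resolve it. The paper resolves it by a dedicated failure analysis: such an event requires flipping $\ge 2n/(5m)$ specific bits and so has probability $n^{-\Omega(\sqrt n)}$ per iteration, and if it nevertheless occurs one can recover by recombining two individuals with different $K$-sets (at cost $O(nS_{\gamma_t+4}/p_c)$); it is this failure/recovery accounting, not a cruder population bound, that produces the case distinction between polynomial and non-polynomial $1/p_c$, $1/(1-p_c)$ in the theorem.
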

\begin{proof}
Note that the runtime in terms of fitness evaluations corresponds to the runtime in terms of iterations. As in the previous proof, we use the method of typical runs and divide the optimization procedure into several phases. Note that GSEMO prevents phenotypic duplicates: For every fitness vector $v$, the population $P_t$ contains at most one individual $x$ with $f(x)=v$. Additionally, it preserves non-dominated solutions: For every non-dominated $x \in P_t$ there exists $x' \in P_{t+1}$ that weakly dominates $x$.

\textbf{Phase 1:} Create $x$ with $f(x) \neq 0$.

During this phase, the population only contains a single search point, and hence, crossover, if executed, has no effect (as it will recombine two identical bit strings). By a classical Chernoff bound the probability that the initial search point $x$ fulfills $\ones{x^j} > 6n/(5m)$ in block $j$ is $e^{-\Omega(n/m)}$ since the expected number of ones is $n/m$ in block $j$ after initialization. By a union bound over all blocks, we have that the probability that there is a block $j$ with $\ones{x^j}>6n/(5m)$ is at most $m/2 \cdot e^{-\Omega(n/m)} = e^{-\Omega(n/m)}$ due to $m = O(\sqrt{n/\log(n)})$. Suppose that this happens. Denote by $x$ the current solution. For a block $j \in [m/2]$, denote by $X_t^j$ the number of ones in $x^j$ in iteration $t$ and let $c:=2/m \geq 2\sqrt{\log(n)}/(\sqrt{n}d)$. Then with Lemma~\ref{lem:additive-drift-main} applied on $\lambda = 1/10$ we obtain that $X^j_t \leq 11n/(10m) = cn \cdot (1+1/10)/2$ in expected $O(n)$ iterations. Suppose that $X^j_t \leq 11n/(10m)$. With Lemma~\ref{lem:negative-drift-main} applied on $\lambda = 1/5$ we see that, as long as the fitness is zero, the probability is $O(1/n^3)$ to obtain $X_t^j > 6n/(5m) = cn(1+1/5)/2$ within the next $n^3$ iterations. 
Now we estimate the expected time until $X_t^j \leq 6n/(5m)$ for every $j \in [m/2]$. In iteration $t$ we call a block $j$ \emph{marked} if there is an iteration $t' < t$ such that $\ones{X_{t'+1}^j} \leq 11n/(5m)$. Then all blocks are marked in expected $O(m n)$ iterations. Denote the event that, after $O(m n)$ iterations, there is a block $j \in [m/2]$ which is not marked, or is marked and satisfies $X_t^j > 6n/(5m)$, as a \emph{failure}. If no failure occurs, then $X_t^j \leq 6n/(5m)$ for every $j \in [m/2]$ after $O(m n)$ iterations. A failure occurs with probability $1-\Omega(1)$: Firstly, by Markov's inequality, all blocks are marked with probability $1-\Omega(1)$ after $O(m n)$ iterations and secondly, by taking a union bound over all blocks, we see that the probability is $O(m/n^3)=o(1)$ that a marked block $j$ satisfies $\ones{x^j} > 6n/(5m)$ after $O(m n)$ iterations. 

If such a failure occurs, we repeat the arguments including both drift theorems above. Since the expected number of failures is $O(1)$, we see that, after expected $(1+O(1))O(mn) = O(mn)$ iterations, all blocks $j \in [m/2]$ satisfy $X_t^j \leq 6n/(5m)$. Then if $x \neq 0^n$ we see that $f(x) \neq 0$ and this phase is finished. So we pessimistically assume $x=0^n$ the first time when $\ones{x^j} \leq 6n/(5m)$ for each $j \in [m/2]$. But then to create $y$ with $f(y) \neq 0$ it suffices to flip exactly one bit which happens with probability $(1-1/n)^{n-1} \geq 1/e$. Hence, in expected $O(m n)$ iterations, one individual $x$ with $f(x) \neq 0$ is created if the initialized $z$ satisfies $\ones{z^j}>6n/(5m)$ for a block $j \in [m/2]$. This also includes a possible repetition of all the above arguments if for the mutant $y$ there is $j \in [m/2]$ with $\ones{y^j} > 6n/(5m)$ happening with probability $1-\Omega(1)$. Hence, in expected $(1-e^{-\Omega(n/m)})+e^{-\Omega(n/m)} \cdot O(m n) = 1+o(1)$ iterations, Phase~1 is finished.

\textbf{Phase 2:} Create $x$ with $\ones{x^j}=6n/(5m)$ for all $j \in [m/2]$.

As in Theorem~\ref{thm:Runtime-Analysis-NSGA-III-mRRMO-onepoint}, let $O_t:=\max\{\ones{x} \mid x \in P_t\}$. Then $1 \leq O_t < 3n/5$ and the maximum number of mutually incomparable solutions is at most $S_2:=(6n/(5m)+1)^{m/2}$ during this phase (by Lemma~\ref{lem:RRMO-size-non-dom-set}(2)). To increase $O_t$ it suffices to choose a parent $z \in P_t$ with $\ones{z}=O_t$ (prob. at least $1/|P_t| \geq 1/S_2$), omit crossover and flip one of $2n/m-\ones{x^j} \geq 4n/(5m)$ zero bits in a block $j$ with $\ones{x^j} < 6n/(5m)$ and not changing the remaining bits (prob. at least $4/(5me)$). Hence, in one iteration, the probability is at least $4(1-p_c)/(5meS_2)$ to increase $O_t$ and hence, in total, the number of required itertions to finish this phase is at least
\[
\frac{(3n/5-1) \cdot 5me \cdot S_2}{4 (1-p_c)} = O\left(\frac{nmS_2}{1-p_c}\right).
\]

\textbf{Phase~3:} Create $x$ with $x^j \in B$ for each $j \in [m/2]$.\\
Let $\alpha_t \in \{0, \ldots ,2n/5-1\}$ as in the proof of Theorem~\ref{thm:Runtime-Analysis-NSGA-III-mRRMO-onepoint}. The maximum number of mutually incomparable solutions is at most $S_3:=(4n/(5m)+1)^{m-1}$ (by Lemma~\ref{lem:RRMO-size-non-dom-set}) and a solution with corresponding value $\alpha_t$ is non-dominated. This implies that $\alpha_t$ cannot decrease. Let $\sigma_t=2n/5-\alpha_t$. Similar as in Phase~3 in the proof of Theorem~\ref{thm:Runtime-Analysis-NSGA-III-mRRMO-onepoint}, $\sigma_t$ can be decreased with probability at least
$$\frac{(1-p_c)\sigma_t}{en^2|P_t|} \geq \frac{(1-p_c)\sigma_t}{e n^2 S_3}$$
since an individual with corresponding value $\sigma_t$ is chosen with probability at least $1/|P_t| \geq 1/S_3$ in a mutation only iteration. Hence, the expected number of iterations to obtain $\sigma_t=0$ is at most  
\[
\sum_{j=1}^{2n/5} \frac{en^2S_3}{(1-p_c) j} = O\left(\frac{n^2 \log(n) S_3}{1-p_c}\right).
\]

\textbf{Phase $\gamma_t$+4:} Create an individual $x$ with $|K(x)|=\gamma_t+1$.

Note that Phase~4 starts when $\gamma_t=0$. If Phase $m/2-1+4=m/2+3$ is finished, a Pareto optimal search point is found since $\gamma_t+1=m/2$. Let $z$ be the first point generated in an iteration $t$ with $|K(z)|=\gamma_t$. If $\gamma_t=0$ then iteration $t$ coincides with the time when Phase~3 is finished. In this case, $|P_t| \leq (4n/(5m)+1)^{m-1}$ by Lemma~\ref{lem:RRMO-properties}. If $\gamma_t>0$ then $P_t$ consists only of this individual $z$ at time $t$ since by Lemma~\ref{lem:RRMO-properties}(3) this new created $z$ dominates all other individuals previously in $P_t$. 
Denote the event that 
an individual $y \in N$ with $K(y) \neq K(z)$ is created as a \emph{failure}. Note that a failure may only occur if $0 < |K(z)| < m/2$. As long as no failure occurs, we have that $P_t \subset \{x \in N \mid K(x)=K(z)\}$ and therefore $|P_t| = (2n/(5m)+1)^{\gamma_t}(4n/(5m)+1)^{m/2-\gamma_t}$ (since $\gamma_t$ distinct blocks $j$ satisfy $x^j \in A$, the remaining blocks $i$ satisfy $x^i \in B$, and we see that $|A| = 2n/(5m)+1$ and $|B|=4n/(5m)+1$). In the following claim we estimate the probability $p_{\text{fail}}$ that such a failure occurs. 
\begin{claim}
    The probability $p_{\text{fail}}$ that such a failure occurs is at most $n^{-\Omega(\sqrt{n})}$.
\end{claim}

\begin{proofofclaim}
Mutating $x^j \in A$ into $y^j \in B$ happens with probability at most 
$$(4n/(5m)+1) \cdot n^{-2n/(5m)} \leq n \cdot n^{-2n/(5m)} = n^{-2n/(5m)+1}$$
since $H(x^j,y^j) \geq 2n/(5m)$ and $|B|=4n/(5m)+1$. Mutating $x^j \in B$ into $y^j \in A$ happens with probability at most $(2n/(5m)+1) \cdot n^{-2n/(5m)} \leq n^{-2n/(5m)+1}$ since $|A| = 2n/(5m)+1$ and $H(x^j,y^j) \geq 2n/(5m)$. For a failure to happen, at least one of the two events above must occur, regardless of whether crossover is executed. This is because applying crossover on $z_1,z_2$ with $K(z_1)=K(z_2)$ creates an outcome $y$ with $K(y) \subset K(z_1)$ or $K(z_1) \subset K(y)$: If the cutting point is in block $j$, then $K(z_1) \setminus \{j\} = K(y) \setminus \{j\}$ since all blocks to the left of~$j$ in $z_1$ and all blocks to the right of~$j$ in $z_2$ are passed to $y$ and we have $K(z_1) = K(z_2)$. If $j \in K(z_1)$ then $K(y) \subset K(z_1)$ and if $j \notin K(z_1)$ then $K(z_1) \subset K(y)$. By a union bound over $m/2$ blocks, the probability that a failure occurs is at most $m/2 \cdot n^{-2n/(5m)+1} \leq n^{-2n/(5m)+2} = n^{-\Omega(\sqrt{n})}$ since $m=O(\sqrt{n/\log(n)})$, proving the claim.
\end{proofofclaim}

If no failure occurs we have for $S_4:=(4n/(5m)+1)^{m-1}$ (refering to Phase~4) and $S_{j+4}=(2n/(5m)+1)^j(4n/(5m)+1)^{m/2-j}$ for $\gamma_t=j>0$ (refering to Phase~$\gamma_t+4$ for $\gamma_t>0$) that $|P_t| \leq S_{\gamma_t+4}$.

\textbf{Subphase A}: Let $I:=K(z)$. Find $S_I:=\{x \mid x^j \in B \text{ for all } j \in [m/2]\setminus I \text{ and } x^j \in A \text{ for all } j \in I\}$ or a failure occurs.

We may assume that no failure occurs, as the occurrence of a failure only accelerates the completion of the phase. For a specific search point $w \in S_I$ not already found we first upper bound the probability by $e^{-\Omega(n)}$ that $w$ has not been created after $7en^3/(5S_{\gamma_t+4}(1-p_c))$ iterations. Let $D_t:=P_t \cap S_I$ and $d_t:= \min_{x \in D_t} H(x,w)/2$. Note that $D_t \neq \emptyset$ 
and $0 < d_t \leq 2n\gamma_t/(5m) + 4n(m/2-\gamma_t)/(5m) \leq 2n/5$ and since a solution $x \in D_t$ is non-dominated, $d_t$ cannot increase (by Lemma~\ref{lem:Reference-Points}). We created $w$ if $d_t=0$. For $1 \leq \beta \leq 2n/5$, define the random variable $X_\beta$ as the number of iterations $t$ with $d_t=\beta$. Then the total number of iterations required to find $w$ is at most $X=\sum_{\beta=1}^{2n/5} X_\beta$. Fix $y \in D_t$ with $H(w,y)/2=d_t \neq 0$. 
To decrease $d_t$, it suffices to choose $y$ as a parent (prob. at least $1/S_{4+\gamma_t}$), omit crossover (prob. $1-p_c$) and flip two specific bits during mutation (prob. at least $1/(en^2)$) in order to shift a block of ones in $y$ in that direction of the corresponding block of $w$. 
Hence, the probability to decrease $d_t$ in one iteration is at least $(1-p_c)/(en^2S_{4+\gamma_t})$. Thus, the random variable $X$ is stochastically dominated by an independent sum $Z := \sum_{\beta=1}^{2n/5} Z_\beta$ of geometrically distributed random variables $Z_\beta$ with success probability $q:=q_\beta:=(1-p_c)/(en^2S_{\gamma_t+4})$. Note that $\expect{Z}=2en^3S_{\gamma_t+4}/(5(1-p_c))$. Now we use Theorem~\ref{thm:Doerr-dominance}(1): For $d:=\sum_{\beta=1}^{2n/5} 1/q_\beta^2 = 2e^2n^5S_{\gamma_t+4}^2/(5(1-p_c)^2)$ and $\lambda \geq 0$ we obtain
\[
\Pr(Z \geq \expect{Z} + \lambda) \leq \exp\left(-\frac{1}{4} \min\left\{\frac{\lambda^2}{d}, \lambda q \right\}\right).
\]
For $\lambda = en^3S_{\gamma_t+4}/(1-p_c)$ we obtain
$$\Pr\left(X \geq \frac{7en^3S_{\gamma_t+4}}{5(1-p_c)}\right) \leq \Pr\left(Z \geq \frac{7en^3S_{\gamma_t+4}}{5(1-p_c)}\right) \leq \exp\left(-\frac{1}{4} \min\left\{\frac{5n}{2}, n\right\}\right) = \exp\left(-\frac{n}{4}\right).$$
As in Phase~4 of the proof of Theorem~\ref{thm:Runtime-Analysis-NSGA-III-mRRMO-onepoint}, we see by a union bound over all possible~$w$, that the probability that $S_I$ is completely found after $7en^3S_{\gamma_t+4}/(5(1-p_c))$ iterations is at most $e^{-\Omega(n)}=o(1)$. If this does not happen, we can repeat the argument. Thus the expected number of iterations to finish this phase is at most $(1+o(1))(7en^3S_{\gamma_t+4}/(5(1-p_c))) = O(n^3S_{\gamma_t+4}/(1-p_c))$.

\textbf{Subphase B}: Create an individual $x$ with $|K(x)|=\gamma_t+1$ or a failure occurs.

As in the subphase before, we may assume that no failure occurs. Let $i \in [m/2] \setminus K(z)$. To create such an individual $x$ in one iteration one has to choose two $y_1,y_2 \in P_t$ as parents such that $y_1^i=0^a1^{6n/(5m)}0^{4n/(5m)-a}$ and $y_2^i=0^{2n/(5m)+a}1^{6n/(5m)}0^{2n/(5m)-a}$ for an $a \in \{0, \ldots , 2n/(5m)\}$, perform one-point crossover with cutting point in $\{(i-1)(2n)/m+2n/(5m) + a, \ldots , (i-1)(2n)/m+6n/(5m)+a\}$ (i.e. in block $i$ at position $b \in \{2n/(5m) + a, \ldots , 6n/(5m)+a\}$ to create $y^i=0^a1^{8n/(5m)}0^{2n/(5m)-a}$, and then omitting mutation. Note that $K(y_1) \cup \{i\} \subset K(y)$ (since either $y^j=y_1^j$ or $y^j=y_2^j$ for every $j \in K(y_1)$) and hence, $\vert{K(y)}\vert \geq \gamma_t+1$. We estimate the probability that this sequence of events occurs. Since no failure occurs, all individuals $x \in P_t$ have the same value $K(x)$. Since $S_I$ is completely found, we even have $P_t=S_I$, and hence, for at least half of the individuals $y_1 \in P_t$ we find $a \in \{0, \ldots , 2n/(5m)\}$ such that $y_1^i=0^a1^{6n/(5m)}0^{4n/(5m)-a}$. Further, for a fixed $a$, there are $(4n/(5m)+1)^{m/2-\gamma_t-1}(2n/(5m)+1)^{\gamma_t}$ individuals $y_2$ in $P_t$ with $y_2^i = 0^{2n/(5m)+a}1^{6n/(5m)}0^{2n/(5m)-a}$. Hence, the probability is at least
$$\frac{(4n/(5m)+1)^{m/2-\gamma_t-1}(2n/(5m)+1)^{\gamma_t} }{2 \cdot (4n/(5m)+1)^{m/2-\gamma_t} (2n/(5m)+1)^{\gamma_t}} = \frac{1}{2 \cdot (4n/(5m)+1)}$$    
to choose suitable parents $y_1,y_2$. Further, the probability is at least 
$$\frac{4n/(5m)+1}{n+1} \cdot \left(1-\frac{1}{n}\right)^n  \geq \frac{4n/(5m)+1}{4(n+1)}$$
that a correct cutting point for crossover is found and after crossover, no bit is flipped during mutation. Hence, the probability in total to create an individual $y$ with $|K(y)|=\gamma_t+1$ is at least 
$$\frac{p_c}{2 \cdot (4n/(5m)+1)} \cdot \frac{4n/(5m)+1}{4(n+1)} = \frac{p_c}{8 \cdot (n+1)}$$
and consequently, this subphase is finished in expected $O(n/p_c)$ iterations. 

In total, Subphase A and B are completed in expected 
$$O\left(\frac{n^3S_{\gamma_t+4}}{1-p_c}+\frac{n}{p_c}\right)$$
iterations and the probability that a failure occurs during this time is $O((n^3S_{\gamma_t+4}/(1-p_c)+n/p_c)n^{-\Omega(\sqrt{n})})$ and a failure may only occur if $\gamma_t>0$, but if this happens, we may choose individuals $(y_1,y_2)$ with $K(y_1) \neq K(y_2)$ where $i:=\min(K(y_1) \setminus K(y_2)) < \min(K(y_2) \setminus K(y_1))$, perform one-point crossover with cutting point $2ni/m$ (prob. $1/(n+1)$) and do not flip any bit during mutation (prob. at least $1/4$). Then the outcome $y$ satisfies $K(y)>K(y_1)$ since all blocks $j$  with $j \leq i$ from $y_1$ are passed to $y$ (which all satisfy $y_1^j \in A$) and the remaining ones are taken from $y_2$. Note that there are $\gamma_t-(i-1) = \gamma_t-i+1$ many blocks $j$ in $y_2$ right to $i$ with $y_2^j \in A$ (compare also with Figure~\ref{fig:Cross2}). 

\begin{figure}[t]
    \centering
    \includegraphics[scale=1]{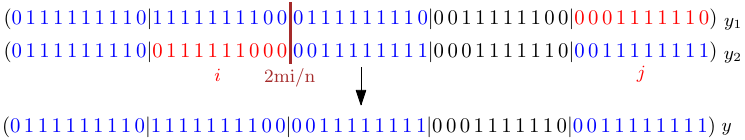}
    \caption{How crossover can increase $\gamma_t$ in case of a failure for $n=50$ and $m=10$. The crossover point is marked in brown, with blocks $i$ and $j$ highlighted in red. Strings $y_1^k ,y_2^k, y^k \in A$ are shown in blue.}
    \label{fig:Cross2} 
\end{figure}

In the following we estimate the probability to select such a pair. Since, for a given $x \in P_t$, there are at least $\max\{1,|P_t|-S_{\gamma_t+4}\}$ many $y$ with $K(y) \neq K(x)$ (since the maximum number of different individuals with the same $K(x)$ value is $S_{\gamma_t+4}$), and either $\min(K(x) \setminus K(y)) < \min(K(y) \setminus K(x))$ or $\min(K(y) \setminus K(x)) < \min(K(x) \setminus K(y))$, either $(x,y)$ or $(y,x)$ can be chosen for the above crossover. Hence, such a pair is selected with probability at least
$$\frac{|P_t| \cdot \max\{1,|P_t|-S_{\gamma_t+4}\}}{2|P_t|^2}  = \frac{\max\{1,|P_t|-S_{\gamma_t+4}\}}{2|P_t|} \geq \frac{1}{2S_{\gamma_t+4}+2}.$$
The last inequality obviously holds if $|P_t| \leq S_{\gamma_t+4}+1$. If $|P_t| > S_{\gamma_t+4}+1$ then $(|P_t| - S_{\gamma_t+4})/|P_t| \geq 1/(S_{\gamma_t+4}+1)$. 
Hence, in case of a failure, a desired $y$ is created with probability at least
\begin{align*}
\label{eq:probability}
\frac{p_c}{4(n+1)(2S_{\gamma_t+4}+2)}.
\end{align*}

Now we compute the expected number of iterations in total to finish Phase~$\gamma_t$+4. To this end, we consider two cases based on the crossover probability $p_c$. 

\textbf{Case 1:} $1/p_c \neq O(\text{poly}(n))$ or $1/(1-p_c) \neq O(\text{poly}(n))$. We can estimate the expected number of iterations as the sum of the expected iterations required to pass through Subphase~A and~B, and expected iterations required to create $y$ by recombining two suitable parents $y_1,y_2$ with $K(y_1) \neq K(y_2)$. These are at most
\[
O\left(\frac{n^3S_{\gamma_t+4}}{1-p_c}+\frac{n}{p_c} + \frac{4(n+1)(S_{\gamma_t+4}+1)}{p_c}\right) = O\left(\frac{n^3S_{\gamma_t+4}}{1-p_c}+\frac{nS_{\gamma_t+4}}{p_c}\right)
\]
iterations in expectation if $\gamma_t>0$. If $\gamma_t=0$ no failure can occur and we obtain $O(\frac{n^3S_{\gamma_t+4}}{1-p_c}+\frac{n}{p_c})$ iterations in expectation by the calculation above.

\textbf{Case 2:} $1/p_c = O(\text{poly}(n))$ and $1/(1-p_c) = O(\text{poly}(n))$. We have that 
$$O\left(\frac{n^3S_{\gamma_t+4}}{1-p_c}+\frac{n}{p_c}\right) = O(S_{\gamma_t+4} \cdot \text{poly}(n))$$
and hence, by a union bound, the probability that no failure occurs within $O(n^3S_{\gamma_t+4}/(1-p_c)+n/p_c)$ iterations is still at most 
$$O(S_{\gamma_t+4} \cdot \text{poly}(n)) \cdot n^{-\Omega(\sqrt{n})} = n^{-\Omega(\sqrt{n})}$$
since $\log(S_{\gamma_t+4}) \leq (m/2+1) \log(4n/(5m)+1) \leq d \sqrt{n \log(n)}$ where $d$ is a constant (since $m \leq d\sqrt{n/\log(n)}$). Hence, this phase is finished in
\begin{align*}
& O\left(\frac{n^3S_{\gamma_t+4}}{1-p_c}+\frac{n}{p_c} + \frac{4(n+1)(S_{\gamma_t+4}+1)}{p_c} \cdot n^{-\Omega(n)}\right)= O\left(\frac{n^3S_{\gamma_t+4}}{1-p_c}+\frac{n}{p_c}\right)
\end{align*}
iterations in expectation.

\textbf{Phase m/2+4:} Cover the whole Pareto front.\\
The treatment of this phase is similar to Phase~4. Let $V:=\{x \in P_t \mid x^j \in A \text{ for every } j \in [m/2]\}$. We have $V \neq \emptyset$ and every $x \in V$ is Pareto-optimal. Further, a set $S$ of mutually incomparable solutions satisfies $|S| \leq |V| = (2n/(5m)+1)^{m/2}=:S_{m/2+4}$. Fix a search point $w \in V$ with $w \notin P_t$ and let $d_t:=\min_{x \in P_t} H(x,w)/2$. Note that $0<d_t \leq n/5$ and $d_t$ cannot increase. Define for $j \in [n/5]$ the random variable $X_j$ as the number of iterations $t$ with $d_t=j$. Then this phase is finished in $\sum_{j=1}^{n/5} X_j$ iterations. As in Subphase A of Phase $\gamma_t+4$, with probability at least $(1-p_c)/(en^2S_{m/2+4})$, the value $d_t$ decreases in one iteration. Hence, $X:= \sum_{j=1}^{n/5} X_j$ is stochastically dominated by a sum $Z:= \sum_{j=1}^{n/5} Z_j$ of independently geometrically distributed random variables $Z_j$ with success probability $q:=q_j:=(1-p_c)/(en^2S_{m/2+4})$. Hence, we can apply the remaining arguments from Subphase A in Phase~$\gamma_t$+4 adapted to this situation to complete this phase in expected $O(n^3S_{m/2+4}/(1-p_c))$ iterations.

Finally, we upper bound the total runtime. Recall that $S_4=(4n/(5m)+1)^{m-1}$ and $S_{4+\ell} = (2n/(5m)+1)^\ell(4n/(5m)+1)^{m/2-\ell}$ for $\ell>0$. We obtain
\begin{align*}
\sum_{\ell=0}^{m/2} S_{4+\ell} &= S_4 + \sum_{\ell=1}^{m/2} S_{4+\ell} = \left(\frac{4n}{5m}+1\right)^{m-1}+ \left(\frac{4n}{5m}+1\right)^{m/2-1} \left(\frac{2n}{5m}+1\right)\sum_{\ell=1}^{m/2} \left(\frac{\frac{2n}{5m}+1}{\frac{4n}{5m}+1}\right)^{\ell-1} \\
&\leq \left(\frac{4n}{5m}+1\right)^{m-1}+ \left(\frac{4n}{5m}+1\right)^{m/2-1} \left(\frac{2n}{5m}+1\right)\sum_{\ell=0}^{\infty} \left(\frac{2}{3}\right)^\ell \leq S_4+\frac{S_5}{1-2/3} = S_4 + 3 S_5 \leq 4S_4,
\end{align*}
since the function $]0,\infty[ \to \mathbb{R}, x \mapsto (x+1)/(2x+1) = 1-x/(2x+1) = 1-1/(2+1/x),$ is strictly monotone decreasing and $2n/(5m) \geq 1$. Further we used the geometric series $\sum_{\ell=0}^{\infty} p^\ell = 1/(1-p)$ for $0<p<1$. From the above we also see that $\sum_{\ell=0}^{m/2-1} S_{4+\ell} \leq S_4 + 3S_5$ and that $\sum_{\ell=1}^{m/2-1} S_{4+\ell} \leq 3S_5$. If $1/p_c \neq O(\text{poly}(n))$ or $1/(1-p_c) \neq O(\text{poly}(n))$, the complete Pareto front is covered in expected 
\begin{align*}
 O&\left(\frac{nmS_2}{1-p_c} + \frac{n^2 \log(n) S_3}{1-p_c} + \sum_{\ell=0}^{m/2-1} \left(\frac{n^3S_{\ell+4}}{1-p_c}\right) + \frac{n}{p_c} + \sum_{\ell=1}^{m/2-1} \left(\frac{nS_{\ell+4}}{p_c}\right) + \frac{n^3S_{m/2+4}}{1-p_c}\right)\\
& = O\left(\frac{nmS_2}{1-p_c} + \frac{n^2 \log(n) S_3}{1-p_c} + \sum_{\ell=0}^{m/2} \left(\frac{n^3S_{\ell+4}}{1-p_c}\right) + \sum_{\ell=1}^{m/2-1} \frac{nS_{\ell+4}}{p_c}\right)= O\left(\frac{n^3S_4}{1-p_c}+ \frac{nS_5}{p_c}\right)\\
&= O\left(\frac{n^3(4n/(5m)+1)^{m-1}}{1-p_c} + \frac{n(4n/(5m)+1)^{m/2})}{p_c}\right) = O\left(\frac{n^3(4n/(5m))^{m-1}}{1-p_c} + \frac{n(4n/(5m))^{m/2})}{p_c}\right)
\end{align*}
iterations. If $1/p_c = O(\text{poly}(n))$ and $1/(1-p_c) = O(\text{poly}(n))$, the complete Pareto front is covered in expected
\begin{align*}
O&\left(\frac{nmS_2}{1-p_c} + \frac{n^2 \log(n) S_3}{1-p_c} + \sum_{\ell=0}^{m/2-1} \left(\frac{n^3S_{\ell+4}}{1-p_c} + \frac{n}{p_c}\right) + \frac{n^3S_{m/2+4}}{1-p_c}\right) = O\left(\frac{n^3S_4}{1-p_c}+ \frac{nm}{p_c}\right) \\
& = O\left(\frac{n^3(4n/(5m)+1)^{m-1}}{1-p_c} + \frac{nm}{p_c}\right) = O\left(\frac{n^3(4n/(5m))^{m-1}}{1-p_c} + \frac{mn}{p_c}\right) 
\end{align*}
iterations. This finishes the proof.
\end{proof}

Note that GSEMO also requires a polynomial number of fitness evaluations in expectation to compute the full Pareto front when both $p_c$ and $m$ are constant. Notice that the derived upper bound for GSEMO can be smaller than that of \nsgaIII on \RRRMO, even for a population size of $\mu = \left( 4n/(5m) + 1 \right)^{m-1}$ used in \nsgaIII. Specifically, if $m > 2$ is constant and $p_c = O(m/(n^2( 4n/(5m)+1)^{m-1})) = O(1/n^{m+1})$, while $1/p_c = O(\text{poly}(n))$, then the expected number of fitness evaluations for \nsgaIII becomes $O(\mu / p_c)$, whereas for GSEMO it is $O(n / p_c)$. Thus, GSEMO achieves an upper bound which is by a factor of $\mu / n = \Omega(n^{m-2})$ smaller than that of \nsgaIII. This demonstrates the potential advantage of using a dynamic population size, particularly in optimization processes involving multiple distinct phases. We are not aware of any such result in the literature. 

\section{Difficulty of GSEMO and \nsgaIII on \RRRMO Without Crossover}
Finally, we point out that GSEMO and \nsgaIII without crossover on $m$-\RRRMO for a wide range of the number $m$ of objectives becomes extremely slow. This even holds for finding the first Pareto-optimal search point. The results in this section extend the corresponding results from~\cite{Opris2025} to a superconstant parameter regime of the number of objectives $m$.

\begin{theorem}\label{thm:NSGA-III-pc-zero}
Suppose that $m = o(n/\log(n))$, $m$ is divisible by $2$ and $n$ is divisible by $5m/2$. \nsgaIII (Algorithm~\ref{alg:nsga-iii}) on $m$-\RRRMO with $p_c=0$, any choice of $\refer$, and $\mu = 2^{o(n/m)}$ needs at least $n^{\Omega(n/m)}$ generations in expectation to create any Pareto-optimal search point of $m$-\RRRMO.
\end{theorem}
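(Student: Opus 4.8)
The plan is a restart argument built around one structural invariant of \nsgaIII's population. Fix $T:=n^{n/(3m)}$; it suffices to show that with probability $1-o(1)$ no Pareto-optimal point is created during the first $T$ generations, since then $\expect{\text{runtime}}\geq T(1-o(1))=n^{\Omega(n/m)}$. To control the start, a Chernoff bound gives $\Pr(\ones{x^j}>6n/(5m))\leq e^{-\Omega(n/m)}$ for a fixed block $x^j$ of a uniform $x$ (the mean is $n/m$), and a union bound over the $\mu\cdot m/2=2^{o(n/m)}\cdot\poly(n)$ blocks of the $\mu$ initial individuals is $o(1)$ precisely because $m=o(n/\log n)$ makes $n/m=\omega(\log n)$; hence, except with probability $o(1)$, every $x\in P_0$ has $\ones{x^j}\leq 6n/(5m)$ for all $j$ and $x\neq 0^n$, so $f(x)\neq 0$ and no block of $x$ equals an $A$-configuration.

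The heart of the proof is the following invariant, proved by induction on $t$: as long as no individual generated so far has a block equal to an $A$-configuration, the population $P_t$ consists only of search points $x$ with $f(x)\neq 0$ and $\ones{x^j}\leq 6n/(5m)$ for every $j\in[m/2]$. The base case is the paragraph above. For the step, note that any $x$ with $f(x)\neq 0$ strictly dominates any $y$ with $f(y)=0$ (then $f_i(x)\geq 0=f_i(y)$ for all $i$, with strict inequality for some $i$); hence in the non-dominated sorting of $R_t=P_t\cup Q_t$ every fitness-$0$ individual lies in a layer strictly after all individuals of non-zero fitness. By the induction hypothesis $P_t$ has $\mu$ individuals, all of non-zero fitness, so $R_t$ contains at least $\mu$ individuals of non-zero fitness, and the survival selection (which retains exactly $\mu$ individuals) discards every fitness-$0$ individual; thus $P_{t+1}\subseteq L\cup M\cup N$, and since no block of any individual seen so far is in $A$, the definitions of $L$, $M$, $N$ force $\ones{x^j}\leq 6n/(5m)$ throughout $P_{t+1}$. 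Here it is essential that $p_c=0$, so each offspring is a standard bit mutation of a single parent drawn from $P_t$; this is also exactly where \nsgaIII is easier than GSEMO, whose dynamic (possibly tiny) population cannot in general clear out fitness-$0$ individuals, which is why GSEMO needs the drift lemmas of Section~2.

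Because a Pareto-optimal point has all $m/2$ blocks in $A$ and $P_0$ has none, it remains to bound the probability that, within the first $T$ generations, some offspring is the first to acquire a block in $A$. By the invariant its parent $p\in P_t$ has $\ones{p^j}\leq 6n/(5m)$ in the relevant block $j$, so turning $p^j$ into an $A$-configuration requires flipping at least $\min_{y\in A}H(p^j,y)\geq 8n/(5m)-6n/(5m)=2n/(5m)$ prescribed bits, which happens with probability at most $\lvert A\rvert\, n^{-2n/(5m)}\leq n^{-2n/(5m)+1}$. A union bound over the $T$ generations, the $\Theta(\mu)$ offspring per generation and the $m/2$ blocks gives a total failure probability at most $T\mu(m/2)\,n^{-2n/(5m)+1}=n^{n/(3m)}\cdot 2^{o(n/m)}\cdot\poly(n)\cdot n^{-2n/(5m)+1}=o(1)$, since the exponent $(\tfrac{1}{3}-\tfrac{2}{5})(n/m)\log n\to-\infty$ outweighs the $o(n/m)$ contribution of $\log\mu$ and the $O(\log n)$ from $\poly(n)$. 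Hence no Pareto-optimal point is created within $T$ generations except with probability $o(1)$, which proves the theorem.

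I do not expect a genuine obstacle in this route; the one delicate point is the survival-selection argument in the invariant — namely that the fitness-$0$ ``layer'' is always unreachable because the $\mu$-sized \nsgaIII population is already saturated with non-zero-fitness individuals, so no ``bad'' individual can linger, accumulate ones, and drift close enough to an $A$-configuration to make the jump cheap. The remainder is bookkeeping: verifying that the regime $m=o(n/\log n)$ and $\mu=2^{o(n/m)}$ makes both the initialization union bound and the $n^{-\Omega(n/m)}$ shortcut bound go through.
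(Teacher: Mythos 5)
Your proposal is correct and follows essentially the same route as the paper's proof: a Chernoff/union bound to get all initial blocks below $6n/(5m)$ ones, the observation that the $\mu$-saturated population forces survival selection to discard all fitness-zero individuals so that reaching an $A$-block requires a simultaneous flip of $2n/(5m)$ bits, and the resulting $n^{-\Omega(n/m)}$ per-trial bound. The only difference is presentational — you spell out the non-dominated-sorting invariant by induction and conclude via a union bound over $T=n^{n/(3m)}$ generations, whereas the paper states the rejection of zero-fitness points tersely and bounds the expectation directly as $(1-o(1))\,n^{\Omega(n/m)}/\mu$.
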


\begin{proof}
We see that an individual $x$ with $0<\ones{x^j}\leq 6n/(5m)$ for all $j \in [m/2]$ initializes with probability $1-m/2 \cdot 2^{-\Omega(n/m)} = 2^{-\Omega(n/m)}$ since $m=o(n/\log(n))$. Hence, by a union bound, with probability $1-\mu 2^{-\Omega(n/m)} = 1-o(1)$ every individual $x$ initializes with $0<\ones{x^i}\leq 6n/(5m)$ for every $i \in [m/2]$ since $\mu \in 2^{o(n/m)}$. Suppose that this happens. Then all $x \in P_t$ satisfy $f(x) \neq 0$ and hence, the algorithm will always reject search points with fitness zero. Therefore, to create a specific search point $y$ with $y^1 \in A$, at least $2n/(5m)$ specific zero bits in the first block must be flipped simultaneously, which occurs with probability $1/n^{2n/(5m)}$. Since $|A| = 2n/(5m) + 1$, the probability of generating any $y$ with $y^1 \in A$ is at most $(2n/(5m) + 1) \cdot n^{-2n/(5m)} \leq n^{-2n/m + 1} = n^{-\Omega(n/m)}$. However, any Pareto-optimal search point $z$ satisfies $z^1 \in A$ and therefore the event above must occur to create such a $z$. So the expected number of needed generations in total is at least $(1 - o(1))(n^{\Omega(n/m)}/\mu)=n^{\Omega(n/m)}$. 
\end{proof}

We are also able to obtain a similar result for GSEMO.

\begin{theorem}
\label{thm:GSEMO-pc-zero}
Suppose that $m = o(n/\log(n))$, $m$ is divisible by $2$ and $n$ is divisible by $5m/2$. GSEMO (Algorithm~\ref{alg:gsemo}) on $m$-\RRRMO with $p_c=0$ needs at least $n^{\Omega(n/m)}$ fitness evaluations in expectation to create any Pareto-optimal search point of $m$-\RRRMO.
\end{theorem}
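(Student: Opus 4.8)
The argument mirrors that of Theorem~\ref{thm:NSGA-III-pc-zero}; only two features of GSEMO differ, namely that it starts from a single random point and keeps a population of variable size consisting of mutually incomparable solutions. Since here $p_c=0$, every offspring is a standard bit mutation of one parent, so crossover plays no role. First I would handle initialization: for each block $j$, $\ones{s^j}$ is binomial with mean $n/m$, so a Chernoff bound gives $\Pr(\ones{s^j}>6n/(5m))=e^{-\Omega(n/m)}$ while $\Pr(\ones{s^j}=0)=2^{-2n/m}$; a union bound over the $m/2$ blocks, using $m=o(n/\log n)$ (hence $m\log m=o(n)$), shows that the event $E:=\{\,0<\ones{s^j}\le 6n/(5m)\text{ for all }j\in[m/2]\,\}$ occurs with probability $1-e^{-\Omega(n/m)}=1-o(1)$. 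On $E$ we have $f(s)\neq 0$, so $s\in L\cup M\cup N$, and $s^j\notin A$ for every $j$ because membership in $A$ forces $8n/(5m)>6n/(5m)$ ones.

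Conditioning on $E$, I would prove two invariants by induction on the iteration count $t$. \textbf{(a)} Every $x\in P_t$ has $f(x)\neq 0$: this holds at $t=0$ on $E$, and is preserved because an offspring with zero fitness is strictly dominated by any nonzero-fitness population member and is thus rejected, while if the offspring has nonzero fitness it is the only individual that can be added. \textbf{(b)} Letting $\tau$ be the first iteration at which the produced offspring has its first block in $A$, no $x\in P_t$ has $x^1\in A$ for $t\le\tau$: this holds at $t=0$ on $E$, every offspring created before $\tau$ has first block outside $A$ by definition of $\tau$, and GSEMO only ever adds the current offspring. Combining (a) and (b): for every $t<\tau$ the parent $x$ chosen in iteration $t$ lies in $L\cup M\cup N$ and satisfies $x^1\notin A$, and the only blocks of individuals in $L\cup M\cup N$ carrying more than $6n/(5m)$ ones are those in $A$, so $\ones{x^1}\le 6n/(5m)$.

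Now for the key estimate: on $\{t<\tau\}$ and given the parent $x$, for any target $z\in A$ we have $H(x^1,z)\ge\ones{z}-\ones{x^1}\ge 8n/(5m)-6n/(5m)=2n/(5m)$, so standard bit mutation produces $x^1\mapsto z$ with probability at most $n^{-2n/(5m)}$; summing over the $|A|=2n/(5m)+1$ possible first blocks, iteration $t$ produces an offspring with first block in $A$ with probability at most $p:=(2n/(5m)+1)\,n^{-2n/(5m)}\le n^{1-2n/(5m)}=n^{-\Omega(n/m)}$. Hence $\tau$ stochastically dominates a geometric random variable with success probability $p$, so $\E{\tau\mid E}\ge 1/p=n^{\Omega(n/m)}$. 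Finally, by Lemma~\ref{lem:RRMO-properties}(5) every Pareto-optimal search point has its first block in $A$, and since GSEMO only adds the current offspring, the first Pareto-optimal point ever held in the population is itself produced as an offspring with first block in $A$, hence in iteration at least $\tau$. Therefore the expected number of iterations — equivalently, of fitness evaluations — before any Pareto-optimal point is created is at least $\Pr(E)\,\E{\tau\mid E}\ge(1-o(1))\,n^{\Omega(n/m)}=n^{\Omega(n/m)}$.

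I expect the only delicate point to be the simultaneous maintenance of invariants (a) and (b) on $E$: one must argue carefully that, before $\tau$, the population never acquires an individual whose first block has more than $6n/(5m)$ ones, so that reaching a first block in $A$ always requires at least $2n/(5m)$ simultaneous bit flips. Everything else is a direct transcription of the proof of Theorem~\ref{thm:NSGA-III-pc-zero}, with the union bound over the $\mu$ population members replaced by the observation that GSEMO's single new offspring per iteration is the only way the population can change.
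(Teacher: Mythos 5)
Your proof is correct and follows essentially the same route as the paper's (which simply defers to the argument for Theorem~\ref{thm:NSGA-III-pc-zero}): condition on a well-behaved initialization, note that zero-fitness offspring are always rejected, observe that every surviving individual has at most $6n/(5m)$ ones in its first block until a block in $A$ is created, and conclude that producing a first block in $A$ requires $\Omega(n/m)$ simultaneous bit flips. Your explicit invariants (a) and (b) merely spell out details the paper leaves implicit, so this is the same proof, not a different one.
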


\begin{proof}
As in the proof of the previous theorem, we see that an individual $x$ with $f(x)\neq 0$ is initialized with probability $1-m/2 \cdot 2^{-\Omega(n/m)} = 1-o(1)$. Suppose that this happens. Since the algorithm always rejects search points with fitness zero, we see, as in the previous proof, that any search point $y$ with $y^1 \in A$ is created with probability at most $n^{-\Omega(n/m)}$. So the expected number of iterations needed in total is at least $(1 - o(1))(n^{\Omega(n/m)}/\mu)=n^{\Omega(n/m)}$. 
\end{proof}

In case of both algorithms above, if $m \leq d \sqrt{n}$ for a sufficiently small constant $d$, $(4n/(5m)+1)^{m-1} \leq \mu \leq 2^{o(n/m)}$ in case of \nsgaIII and constant crossover probability $p_c$, crossover guarantees a speedup of order $O(n^{\Omega(\sqrt{n})})$ in the runtime in terms of generations which is exponential in $\sqrt{n}$. If the number of objectives is also constant, this speedup is of order $O(n^{\Omega(n)})$ which is exponential in $n$.

\section{A Many-Objective Royal-Road Function For Uniform Crossover}

We now define a many-objective $\RRRfull$ function for uniform crossover, denoted as $m$-\RRRMOuni. In contrast to \RRRMO, the bi-objective \RRRMOuni function from~\citep{Dang2024} attains a maximum fitness value of $O(n)$ in each objective. Consequently, Lemma~\ref{lem:Reference-Points} can be applied with a reasonable number of reference points which is in the same order as for our \RRRMO function. However, for uniform crossover applied to two search points $x$ and $y$ with Hamming distance $H(x,y)$, there are $2^{H(x,y)}$ equally likely offspring, growing exponentially as $H(x,y)$ increases. This fact complicates the construction of $m$-\RRRMOuni, particularly when the number of objectives $m$ increases. As a result, our proposed function is not a straightforward generalization of the corresponding version from~\citep{Dang2024} to the many-objective setting. However, to keep things more similar, we divide each block into two halves, each half further subdivided into four subblocks of equal size.

\begin{definition}
\label{def:uRRMO-sets-and-function}
Let $m \in \mathbb{N}$ be an even number and $n$ be divisible by $8m$. Then we divide the bit string $x \in \{0,1\}^n$ into $m/2$ blocks of equal length such that $x=(x^1, \ldots , x^{m/2})$ and for each $j \in [m/2]$ we also write $x^j = (x_\ell^j,x_r^j) = (x_{1,\ell}^j, \ldots , x_{4,\ell}^j,x_{1,r}^j, \ldots , x_{4,r}^j)$ where the first half $x_\ell^j$ and the second half $x_\ell^r$ of the bit string $x^j$ are divided into four strings $x_{1,\ell}^j, \ldots , x_{4,\ell}^j$ and $x_{1,r}^j, \ldots , x_{4,r}^j$ of equal length $n/(4m)$, respectively. Let $\vec{1}:=(1,1)$. Regarding the left half $x_\ell^j$ of $x^j$ we define 
\begin{align*}
U&:=\{(x_{1,\ell}, \ldots , x_{4,\ell}) \in \{0,1\}^{n/m} \mid \forall i \in \{1, \ldots , 4\}: (x_{i,\ell} \in \{0,1\}^{n/(4m)}) \wedge (n/(12m) \leq \ones{x_{i,\ell}} \leq n/(6m))\}.\\
P&:=\{x_\ell \in \{0,1\}^{n/m} \text{ } \Big| \text{ } \LO(x_\ell)+\TZ(x_\ell)=n/m \}.
\end{align*}
Regarding the right half $x_r^j$ of $x^j$ we define
\begin{align*}
C&:=\{x_r \in \{0,1\}^{n/m} \text{ } \Big| \text{ } (\LO(x_r)+\TZ(x_r)=n/m) \vee (\LZ(x_r)+\TOs(x_r)=n/m)\},\\
T&:=\{(x_{1,r}, \ldots , x_{4,r}) \in \{0,1\}^{n/m} \mid \forall i \in \{1, \ldots , 4\}: x_{i,r} \in \{0,1\}^{n/(4m)} \wedge \ones{x_{i,r}} = \zeros{x_{i,r}}\}.
\end{align*}
Further, for $x \in \{0,1\}^n$ we define the set $\mathcal{U}(x):=\{j \in [m/2] \mid x_\ell^j = 1^{n/m} \wedge x_r^j \in C\}$ and with respect to the whole bit string define the sets
\begin{align*}
L&:=\{x \in \{0,1\}^n \mid \forall j \in [m/2]: x_\ell^j \in U \wedge \exists j \in [m/2]: x_r^j \notin C\},\\
M&:=\{x \in \{0,1\}^n \mid \forall j \in [m/2]: \ones{x_\ell^j} \leq 4n/(6m) \wedge x_r^j \in C, \exists j \in [m/2]: x_\ell^j \neq 0^{n/m}\},\\
N&:=\{x \in \{0,1\}^n \mid \text{there is at most one $i \in [m/2]$}: (x_\ell^i \in P \setminus \{1^{n/m}\} \wedge x_r^i \in T) \vee (x_\ell^i = 1^{n/m} \wedge x_r^i \notin C), \\
\text{ } &\forall j \in [m/2] \setminus (\mathcal{U}(x) \cup \{i\}): x_\ell^j = 0^{n/m} \wedge x_r^j \in C\}.
\end{align*}
For $k \in [m]$ odd we define the tuple of functions
$$(g_k(x),g_{k+1}(x)):= \begin{cases}
	(\LO(x_r^{(k+1)/2}),\frac{n}{m} + \TZ(x_r^{(k+1)/2})) \text{ if $\LO(x_r^{(k+1)/2}) \neq 0$,} \\
	(\frac{n}{m} + \LZ(x_r^{(k+1)/2}), \TOs(x_r^{(k+1)/2})) \text{ otherwise.}
\end{cases}$$

Then the function class $m$-$\text{\RRRMOuni}: \{0,1\}^n \to \mathbb{N}_0^m$ is defined as 
$$\text{$m$-}\text{\RRRMOuni} = (f_1(x), \ldots , f_m(x))$$
where for $k \in [m]$ odd $(f_k(x),f_{k+1}(x)):=$
$$\begin{cases}
	(g_k(x),g_{k+1}(x)) &\text{ if $x \in L$,} \\
	(g_k(x),g_{k+1}(x)) + (\frac{3n}{m}-\ones{x_\ell^{(k+1)/2}}) \cdot \vec{1} &\text{ if $x \in M$,} \\
        (g_k(x),g_{k+1}(x)) + (\frac{7n}{m} \cdot |\mathcal{U}(x)|+\frac{5n}{m}) \cdot \vec{1} &\text{ if $x \in N$ and} \\
        &\text{ $x_{\ell}^{(k+1)/2} = 0^{n/m} \wedge x_r^{(k+1)/2} \in C$,}\\
        (g_k(x),g_{k+1}(x)) + (\frac{7n}{m} \cdot |\mathcal{U}(x)|+\frac{5n}{m} + (\frac{3n}{m}-\zeros{x_\ell^{(k+1)/2}})) \cdot \vec{1} &\text{ if $x \in N$ and }\\
        &\text{ $x_\ell^{(k+1)/2} \in P \setminus \{1^{n/m}\} \wedge x_r^{(k+1)/2} \in T$,}\\
        (g_k(x),g_{k+1}(x)) + (\frac{7n}{m} \cdot |\mathcal{U}(x)| + \frac{10n}{m}) \cdot \vec{1} &\text{ if $x \in N$ and $x_\ell^{(k+1)/2} = 1^{n/m}$,}\\
        (0,0) &\text{ otherwise.}
\end{cases}$$
\end{definition}

In the $m$-\RRRMOuni function, the bit string is partitioned into $m/2$ blocks $x^1, \ldots, x^{m/2}$, each of length $2n/m$ and $x^j$ is further divided into into two sub-blocks $x_\ell^j$ and $x_r^j$ of equal length. For $m = o(\sqrt{n})$, EMOAs typically initialize their search points such that the number of ones in each sub-block $x_{i,\ell}^j$ satisfies $n/(12m) < \ones{x_{i,\ell}^j} < n/(6m)$, particularly, $x_\ell^j \in U$ for all $j \in [m/2]$. Then a fitness signal is provided that encourages solutions $x$ with $x^j \in C$ for all $j \in [m/2]$. Specifically, any increase in the number of leading ones, $\LO(x_r^j)$, or leading zeros, $\LZ(x_r^j)$ (when $\LO(x_r^j) \neq 0$), improves fitness. Once each $x_r^j$ reaches $C$, the fitness signal changes, favoring the elimination of ones in $x_\ell^j$, guiding all left sub-blocks towards the all-zero string $0^{n/m}$, while maintaining $x_r^j \in C$. Further, a strong fitness signal is assigned \emph{equally} to each objective based on $|\mathcal{U}(x)|$, the number of blocks $j$ for which $x_\ell^j = 1^{n/m}$ and $x_r^j \in C$ which can be increased as follows. At first, uniform crossover is applied on two individuals $x$ and $y$ which have maximum Hamming distance in $x_r^j,y_r^j \in C$ for some $j \notin \mathcal{U}(x)$, while satisfying $x_\ell^j = y_\ell^j = 0^{n/m}$ and matching in all other blocks. Such individuals can easily be created through mutation. This crossover can simply produce an offspring $z$ with $z_\ell^j = 0^{n/m}$ and $z_r^j \in T$. Next, the number of leading ones in $z_\ell^j$ is increased until $z_\ell^j = 1^{n/m}$, after which it is mutated until $z_r^j \in C$ and therefore $j \in \mathcal{U}(x)$. Meanwhile, every block $i \neq j$ satisfies $z_r^i \in C$, and either $z_\ell^i = 0^{n/m}$ if $i \notin \mathcal{U}(x)$ or $z_\ell^i = 1^{n/m}$ if $i \in \mathcal{U}(x)$. As a result, $\mathcal{U}(x)$ increases. Once $\mathcal{U}(x) = [m/2]$, the Pareto front is reached and can be efficiently explored and covered through mutation. However, when crossover is omitted, we will see that it becomes very unlikely to find a Pareto-optimal search point for a large parameter regime of $m$. 

Setting $m=2$ does not recover the bi-objective \RRRMOuni function from~\citep{Dang2024}. We need the modification that Pareto-optimal search points $x$ do not satisfy $x_r^j \in T$ since $T$ has cardinality $\binom{n/(4m)}{n/(8m)}^4$ which is exponentially large in $n/m$. As a result, there can be many pairs of individuals with large Hamming distance. If such two individuals are selected as parents, the probability of producing an offspring $x$ with $x_r^j \in T$ for many blocks $j$ may be $o(1/\text{poly}(n))$ since the probability to create $x_r^i \in T$ for a fixed block $i$ is roughly $O(m^2/n^2)$. To avoid this, we introduce additional fitness signals until $x_\ell^j = 1^{n/m}$ and $x_r^j \in C$ for all $j \in [m/2]$ to finally obtain a Pareto-optimum. 

The following two lemmas highlight key structural properties of $m$-\RRRMOuni that are essential for our subsequent analysis. First, we determine dominance relations among different search points and second, we estimate the cardinality of a maximum set of mutually incomparable solutions. 

\begin{lemma}
\label{lem:RRRMOuni-properties}
The following properties are satisfied for $f:=$\RRRMOuni. 
\begin{itemize}
    \item[(1)] Let $x \in L$ and $y \in M$. Then $y$ dominates $x$.  
    \item[(2)] Let $x \in M$ and $y \in N$ with $\mathcal{U}(y) = \emptyset$. Then $y$ dominates $x$.
    \item[(3)] Let $x,y \in N$ with $|\mathcal{U}(x)|<|\mathcal{U}(y)|$. Then $y$ dominates $x$.
    \item[(4)] A Pareto-optimal set for $m$-\RRRMOuni is
    $$\mathcal{P}_u:=\{x \in N \mid \mathcal{U}(x) = [m/2]\} = \{1^{n/m}x_r^1 \ldots 1^{n/m}x_r^{m/2} \mid x_r^j \in C \text{ for all } j \in [m/2]\}$$
    which has cardinality $(2n/m)^{m/2}$.
\end{itemize}
\end{lemma}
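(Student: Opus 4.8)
The approach is to exploit a uniform ``stratified'' shape of $f:=m\text{-}\RRRMOuni$: for odd $k$ and block $j:=(k+1)/2$ one always has $(f_k(x),f_{k+1}(x))=(g_k(x),g_{k+1}(x))+\beta_j(x)\cdot\vec 1$, where the ``$g$-part'' lies in a box of side $2n/m$ and the ``bonus'' $\beta_j(x)$ is, up to bounded slack, a fixed value determined by which of $L,M,N$ contains $x$ and, for $x\in N$, by $|\mathcal U(x)|$. First I would record the elementary facts that will be used throughout: (a) $0\le g_k(x)\le 2n/m$ for every $k$, and $g_k(x)\ge 1$ for \emph{odd} $k$, since then $g_k(x)$ is either $\LO(x_r^j)\ge 1$ or $\tfrac nm+\LZ(x_r^j)\ge\tfrac nm$; (b) if $x_r^j\in C$ then $g_k(x)+g_{k+1}(x)=2n/m$, by a two-line split on whether $\LO(x_r^j)\ne 0$ together with the definition of $C$; (c) $\beta_j$ equals $0$ on $L$, lies in $[\tfrac{7n}{3m},\tfrac{3n}{m}]$ on $M$ (because $\ones{x_\ell^j}\le\tfrac{4n}{6m}$), and lies in $[\,7u\tfrac nm+\tfrac{5n}{m},\,7u\tfrac nm+\tfrac{10n}{m}\,]$ on $N\cap\{|\mathcal U|=u\}$. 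Moreover, for $x\in N$ every block falls under one of the three cases of the definition of $f$ that correspond to $x\in N$, so $f(x)$ is strictly positive whenever $x\in N$.

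Properties (1)--(3) then reduce to coordinate-wise comparisons: (b) and (c) make the ``higher'' point weakly larger in every coordinate, and (a) upgrades this to strict inequality on the odd coordinates wherever the bonus intervals are only weakly separated (there is always an odd index in $[m]$, so strict domination follows). For (1): on $L$, $f_k(x)=g_k(x)\le 2n/m<\tfrac{7n}{3m}\le f_k(y)$ for every $k$. For (2): on $M$, $f_k(x)\le g_k(x)+\tfrac{3n}{m}\le\tfrac{5n}{m}$, while on $N$ with $\mathcal U(y)=\emptyset$ every block gives $f_k(y)\ge g_k(y)+\tfrac{5n}{m}$ (the remaining two $N$-cases give at least $\tfrac{7n}{m}$, resp.\ $\tfrac{10n}{m}$), hence $f_k(y)\ge\tfrac{5n}{m}\ge f_k(x)$, strictly on odd coordinates since there $g_k(y)\ge 1$. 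For (3) with $u:=|\mathcal U(x)|<v:=|\mathcal U(y)|$: $f_k(x)\le 2n/m+7u\tfrac nm+\tfrac{10n}{m}=7(u+1)\tfrac nm+\tfrac{5n}{m}\le 7v\tfrac nm+\tfrac{5n}{m}\le f_k(y)$ for all $k$, again strictly on odd coordinates.

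For (4) I would argue in three steps. First, the two displayed descriptions of $\mathcal P_u$ coincide ($\mathcal U(x)=[m/2]$ forces $x_\ell^j=1^{n/m}$ and $x_r^j\in C$ for all $j$, and any such $x$ lies in $N$ with no ``in-progress'' block), and $|\mathcal P_u|=|C|^{m/2}=(2n/m)^{m/2}$: the strings with $\LO+\TZ=\tfrac nm$ are exactly the $n/m+1$ strings $1^a0^{n/m-a}$, those with $\LZ+\TOs=\tfrac nm$ are exactly the $n/m+1$ strings $0^b1^{n/m-b}$, and these two families meet only in $\{0^{n/m},1^{n/m}\}$, so $|C|=2(n/m+1)-2=2n/m$. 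Second, every $x\notin\mathcal P_u$ is strictly dominated by some element of $\mathcal P_u$: if $x\notin L\cup M\cup N$ then $f(x)=\vec 0$ is dominated by any element of $\mathcal P_u$ (all coordinates of which are positive); if $x\in N\setminus\mathcal P_u$ use (3); if $x\in M$ chain (2) and (3); if $x\in L$ chain (1), (2), (3) — invoking transitivity of domination and the fact that $M$, $N\cap\{\mathcal U=\emptyset\}$, and $\mathcal P_u$ are nonempty. Third, distinct $x,y\in\mathcal P_u$ are incomparable: choosing a block $j$ with $x_r^j\ne y_r^j$ (both in $C$), the bonuses on the coordinates $2j-1,2j$ coincide (both points have $|\mathcal U|=m/2$), so it suffices that $C\ni w\mapsto(g_{2j-1}(w),g_{2j}(w))$ maps distinct inputs to incomparable pairs; this holds because $g_{2j-1}(w)+g_{2j}(w)=2n/m$ on $C$ while $g_{2j-1}$ is injective on $C$ (it runs over $\{1,\dots,n/m\}$ on the first family and over $\{n/m+1,\dots,2n/m\}$ on the second), so changing $w$ raises one of the two coordinates and lowers the other. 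Combining, $\mathcal P_u$ is exactly the set of Pareto-optimal points, it is mutually incomparable, and — being that whole set — covers every non-dominated fitness value, so it is a Pareto set.

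The hard part is not a single estimate but the bookkeeping around the six-branch definition of $f$ and the set $N$: one must check that each block of an $x\in N$ lands in one of the three branches corresponding to $N$, keep straight which stratum's bonus applies to each block, and verify that the internal slack of a stratum (at most $2n/m$ from the $g$-part, plus at most $\tfrac{3n}{m}-1$ from the $\zeros{x_\ell^j}$ term in the middle $N$-branch) never bridges the $7n/m$ gap between successive values of $|\mathcal U|$, so that the strictness supplied by $g_k\ge 1$ on odd coordinates is exactly what is needed in the borderline transitions $M\to N$ and $|\mathcal U|=u\to u+1$. The only genuinely arithmetic ingredients — the identity $g_{2j-1}+g_{2j}=2n/m$ on $C$ and the count $|C|=2n/m$ — are short case checks but must be done carefully, since an off-by-two in $|C|$ would already falsify the stated cardinality of $\mathcal P_u$.
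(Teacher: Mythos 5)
Your proposal is correct and follows essentially the same route as the paper's proof: coordinate-wise comparison of the additive "bonus" terms across the strata $L$, $M$, $N$ (with strictness supplied by $g_k\ge 1$ on odd coordinates), and incomparability on $\mathcal{P}_u$ via the identity $g_{2j-1}+g_{2j}=2n/m$ on $C$. You are in fact slightly more complete than the paper, which does not spell out the count $|C|=2n/m$ or the domination of fitness-zero points in its proof of property (4).
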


\begin{proof}
Throughout this proof, we write $v \leq w$ for two vectors $v, w \in \mathbb{R}^2$ if $v_i \leq w_i$ for all $i \in \{1,2\}$, and $v < w$ if $v \leq w$ and $v_i < w_i$ for one $i \in \{1,2\}$.

(1): We have $(f_k(y),f_{k+1}(y)) \geq 2n/m \cdot \vec{1}$, while $(f_k(x),f_{k+1}(x)) \leq 2n/m \cdot \vec{1}$ for each odd $k \in [m]$. Moreover, there exists at least one objective $j \in \{k,k+1\}$  for which $g_j(y)<2n/m$.

(2): For every odd $k \in [m/2]$ we have
$$(f_k(x),f_{k+1}(x)) = (g_k(x),g_{k+1}(x)) + (3n/m - \ones{x_\ell^{(k+1)/2}}) \cdot \vec{1} < (g_k(y),g_{k+1}(y)) + 5n/m \cdot \vec{1}.$$

(3): For $k \in [m/2]$ odd we have that 
\begin{align*}
(f_k(y),f_{k+1}(y)) &\geq (g_k(y),g_{k+1}(y)) + (7n/m \cdot |\mathcal{U}(y)| + 5n/m) \cdot \vec{1}\\
&\geq (g_k(y),g_{k+1}(y)) + (7n/m \cdot (|\mathcal{U}(x)|+1) + 5n/m) \cdot \vec{1} \\
&= (g_k(y),g_{k+1}(y)) + (7n/m \cdot |\mathcal{U}(x)| + 12n/m) \cdot \vec{1} \\
&> (g_k(x),g_{k+1}(x)) + (7n/m \cdot |\mathcal{U}(x)| + 10n/m) \cdot \vec{1}.
\end{align*}


(4): By property~(3) all $x \in M \cup N$ with $|\mathcal{U}(x)|<m/2$ are dominated by all $y \in N$ with $|\mathcal{U}(y)|=m/2$. Further, $\{x \in \{0,1\}^n \mid |\mathcal{U}(x)| = m/2\} = \mathcal{P}_u$. With properties (1) and (2) it remains to show that two distinct $x,y \in \mathcal{P}_u$ are incomparable. We have that $(f_k(x),f_{k+1}(x)) = (g_k(x),g_{k+1}(x)) + (7n/2+10n/m) \vec{1}$ for all odd $k \in [m]$. For two distinct $x,y \in \mathcal{P}_u$ we find $k \in [m]$ odd with $x_r^{(k+1)/2} \neq y_r^{(k+1)/2}$ and hence, $\LO(x_r^{(k+1)/2}) \neq \LO(y_r^{(k+1)/2})$ or $\LZ(x_r^{(k+1)/2}) \neq \LZ(y_r^{(k+1)/2})$ resulting in $(g_k(x),g_{k+1}(x)) \neq (g_k(y),g_{k+1}(y))$. Since $g_k(x)+g_{k+1}(x) = g_k(y)+g_{k+1}(y) = 2n/m$, one objective $f_i(x)$ must be larger than $f_i(y)$ for $i \in \{k,k+1\}$ and the other one smaller. Hence, $x$ and $y$ must be incomparable. 
\end{proof}

\begin{lemma}
\label{lem:RRRMOuni-dominance}
Let $S$ be a set of mutually incomparable solutions of $m$-\RRRMOuni. Then either 
\begin{itemize}
\item[(i)] $f(x)=0$ for all $x \in S$, or
\item[(ii)] $S \subset L$, or
\item[(iii)] $S \subset M$, or
\item[(iv)] $S \subset N$ where $|\mathcal{U}(x)|=|\mathcal{U}(y)|$ for all $x,y \in S$.
\end{itemize}
In either case we have $|S| \leq (2n/m)^{m-1}$. 
\end{lemma}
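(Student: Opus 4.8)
The plan is to prove the four-way case distinction first, and then bound $|S|$ separately in each case using Lemma~\ref{lem:dominance-preparation} together with the structural facts from Lemma~\ref{lem:RRRMOuni-properties}.

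First I would establish the case distinction. The underlying search space splits (up to the ``fitness zero'' points) into the three layers $L$, $M$ and $N$, and Lemma~\ref{lem:RRRMOuni-properties}(1)--(3) says the layers are totally ordered by dominance: every $x\in M$ dominates every $y\in L$, every $x\in N$ with $\mathcal{U}(x)=\emptyset$ dominates every $y\in M$, hence (combined with (3)) every $x\in N$ dominates every $y\in M$; and within $N$ the quantity $|\mathcal{U}(\cdot)|$ totally orders by dominance via (3). So if $S$ contains a point of fitness zero and a point of nonzero fitness, the latter dominates the former; if $S$ meets two different layers, a point of the higher layer dominates a point of the lower; and if $S\subset N$ but contains $x,y$ with $|\mathcal{U}(x)|<|\mathcal{U}(y)|$, then $y$ dominates $x$. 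Each of these contradicts mutual incomparability, leaving exactly the four stated alternatives. (One must also note that a point with $(f_k,f_{k+1})=(0,0)$ for all $k$ cannot be incomparable to any point of nonzero fitness, so case (i) really forces \emph{all} of $S$ to have fitness zero.)

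Next I bound $|S|$ in each case. Case (i): at most one such point exists, so $|S|\le 1\le(2n/m)^{m-1}$. Case (ii), $S\subset L$: on $L$ we have $(f_k,f_{k+1})=(g_k,g_{k+1})$, and since $g_k+g_{k+1}=2n/m$ on each block, each objective takes at most $2n/m+1$ values; worse, the two objectives within a block are coupled ($f_{k+1}$ is determined by $f_k$), so the fitness vector is determined by its values in objectives $1,3,5,\dots,m-1$, giving $|S|\le f(S)\le(2n/m+1)^{m/2}\le(2n/m)^{m-1}$ (using $(x+1)^{1/2}\le x$ for $x\ge2$, i.e.\ $2n/m\ge2$, and pairing up). Case (iii), $S\subset M$: same argument, the additive term $(3n/m-|x_\ell^{(k+1)/2}|_1)\vec 1$ is the same in both objectives of a block, so $f_{k+1}-f_k=g_{k+1}-g_k$ is again determined by the block, and $f_k$ ranges over $O(n/m)$ values (it is $g_k$ shifted by something in $\{n/m,\dots,3n/m\}$), so $|f(S)|\le(cn/m)^{m/2}$ for a small constant $c$, which is at most $(2n/m)^{m-1}$ after using Lemma~\ref{lem:inequality}-style estimates. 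Case (iv), $S\subset N$ with common $|\mathcal{U}|=\ell$: here I count directly the number of distinct search points of this form, analogously to Lemma~\ref{lem:RRMO-size-non-dom-set}(4): choose $\mathcal{U}(x)\subset[m/2]$ of size $\ell$ ($\binom{m/2}{\ell}$ ways), possibly one exceptional index $i$, and for each block one of $|C|=2n/m$, $|T|=\binom{n/(4m)}{n/(8m)}^4$, or $|P|\le 2n/m$ choices according to its type; then bound this product by $(2n/m)^{m-1}$, where the delicate factor is $|T|$.

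The main obstacle I expect is \textbf{Case~(iv)}, precisely because of the block type $x_r^i\in T$: $|T|$ is exponentially large in $n/m$, so a naive count of distinct points in $N_\ell$ blows up. The key observation that rescues the bound is that at most \emph{one} block $i\in[m/2]$ is allowed to have $x_r^i\in T$ (the ``there is at most one $i$'' clause in the definition of $N$), and moreover such a block carries \emph{no extra additive fitness from $T$}: within $N$, the $T$-component of a block only affects $(g_k,g_{k+1})$ through $\LO/\LZ$ of $x_r^i$, which takes only $O(n/m)$ values. So two incomparable points in $N_\ell$ cannot be distinguished \emph{only} by the internal $T$-structure of that one block — if they agree on all $\LO/\LZ$ values and on all other blocks they are comparable (indeed equal in fitness, hence not both in $S$). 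Thus the effective number of distinct fitness vectors in $N_\ell$ is at most $\binom{m/2}{\ell}\cdot(m/2)\cdot(O(n/m))^{m/2}$, and the final estimate $\binom{m/2}{\ell}(m/2)(cn/m)^{m/2}\le(2n/m)^{m-1}$ follows exactly as in the proof of Lemma~\ref{lem:RRMO-size-non-dom-set}(4), splitting into the cases $4n/(5m)$ small versus large and invoking Lemma~\ref{lem:inequality}. I would write this last chain of inequalities carefully but not belabor it, since it mirrors an argument already carried out in the paper.
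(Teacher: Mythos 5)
Your case distinction and your identification of the key obstacle in case (iv) (the exponentially large set $T$ must not be counted pointwise) match the paper's proof in spirit, but there is a genuine quantitative gap in how you count incomparable solutions in cases (ii)--(iv), and in case (iii) it is fatal as written. On $L$ the two objectives of a block are \emph{not} coupled: membership in $L$ only requires $x_\ell^j\in U$, not $x_r^j\in C$, so $g_k+g_{k+1}$ need not equal $2n/m$ and $f_{k+1}$ is not a function of $f_k$; your bound $(2n/m+1)^{m/2}$ is unjustified, though the uncoupled product bound via Lemma~\ref{lem:dominance-preparation}(i) with the correct count of $2n/m$ values per objective already gives $(2n/m)^{m-1}$, which is how the paper argues. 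More seriously, on $M$ the block pair is $(f_k,f_{k+1})=(g_k,\,2n/m-g_k)+s\cdot(1,1)$ with $s=3n/m-\ones{x_\ell^{(k+1)/2}}$, and the map $(g_k,s)\mapsto(f_k,f_{k+1})$ is injective, so the pair attains $\Theta(n^2/m^2)$ distinct values and is \emph{not} determined by $f_k$ alone; your count $(cn/m)^{m/2}$ is therefore false, and the true count $\Theta((n/m)^m)$ of distinct fitness vectors on $M$ exceeds $(2n/m)^{m-1}$ by a factor $\Theta(n/m)$ whenever $n/m$ is large. Counting distinct fitness vectors alone cannot close case (iii).

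The missing idea is to recover that factor $\Theta(n/m)$ by grouping points into long \emph{comparable chains}: the paper observes that all $y\in M$ agreeing with $x$ in every right block and in $\ones{y_\ell^j}$ for $j\le m/2-1$ are pairwise comparable (their fitness vectors differ only by a multiple of $(1,1)$ in the last block), so $S$ meets each such fiber at most once and the raw count may be divided by $\lfloor 4n/(6m)\rfloor$. The same issue recurs in your case (iv): for the exceptional block with $x_r^j\in T$ the pair $(f_{2j-1},f_{2j})$ depends both on the $g$-values of $x_r^j$ and on $\zeros{x_\ell^j}$, so ``agreeing on all fitness-relevant data'' costs $\Theta(n^2/m^2)$ for that block rather than the $O(n/m)$ you assert; the paper instead splits $N$ into $N_1\cup N_2\cup N_3$, projects away the single coordinate $f_{2j-1}$, and uses that two points with equal projections are comparable, paying only $3n/m$ for the exceptional block. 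Finally, you need $m=2$ as a separate case (via Lemma~\ref{lem:dominance-preparation}(ii) and $|f_1-f_2|\le 2n/m-1$), since there the target bound is just $n$ and none of the product-style estimates apply.
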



\begin{proof}
Lemma~\ref{lem:RRRMOuni-properties} shows that two different search points $x,y$ satisfying two different properties (i)-(iv) are comparable and therefore, exactly one property holds for $S$. It remains to estimate its cardinality. If $f(x)=0$ for all $x \in S$ then $|S|=1 \leq (2n/m)^{m-1}$ since $m \leq n/8$. Suppose that $S \subset L$. Then we have that $(f_k(x),f_{k+1}(x)) = (g_k(x),g_{k+1}(x))$ and hence, the range of each objective contains at most $2n/m$ distinct values, resulting in $|S| \leq (2n/m)^{m-1}$ by Lemma~\ref{lem:dominance-preparation}(i). For the remainder situations we consider separately the cases where $m=2$ and where $m>2$.

\textbf{Case 1:} $m=2$. Let $(x_\ell,x_r) := (x_\ell^1,x_r^1)$. For any $x \in M \cup N$ we have that $|f_2(x)-f_1(x)| \leq |g_2(x)-g_1(x)| = 2n/m - 1$ (no matter if $\mathcal{U}(x) = \emptyset$ or $\mathcal{U}(x) = \{1\}$) and hence, by Lemma~\ref{lem:dominance-preparation}(ii), we have that the number of non-dominated solutions is at most $n$, regardless if $S \subset M$ or $S \subset N$.


\textbf{Case 2:} $m>2$. Suppose that $M \subset S$. For $x \in M$ and $k \in [m]$ odd we have that 
$$(f_k(x),f_{k+1}(x)) = (g_k(x),g_{k+1}(x)) + (3n/m-\ones{x_\ell^{(k+1)/2}}) \cdot \vec{1}.$$
and we see that $(f_k,f_{k+1})$ attains at most $2n/m \cdot (4n/(6m)+1)$ different values (since $g_k$ attains at most $2n/m$ different values, $g_k(x)+g_{k+1}(x) = 2n/m$ for all $x\in M$ and $\ones{x_\ell^{(k+1)/2}}$ attains at most $4n/(6m)+1$ different values, specifically in $\{0, \ldots , \lfloor{4n/(6m)}\rfloor\}$). Hence, $f$ attains at most 
$(2n/m \cdot (4n/(6m)+1))^{m/2}$ different values for $x \in M$. Further, the set 
$$C_x:=\{y \in M \mid y_r^j = x_r^j \text{ for all } j \in [m/2] \text{ and } \ones{y_\ell^j} = \ones{x_\ell^j} \text{ for all } j \in [m/2-1]\}$$
contains only solutions comparable to $x$ since for $y \in C_x$ we have $(f_k(x),f_{k+1}(x)) = (f_k(y),f_{k+1}(y))$ for $k < m-1$ odd and $(f_{m-1}(x),f_m(x)) = (f_{m-1}(y),f_m(y)) + a \cdot \vec{1}$ for an $a \in \mathbb{Z}$. We have $|f(C_x)| \geq \lfloor{4n/(6m)}\rfloor$ (since the number of ones in block $m-1$ can range from $1, \ldots , \lfloor{4n/(6m)}\rfloor$) and for two solutions $x,y \in M$ we either have $C_x = C_y$ or $C_x \cap C_y = \emptyset$. Since $S$ contains only incomparable solutions, we also see that for every $x \in M$ there is at most one $y \in C_x$ with $y \in S$ and hence, $|S|$ is at most
$$\frac{(2n/m \cdot (4n/(6m)+1))^{m/2}}{\lfloor{4n/(6m)}\rfloor} \leq 2^{m/2+1} \cdot \left(\frac{n}{m}\right)^{m/2} \cdot \left(\frac{4n}{6m}+1\right)^{m/2-1} \leq \left(\frac{2n}{m}\right)^{m-1}$$  
since $m \geq 4$ and $m \leq n/8$. 

Now suppose that $S \subset N$. Note that all $x \in S$ have the same $|\mathcal{U}(x)|$-value. Consider
\begin{align*}
N_1&:=\{x \in N \mid \forall j \in [m/2] \setminus \mathcal{U}(x): x_\ell^j = 0^{n/m} \wedge x_r^j \in C\},\\
N_2&:=\{x \in N \mid \exists! j \in [m/2]: x_\ell^j \in P \setminus \{1^{n/m}\} \wedge x_r^j \in T\},\\
N_3&:=\{x \in N \mid \exists! j \in [m/2]: x_\ell^j = 1^{n/m} \wedge x_r^j \notin C\}.
\end{align*}
Then $N=N_1 \cup N_2 \cup N_3$ and we determine the maximum possible contribution of each $N_i$ to $S$ separately. Then $|S|$ cannot be larger than the sum of these contributions. To this end, we often use the inequality 
\begin{align}
\label{eq:useful}
\frac{3m}{4} \left(\frac{2n\sqrt{2e}}{m}\right)^{m/2} \leq \frac{1}{3} \left(\frac{2n}{m}\right)^{m-1}
\end{align}
for $m \geq 6$ which is equivalent to $3m/4 \cdot (2e)^{m/4} \leq (2n/m)^{m/2-1}/3$. The latter holds due to the inequality $3m/4 \cdot (2e)^{m/4} \leq 16^{m/2 - 1}/3$ for $m \geq 6$ which implies $3m/4 \cdot (2e)^{m/2} \leq 16^{m/2 - 1}/3 \leq (2n/m)^{m/2 - 1}/3$ (since $n$ is divisible by $8m$).

$N_1$: Note that 
$$(f_{2j-1}(x),f_{2j}(x)) = (g_{2j-1}(x),g_{2j}(x)) + (7n/m \cdot |\mathcal{U}(x)|+10n/m) \vec{1}$$
if $j \in \mathcal{U}(x)$ (since $x_\ell^j = 1^{n/m}$) and 
$$(f_{2j-1}(x),f_{2j}(x)) = (g_{2j-1}(x),g_{2j}(x)) + (7n/m \cdot |\mathcal{U}(x)|+5n/m) \vec{1}$$
if $j \notin \mathcal{U}(x)$. Hence, for a fixed $\mathcal{U}(x)$, there are at most $(2n/m)^{m/2}$ distinct fitness values which can be attained by search points from $N_1$. This implies that the contribution of $N_1$ to $S$ is at most
$$\binom{m/2}{|\mathcal{U}(x)|} \left(\frac{2n}{m}\right)^{m/2} \leq \binom{m/2}{m/4} \left(\frac{2n}{m}\right)^{m/2} \leq \left(\frac{em/2}{m/4}\right)^{m/4} \left(\frac{2n}{m}\right)^{m/2}  = \left(\frac{2n\sqrt{2e}}{m}\right)^{m/2} \leq \frac{1}{3} \cdot \left(\frac{2n}{m}\right)^{m-1}$$
if $m > 4$ where we used Equation~\ref{eq:useful} and $\binom{n}{k} \leq (en/k)^k$, the latter by Stirling's formula. If $m=4$ we have a contribution of at most 
$$\binom{2}{1} \cdot \left(\frac{2n}{m}\right)^{m/2} = 2 \cdot \left(\frac{2n}{m}\right)^{m/2} = \frac{16}{8} \cdot \left(\frac{2n}{m}\right)^{m/2} \leq \frac{1}{8} \cdot \left(\frac{2n}{m}\right)^{m-1}$$
$N_2$: For $j \in [m/2]$ consider $N_2^j := \{x \in N_2 \mid x_\ell^j \in P \setminus \{1^{n/m}\} \wedge x_r^j \in T\}$. Then $N_2 = \bigcup_{j=1}^{m/2} N_2^j$. For $x \in N_2^j$ we have that 
$$(f_{2j-1}(x),f_{2j}(x)) = (g_{2j-1}(x),g_{2j}(x)) + (7n/m \cdot |\mathcal{U}(x)|+5n/m + (3n/m-\zeros{x_\ell^{(k+1)/2}})) \cdot \vec{1}$$
and 
$$(f_{2i-1}(x),f_{2i}(x)) = (g_{2i-1}(x),g_{2i}(x)) + (7n/m \cdot |\mathcal{U}(x)| + a) \cdot \vec{1}$$
for $i \neq j$ where $a \in \{5n/m,10n/m\}$. We have $a=10n/m$ if $i \in \mathcal{U}(x)$ and $a=5n/m$ otherwise. Define $V_j:= \{(f_1(x), \ldots , f_{2j-2}(x),f_{2j}(x), \ldots , f_m(x)) \mid x \in N_2^j\}$ and let $S_j \subset N_2^j$ a set of mutually incomparable solutions. Since two $x,y \in N_2^j$ with $(f_1(x), \ldots , f_{2j-2}(x),f_{2j}(x), \ldots , f_m(x)) = (f_1(y), \ldots , f_{2j-2}(y),f_{2j}(y), \ldots , f_m(y))$ are comparable, we see that $|S_j| \leq |V_j|$. Further, if $\mathcal{U}(x)$ is fixed, for $x \in N_2^j$ the objectives $f_{2j-1}(x)$ and $f_{2j}(x)$ attain at most $3n/m$ different values while for $i \neq j$ there are only $2n/m$ different values for $(f_{2i-1}(x),f_{2i}(x))$ (since $|\mathcal{U}(x)|$ is the same for all $x \in S$ and $g_{2i-1}(x)+g_{2i}(x)=2n/m$). 
Hence, we obtain a contribution to $S_j$ of at most (since $\mathcal{U}(x) \subset [m/2] \setminus \{j\}$)
$$|V_j| \leq \frac{3n}{m} \cdot \binom{m/2-1}{|\mathcal{U}(x)|} \left(\frac{2n}{m}\right)^{m/2-1} \leq \frac{3n}{m} \cdot \binom{m/2}{m/4} \left(\frac{2n}{m}\right)^{m/2-1} = \frac{3}{2} \cdot \binom{m/2}{m/4} \left(\frac{2n}{m}\right)^{m/2} \leq \frac{3}{2} \cdot \left(\frac{2n\sqrt{2 e}}{m}\right)^{m/2}$$ 
where the last inequality can be derived in a similar way as above in the case for $N_1$. Finally, the total contribution to $S$ cannot be larger than the sum of the contributions to the single $S_j$ which is at most
$$\frac{m}{2} \cdot \frac{3}{2} \cdot \left(\frac{2n\sqrt{2 e}}{m}\right)^{m/2} = \frac{3m}{4} \cdot \left(\frac{2n\sqrt{2 e}}{m}\right)^{m/2} \leq \frac{1}{3} \cdot \left(\frac{2n}{m}\right)^{m-1}$$
with Equation~\ref{eq:useful} if $m > 4$. When $m=4$ we obtain a contribution to $S$ of at most 
$$\frac{m}{2} \cdot \frac{3n}{m} \cdot \binom{2}{1} \left(\frac{2n}{m}\right)^{m/2-1} = \frac{m}{2} \cdot \frac{6n}{m} \cdot \left(\frac{2n}{m}\right)^{m/2-1} = \frac{3m}{2} \cdot \left(\frac{2n}{m}\right)^{m/2} \leq \frac{6}{16} \cdot \frac{2n}{m} \cdot \left(\frac{2n}{m}\right)^{m/2} = \frac{3}{8} \cdot \left(\frac{2n}{m}\right)^{m-1}$$
$N_3$: The estimation is similar as in the previous case since for all $j \in [m/2]$ there is $a \in \{5n/m,10n/m\}$ with
$$(f_{2j-1}(x),f_{2j}(x)) = (g_{2j-1}(x),g_{2j}(x)) + (7n/m \cdot |\mathcal{U}(x)|+a) \cdot \vec{1}.$$
For $j \in [m/2]$ consider $N_3^j := \{x \in N_3 \mid x_\ell^j = 1^{n/m} \wedge x_r^j \notin C\}$. Then $N_3 = \bigcup_{j=1}^{m/2} N_3^j$. Let $S_j \subset N_3^j$ a set of mutually incomparable solutions. Let $V_j:= \{(f_1(x), \ldots , f_{2j-2}(x),f_{2j}(x), \ldots , f_m(x)) \mid x \in N_3^j\}$. Similar to the above, we observe that $|S| \leq |V|$ and that $f_{2j}(x)$ attains $2n/m$ distinct values, and $(f_{2i-1}(x), f_{2i}(x))$ also attains $2n/m$ distinct values for $i \neq j$ if $\mathcal{U}(x)$ is fixed (since $g_{2i-1}(x)+g_{2i}(x)=2n/m$). Then the contribution of $N_3$ to $S$ of at most 
$$\frac{m}{2} \cdot \binom{m/2-1}{|\mathcal{U}(x)|} \cdot \frac{2n}{m} \cdot \left(\frac{2n}{m}\right)^{m/2-1} \leq \frac{1}{3} \cdot \left(\frac{2n}{m}\right)^{m-1}$$
if $m>4$ and $3/8 \cdot (2n/m)^{m-1}$ if $m=4$.
In all the cases, by summing up the single contributions of $N_1$, $N_2$ and $N_3$ to $S$, we obtain that $S \leq (2n/m)^{m-1}$, concluding the proof.
\end{proof}

\section{Runtime Analyses of GSEMO and NSGA-III with Uniform Crossover on $m$-\RRRMOuni}

As for the $m$-\RRRMO case, we will show that NSGA-III and GSEMO exhibit a significant performance gap depending on whether uniform crossover is used, across a wide range of the number $m$ of objectives which becomes exponential if $m$ is constant.

\subsection{Analysis of NSGA-III}

\begin{theorem}
\label{thm:Runtime-Analysis-NSGA-III-mRRMO-uniform}
Let $m \in \mathbb{N}$ be divisible by $2$ and $n$ be divisible by $8m$. Then the algorithm \nsgaIII (Algorithm~\ref{alg:nsga-iii}) with $p_c \in (0,1)$, $\varepsilon_{\text{nad}} \geq 7n/2+12n/m$, a set $\refer$ of reference points as defined above for $p \in \mathbb{N}$ with $p \geq 2m^{3/2}(7n/2+12n/m)$, and a population size $\mu \geq (2n/m)^{m-1}$, $\mu = 2^{O(n^2)}$, finds a Pareto set of $f:=m\text{-\RRRMOuni}$ in expected $O(m n^2/(1-p_c)+ n^2/(p_c \cdot m))$ generations and $O(\mu m n^2/(1-p_c)+ \mu n^2/(p_c \cdot m))$ fitness evaluations.  
\end{theorem}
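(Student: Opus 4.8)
The plan is to follow the proof of Theorem~\ref{thm:Runtime-Analysis-NSGA-III-mRRMO-onepoint} closely. First one checks $f_{\max}=7n/2+12n/m$ (from $|\mathcal{U}(x)|\le m/2$, $g_k\le 2n/m$ and the constant offsets), so that with the stated $\varepsilon_{\text{nad}}$, $p$ and $\mu\ge(2n/m)^{m-1}\ge|S|$ (Lemma~\ref{lem:RRRMOuni-dominance}), Lemma~\ref{lem:Reference-Points} guarantees first-ranked individuals are never lost; then one runs the method of typical runs. \emph{Phase~1} (reach $f(x)\neq 0$): by Chernoff bounds a uniform individual has $n/(12m)\le\ones{x_{i,\ell}^j}\le n/(6m)$ in every one of the $2m$ sub-blocks with probability $1-e^{-\Omega(n/m)}\ge 1/2$, so distinguishing $m=2$ and $m\ge 4$ exactly as before and using $\mu\ge(2n/m)^{m-1}$, the whole population fails this only with probability $2^{-\Omega(n^3)}$; expected $1+o(1)$ generations. \emph{Phase~2} (reach $x^j\in C$ for all $j$, i.e.\ $x\in M\cup N$): on $L$ the fitness rewards extending $\LO(x_r^j)$ and $\TZ(x_r^j)$ (or $\LZ(x_r^j),\TOs(x_r^j)$ when $\LO(x_r^j)=0$); tracking the potential ``$\sum_j$ distance of $x_r^j$ to $C$'' of the current best (non-dominated, hence persistent) individual, each unit of progress needs a mutation-only step on it flipping one specific bit (per-generation success $\Omega((1-p_c)/n)$ by Lemma~\ref{lem:Badkobeh}); total progress $O(n)$ gives $O(n^2/(1-p_c))$ generations. \emph{Phase~3} (drive every $x_\ell^j$ to $0^{n/m}$ while keeping $x_r^j\in C$, reaching $N$ with $\mathcal{U}(x)=\emptyset$): on $M$ the fitness rewards decreasing $\ones{x_\ell^j}$; again $O(n)$ single-bit flips, $O(n^2/(1-p_c))$ generations.

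\emph{Phases $\gamma+4$, $\gamma=0,\dots,m/2-1$} (raise $|\mathcal{U}(x)|$ from $\gamma$ to $\gamma+1$): by Lemma~\ref{lem:RRRMOuni-properties}(1)--(3) every individual with $|\mathcal{U}|=\gamma$ dominates all others, so such individuals persist. \emph{Subphase~A} fills $P_t$ with individuals having $|\mathcal{U}|=\gamma$ via the geometric-doubling argument of Theorem~\ref{thm:Runtime-Analysis-NSGA-III-mRRMO-onepoint}'s Subphase~A ($O(n^2/(1-p_c))$ generations). \emph{Subphase~B} picks a fresh block $j$ and, by $O(n^2/(m(1-p_c)))$ single-bit mutations (walking $x_r^j$ around the $C$-cycle) plus cloning, makes $P_t$ contain $\Omega(\mu)$ copies each of two \emph{incomparable} individuals $x^\ast,y^\ast$ agreeing outside block $j$, with $x^\ast_\ell{}^j=y^\ast_\ell{}^j=0^{n/m}$, $x^\ast_r{}^j=1^{n/(2m)}0^{n/(2m)}$, $y^\ast_r{}^j=0^{n/(2m)}1^{n/(2m)}$ (Hamming distance $n/m$ in block $j$); this is the analogue of Claim~\ref{claim:conformity-NSGAIII}. \emph{Subphase~C} recombines: a good pair is drawn with probability $\Omega(1)$ per trial, uniform crossover then makes $z_r^j$ land in $T$ (its four independent length-$n/(4m)$ halves each balanced) with probability $\big(\binom{n/(4m)}{n/(8m)}2^{-n/(4m)}\big)^4=\Theta(m^2/n^2)$ while leaving the other blocks untouched with probability $\Omega(1)$; afterwards the fitness on $N$ rewards extending $\LO(z_r^j),\TZ(z_r^j)$ and $\LO(z_\ell^j)$, so $O(n^2/(m(1-p_c)))$ single-bit mutations drive $z_\ell^j$ to $1^{n/m}$ and $z_r^j$ into $C$, yielding $j\in\mathcal{U}$. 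The per-generation crossover success is $\Omega(\min\{1,\mu p_c m^2/n^2\})$, giving $O(n^2/(p_c m^2))$ generations. Over the $m/2$ values of $\gamma$ these phases cost $O(mn^2/(1-p_c)+n^2/(p_c m))$ generations.

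\emph{Final phase} (cover the whole Pareto set): once $|\mathcal{U}(x)|=m/2$, every individual with $\mathcal{U}=[m/2]$ is Pareto-optimal and the population consists of mutually incomparable solutions (Lemma~\ref{lem:RRRMOuni-properties}(4)), so all of them are protected. The structural point is that $\mathcal{P}_u$ is a product $\prod_{j=1}^{m/2}C$ in which the $2n/m$ elements of each $C$-factor form a single \emph{cycle under one-bit flips}; one therefore sweeps the $m/2$ coordinates one at a time, and sweeping coordinate $j$ amounts to running up to $(2n/m)^{m/2}$ parallel ``cover a cycle'' processes on the huge population, each process advancing one step per $O(n/(1-p_c))$ generations, so $O(n^2/(1-p_c))$ generations per coordinate by a concentration argument as in Phase~4 of Theorem~\ref{thm:Runtime-Analysis-NSGA-III-mRRMO-onepoint}, hence $O(mn^2/(1-p_c))$ in total. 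Summing all phases gives $O(mn^2/(1-p_c)+n^2/(p_c m))$ generations, and multiplying by the $\mu$ evaluations per generation gives the fitness-evaluation bound. (It is this one-bit cycle structure of $C$ that keeps the bound at $O(mn^2/(1-p_c))$ rather than the $O(n^3/(1-p_c))$ one gets for $m$-\RRRMO, where shifting a block costs two bit flips.)

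The hard part is Subphase~C together with its preparation in Subphase~B. Unlike one-point crossover, uniform crossover forces the two parents to agree outside the active block, so one must show that $\Omega(\mu)$ copies of a specific pair of incomparable solutions at maximal block-Hamming-distance can be created by mutation/cloning and maintained under NSGA-III's reference-point selection, and then combine this with the $\Theta(m^2/n^2)$ probability that uniform crossover lands the block in $T$ and with the subsequent gradient climb back into $C$. Secondary subtleties are the exact persistence bookkeeping (that no intermediate solution produced in Phases~2,3 or in the sweeps is ever dropped) and matching the covering-phase estimate to the stated bound uniformly over all $m\le n$.
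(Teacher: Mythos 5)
There is a genuine gap in the recombination step. Your Subphases~B/C hinge on the claim that $P_t$ can be made to contain, and \emph{maintain}, $\Omega(\mu)$ copies each of two designated complementary individuals $x^\ast,y^\ast$, so that a good pair is drawn with probability $\Omega(1)$ per trial. \nsgaIII's survival selection does not support this: Lemma~\ref{lem:Reference-Points} only guarantees that each first-ranked \emph{fitness value} keeps one representative, while the reference-point niching explicitly deprioritizes duplicates (it always serves the reference point with the fewest already-selected associates before returning to an occupied one). Since single-bit mutations keep generating new, mutually incomparable members of the antichain containing $x^\ast$ (other rotations of the $C$-cycle, in every block), and since $\mu$ may equal the maximum antichain size $(2n/m)^{m-1}$, the duplicates of $x^\ast,y^\ast$ are progressively squeezed out in favour of covering new reference points; nothing in your argument prevents the number of surviving copies from dropping to $O(1)$, which would degrade the per-generation crossover success to $\Omega(p_c m^2/(\mu n^2))$ and blow the bound by a factor of $\mu$. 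The paper's proof avoids duplicates altogether: its Subphase~B first finds \emph{all} of $\mathcal{A}_t$ (every individual with the given $\mathcal{U}$-pattern, left blocks $0^{n/m}$ and right blocks in $C$ elsewhere), so that at least $\mu$ \emph{distinct} complementary pairs exist; selecting some such pair then has probability at least $1/\mu$ per trial, and over $\mu/2$ trials the per-generation success is $\Omega(p_c m^2/n^2)$ with no duplication needed.

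A second omission: once $\gamma_t>0$, mutation or crossover can produce individuals in $N$ whose $\mathcal{U}$-set has the same cardinality $\gamma_t$ but is a \emph{different} subset of $[m/2]$; these are incomparable to the existing population, are protected by Lemma~\ref{lem:Reference-Points}, and destroy the assumption that all parents agree outside the active block. The paper devotes Claim~\ref{claim:failure-probability} to showing that such an event (growth of $\mathcal{K}_t$) occurs with probability at most $5/6$ before $\gamma_t$ increases, so the expected number of restarts of Subphases~A--C is $O(1)$; your proposal does not address this at all, and without it the "agree outside block $j$" precondition of your crossover step is not justified. The remaining phases (initialization, the gradient climbs through $L$, $M$, and back into $C$, and the final covering sweep) match the paper's argument in substance, and your $O(mn^2/(1-p_c))$ covering estimate, though a factor $m$ weaker than the paper's $O(n^2/(1-p_c))$, still fits the stated bound.
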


\begin{proof}
We have $f_{\max}=7n/2+12n/m$ since $|\mathcal{U}(x)|$ has a maximum value of $m/2$ and for odd $k$ the functions $g_k(x)$ and $g_{k+1}(x)$ have a maximum value of $2n/m$. So we may always apply Lemma~\ref{lem:Reference-Points} to protect non-dominated solutions which means that for every non-dominated $x \in P_t$ there is $x' \in P_{t+1}$ weakly dominating $x$. 

\textbf{Phase 1:} Create $x \in P_t$ with $f(x) \neq 0$.

An individual $x$ has fitness distinct from $0$ if $x_\ell^j \in U$ for all $j \in [m/2]$ since in this case, $x \in L \cup M$. By a classical Chernoff bound the probability that $x_\ell^j \in U$ for a fixed $j \in [m/2]$ is $p^{\text{init}}:=1-e^{-\Omega(n/m)}$ after initializing $x$ uniformly at random. Note that there is a constant $0<c<1$ such that $p^{\text{init}} \geq c$ regardless of $m$. Further, the probability that $x_\ell^j \in U$ for all $j \in [m/2]$ is $(p^{\text{init}})^{m/2}$. After initializing $\mu$ individuals uniformly at random, the probability that there is one individual $x$ satisfying $x_\ell^j \in U$ for all $j \in [m/2]$ is $1-(1-(p^{\text{init}})^{m/2})^{\mu}$. If $m=2$ then $1-(1-(p^{\text{init}})^{m/2})^{\mu} = 1-(1-p^{\text{init}})^{\mu} =  1-e^{-\Omega(\mu n)} = 1-e^{-\Omega(n^2)}$ since $\mu = \Omega(n)$. If $m>2$ then 
\begin{align*}
1-(1-(p^{\text{init}})^{m/2})^{\mu} &\geq 1-e^{-\mu \cdot (p^{\text{init}})^{m/2}} \geq 1-e^{-(2n/m)^{m-1} \cdot (p^{\text{init}})^{m/2}}\\
&\geq 1-e^{-c^{m/2} (2n/m)^{m-1}} = 1-e^{-\sqrt{c}(2\sqrt{c}n/m)^{m-1}} = 1-e^{-\Omega(n^3)}.
\end{align*}
For $m = \Omega(n)$ we see that $(2\sqrt{c}n/m)^{m-1} = e^{-\Omega(n)}$ due to $m \leq n/8$ and hence, the last equality holds. It is also satisfied for $m = o(n)$ by Lemma~\ref{lem:function-monotone}(iii) (applied on $k=2\sqrt{c}n$) since $m \geq 4$. 
Hence, the probability that Phase~1 is not finished after initializing $\mu$ individuals is at most $e^{-\Omega(n^2)}$ regardless of the value of $m$. Then, any individual can be created through mutation with probability $n^{-n}$, regardless of whether crossover is applied. Consequently, the expected number of generations required to complete this phase is at most $(1 - e^{-\Omega(n^2)}) + n^n \cdot e^{-\Omega(n^2)} = 1+o(1)$. 

\textbf{Phase 2:} Create $x \in M$, or in other words an $x$ with $x_r^j \in C$ and $\ones{x_\ell^j} \leq 4n/(6m)$ for all $j \in [m/2]$.

Let $Z_t:=\max\{\sum_{j=1}^m g_j(x) \mid x \in P_t, x_\ell^j \in U \text{ for all } j \in [m/2]\}$. Then $m/2 + n/2 \leq Z_t \leq n$ since for odd $k \in [m]$ we either have $\LO(x^{(k+1)/2})>0$ (implying $g_k(x) \geq 1$ and $g_{k+1}(x) \geq n/m$) or $\LZ(x^{(k+1)/2})>0$ (implying $g_k(x) \geq n/m$ and $g_{k+1}(x) \geq 1$) yielding $g_k(x)+g_{k+1}(x) \geq n/m+1$. We also have $g_k(x)+g_{k+1}(x) \leq 2n/m$ which implies the upper bound. Further, $Z_t$ cannot decrease by Lemma~\ref{lem:Reference-Points} since a solution $x$ with $\sum_{k=1}^m g_k(x)=Z_t$ is non-dominated due to its maximum sum of objective values. 
To increase $Z_t$ in one trial it suffices to choose a parent $z \in P_t$ with $\sum_{k=1}^m g_k(z)=Z_t$ (prob. at least $1-(1-1/\mu)^2 \geq 1/\mu$), omit crossover (prob. $(1-p_c)$) and flip  a specific bit (to increase $\LO(x^j)+\TZ(x^j)$ or $\LZ(x^j)+\TOs(x^j)$ in a block $j$ where this sum is not maximum) (prob. at least $1/n \cdot (1-1/n)^{n-1} \geq 1/(en)$). Hence, in one generation, the probability to increase $Z_t$ is at least 
$$1-\left(1-\frac{1-p_c}{en \mu}\right)^{\mu/2} \geq \frac{(1-p_c)/(2en)}{1+(1-p_c)/(2en)} \geq \frac{(1-p_c)}{4en}$$
where the last inequality holds by Lemma~\ref{lem:Badkobeh}. Hence, the expected number of generations to complete this phase is at most 
$$\left(\frac{n}{2}-\frac{m}{2}\right) \cdot \frac{4en}{(1-p_c)} = O\left(\frac{n^2}{1-p_c}\right).$$

\textbf{Phase 3:} Create $x \in N$.

We estimate the expected time until we created $x$ with $x_r^j \in C$ and $x_\ell^j = 0^{n/m}$ for all $j \in [m/2]$ since such an $x$ is contained in $N$. Let $O_t:=\max\{\sum_{j=1}^{m/2} \zeros{x_\ell^j} \mid x \in P_t \cap M\}$. Then $O_t \in \{\lceil{n/6}\rceil, \ldots , n/2\}$ (since the number of zeros of $x \in M$ is at least $n/m - 4n/(6m) = 2n/(6m)$ in each left block) and this phase is finished if $O_t=n/2$. Further, $O_t$ cannot decrease since a solution $x$ with $x \in M$ and $\sum_{j=1}^{m/2} \zeros{x_\ell^j} = O_t$ is non-dominated (since the sum of all objectives is maximum). To increase $O_t$ in one trial one has to choose an individual $y$ with $\sum_{j=1}^{m/2} \zeros{y_\ell^j} = O_t$ as a parent, omit crossover and flip one of $n/2-O_t$ ones in one of the right blocks while not flipping another bit. This happens with probability at least $(1-p_c)(n/2-O_t)/(en\mu)$ in one trial and hence, with probability at least 
$$1-\left(1-\frac{(1-p_c)(n/2-O_t)}{en \mu}\right)^{\mu/2} \geq \frac{(1-p_c)(n/2-O_t)/(2en)}{1+(1-p_c)(n/2-O_t)/(2en)} \geq \frac{(1-p_c)(n/2-O_t)}{4en}$$
in one generation. Hence, the expected number of generations until $O_t=n/2$ is at most 
$$\sum_{j=\lceil{n/6}\rceil}^{n/2-1} \frac{4en}{(1-p_c)(n/2-j)} = O\left(\frac{n \log(n)}{1-p_c}\right).$$

For defining the next phases, let $\gamma_t:=\max\{\vert{\mathcal{U}(x)}\vert \mid x \in N\}$. Note that $\gamma_t \in \{0, \ldots , m/2\}$ and we have $\gamma_t = m/2$ if a Pareto-optimal individual is created (by Lemma~\ref{lem:RRRMOuni-properties}(4)). Suppose that there is an individual $z \in P_t \cap N$ with $\vert{\mathcal{U}(z)}\vert = \gamma_t$ where $\gamma_t < m/2$, but no corresponding individual $w \in P_t \cap N$ with $\vert{\mathcal{U}(w)}\vert > \gamma_t$. 

\textbf{Phase $\gamma_t$ + 4:} Create $x \in N$ with $|\mathcal{U}(x)|>\gamma_t$.

Phase~4 starts when $\gamma_t=0$. If Phase~$m/2 + 3$ is finished, then there is an individual $x \in P_t \cup N$ with $\mathcal{U}(x) = [m/2]$ and hence, a Pareto-optimal search point has been created. 

\textbf{Subphase~A:} $P_t \subset N$ and $|\mathcal{U}(z)| = \gamma_t$ for every $z \in P_t$. 

Let $S_t:=\{z \in P_t \mid z \in N \text{ and } |\mathcal{U}(z)| = \gamma_t\}$. By Lemma~\ref{lem:RRRMOuni-properties}(1)-(3) we see that an individual $z \in S_t$ dominates every other $w \in P_t \setminus S_t$ since either $w \in L \cup M$ or $w \in N$ with $|\mathcal{U}(w)|< |\mathcal{U}(z)|$. Hence, $|S_t|$ cannot decrease. As in Subphase~A in the proof of Theorem~\ref{thm:Runtime-Analysis-NSGA-III-mRRMO-onepoint}, this phase can be finished by producing $\mu-1$ clones of individuals from $S_t$ happening in at most $O(n^2/(1-p_c))$ generations in expectation since $\mu = 2^{O(n^2)}$.

Denote by $t^*$ the first generation when Subphase~A is finished. We can assume that $x^j_r \in C$ and $x^j_\ell = 0^{n/m}$ for each $x \in P_t$ and every $j \notin \mathcal{U}(x)$ since otherwise we reach the goal of a subsequent phase. Let
\begin{align*}
\mathcal{K}_t:=\{\mathcal{U} \subset [m/2] \mid |\mathcal{U}| = \gamma_t \wedge \exists x \in P_t: \mathcal{U}(x) = \mathcal{U} \wedge x_r^j \in C \wedge x_\ell^j = 0^{n/m} \text{ for all $j \in [m/2] \setminus \mathcal{U}(x)$})\}.
\end{align*}
Denote the increase of $\mathcal{K}_t$ in an iteration $t \geq t^*$ before increasing $\gamma_t$ as a \emph{failure}. The following claim estimates the probability that a failure occurs.   

\begin{claim}
\label{claim:failure-probability}
The probability $p_{\text{fail}}$ that a failure occurs is at most $5/6$.
\end{claim}

\begin{proofofclaim}
Denote the event that $\gamma_t$ increases before a failure occurs as a \emph{success} and denote its probability by $p_{\text{succ}}$. We show that $5 p_{\text{succ}} \geq p_{\text{fail}}$ and the claim follows by noting that $p_{\text{fail}} \leq 1- p_{\text{succ}} \leq 1-p_{\text{fail}}/5$. For $z \in \{0,1\}^n$ denote by $p_z$ the probability that the outcome $z$ is produced in one trial. 

At first suppose that only mutation is executed (with probability $(1-p_c)$) and fix the parent $y \in P_t$. Let $O_t$ be the set of all possible outcomes $z \in N$ with $|\mathcal{U}(z)|=\gamma_t$, $\mathcal{U}(z) \neq \mathcal{U}(y)$ and $z_r^j \in C$ for all $j \in [m/2]$ (i.e. either $x_\ell^j=1^{n/m}$ or $x_\ell^j = 0^{n/m}$ for all $j \in [m/2]$). Note that a failure can only occur if $z \in O_t$ and hence, the failure probability $p_{\text{failure}}$ is at most $\sum_{z \in O_t} p_z$. Observe that for $z \in O_t$ there are $i,j \in [m/2]$ with $i \neq j$ such that $y_\ell^i = 0^{n/m}, y_\ell^j = 1^{n/m}$, but $z_\ell^i = 1^{n/m}, z_\ell^j = 0^{n/m}$. For $z \in O_t$ denote by $z^*$ the corresponding outcome with $(z^*)_\ell^j = 1^{n/m}$ for the smallest $j \in [m/2]$ with $y_\ell^j = 1^{n/m}$ and $z_\ell^j = 0^{n/m}$, and coinciding with $z$ in all other blocks (including $z_r^j$) and denote by $O_t^*:=\{z^* \mid z \in O_t\}$ the set of all such possible corresponding $z^*$. Then for $z^* \in O^*_t$ we see that $z^* \in N$ and $|\mathcal{U}(z^*)| = \gamma_t+1 >\gamma_t$. Hence, if such a $z^*$ is produced a success occurs and therefore $p_{\text{succ}} \geq \sum_{z^* \in O_t^*} p_{z^*}$.
For $z^* \in O^*_t$ denote by $O_t^{z^*}:=\{w \in O_t \mid w^* = z^*\}$ the set of all outcomes $w$ from $O_t$ whose corresponding $w^*$ coincides with $z^*$. We obtain  $p_{z^*} ´\geq \sum_{w \in O_t^{z^*}}p_w \cdot n^{n/m} \cdot (1-1/n)^{n/m}$ since the probability to flip no bits in $y_\ell^j$ is $(1-1/n)^{n/m}$ while flipping all bits is $(1/n)^{n/m}$. Since each $z \in O_t$ belongs to a $O_t^{w^*}$ for a $w^* \in O_t^*$ we see that $\sum_{z^* \in O^*_t} p_{z^*} \geq \sum_{z \in O_t} p_z \cdot n^{n/m} \cdot (1-1/n)^{n/m}$ and hence, $p_{\text{succ}} \geq p_{\text{failure}} \cdot n^{n/m} \cdot (1-1/n)^{n/m} \geq p_{\text{failure}}/5$.

Suppose that crossover is executed followed by mutation (with probability $p_c$) and let $y_1,y_2 \in P_t$ be the parents. If $\mathcal{U}(y_1) = \mathcal{U}(y_2)$ we can argue in a similar way as in the mutation only case since crossover has no effect on the left blocks. So suppose that $\mathcal{U}(y_1) \neq \mathcal{U}(y_2)$. Denote by $Q_t$ the set of all possible outcomes $z$ with $\gamma_t$ all-one strings and $n/2-\gamma_t$ all-zero strings in left blocks. 
Let $a := |\mathcal{U}(y_1) \setminus \mathcal{U}(y_2)|$. Then, the bit strings $(y_1)_\ell^j$ and $(y_2)_\ell^j$ differ in exactly $2a$ blocks $j \in [2m]$, which are the only left blocks where crossover has an effect. After applying crossover and mutation, we have $|\mathcal{U}(z)| = \gamma_t$ if exactly half of these $2a$ differing blocks result in $1^{n/m}$ and the other half in $0^{n/m}$, while the remaining left blocks are not changed. If, instead, there are $j \in \{0, \dots, 2a\}$ blocks among the $2a$ left blocks that become $1^{n/m}$ (or $0^{n/m}$), then mutation must flip at least $|j-a|$ additional left blocks entirely to ensure that $|\mathcal{U}(z)| = \gamma_t$. Such specific $|j-a|$ blocks are mutated entirely with probability at most $n^{-|a-j|n/m}$. Furthermore, for a crossover followed by mutation on $(0^{n/m},1^{n/m})$, the probability of producing a specific bit string is $2^{-n/m}$. Hence, the probability that $z \in Q_t$ is at most 
\begin{align*}
\sum_{j=0}^{2a} \binom{2a}{j} \binom{m/2}{|a-j|} 2^{-2a n/m} n^{-|a-j|n/m} &= \binom{2a}{a} 2^{-2a n/m} + 2 \sum_{j=0}^{a-1} \binom{2a}{j} \binom{m/2}{a-j} 2^{-2a n/m} n^{-(a-j)n/m} \\
&\leq \binom{2a}{a} 2^{-2a n/m} + 2 \sum_{j=0}^{a-1}\binom{2a}{a-1} \binom{m/2}{1} 2^{-2a n/m} n^{-n/m}\\
&\leq \binom{2a}{a} 2^{-2a n/m} + \binom{2a}{a} a m 2^{-2a n/m} n^{-n/m} \\
&= \binom{2a}{a} 2^{-2a n/m}(1+ am n^{-n/m}).
\end{align*}
For a given $w = (w_r^1, \ldots , w_r^{m/2}) \in C^{m/2}$ denote by $p_w$ the probability that $z_r^j = w_r^j$ for all $j \in [m/2]$. Then a necessary condition that a failure occurs is that an outcome $z$ with $z \in Q_t$ and $z_r^j \in C$ for all $j \in [m/2]$ is produced. Therefore,
$$p_{\text{fail}} \leq \binom{2a}{a} 2^{-2a n/m}(1+ am n^{-n/m}) \sum_{w \in C^{m/2}} p_w.$$
Furthermore, a success occurs if $z_r^j \in C$ for all $j \in [m/2]$ after crossover and mutation, while $j > a$ blocks among the $2a$ left blocks become $1^{n/m}$ during crossover, the remaining blocks become $0^{n/m}$, and mutation flips no bit in any left block. This happens with probability at least
$$\left(\sum_{w \in C^{m/2}}p_w\right) \cdot \sum_{j=0}^{a-1}\binom{2a}{j} 2^{-2a n/m} \left(1-\frac{1}{n}\right)^{n/2} \geq \binom{2a}{a} \frac{2^{-2a n/m}}{4} \cdot \sum_{w \in C^{m/2}}p_w \geq \frac{p_{\text{fail}}}{4(1+amn^{-n/m})} \geq \frac{p_{\text{fail}}}{5}$$
(since $\sum_{j=0}^{a-1}\binom{2a}{j} \geq \binom{2a}{a-1} \geq \binom{2a-1}{a-1} = \binom{2a}{a}/2$ and $(1-1/n)^{n/2} \geq 1/2$) that $\gamma_t$ increases. Hence, $5 p_{\text{succ}} \geq p_{\text{fail}}$.

By the law of total probability on the mutation and crossover cases above, and on all possible parents in each case, the statement follows.
\end{proofofclaim}

Now we consider the next subphases as follows where we define
$$\mathcal{A}_t:=\{x \in N \mid \mathcal{U}(x) \in \mathcal{K}_t, x_\ell^j = 0 \text{ and } x_r^j \in C \text{ for all } j \in [m/2] \setminus \mathcal{U}(x)\}.$$

\textbf{Subphase B:} 
Find all individuals in $\mathcal{A}_t$ or a failure occurs. 

We may assume that no failure occurs, as the occurrence of a failure only accelerates the completion of the phase. We give an upper bound for the duration of this phase for all possible $\mathcal{A}:=\mathcal{A}_t$. Fix $y \in \mathcal{A}$ which is not already found. Let $d_t:=\min\{H(x,y) \mid x \in P_t \cap \mathcal{A} \wedge \mathcal{U}(x)=\mathcal{U}(y)\}$. Since $P_t \cap \mathcal{A} \neq \emptyset$ we have $1 \leq d_t \leq n/m (m/2 - \gamma_t) \leq n/2$. Further, $d_t$ cannot increase since all solutions in $\mathcal{A}$ are non-dominated. One can decrease $d_t$ by choosing an individual $w \in \mathcal{A}$ with $H(w,y) = d_t$ as a parent where $w_\ell^j \neq y_\ell^j$ for $j \in [m/2]$, omitting crossover and then flipping 
\begin{itemize}
\item the first zero in $w_\ell^j$ from the left to increase $\LO(w_\ell^j)$ if $0<\LO(w_\ell^j)<\LO(y_\ell^j)$ or 
\item the first one in $w_\ell^j$ from the right to decrease $\LO(w_\ell^j)$ if $0<\LO(y_\ell^j)<\LO(w_\ell^j)$ or
\item the first one in $w_\ell^j$ from the left to increase $\LZ(w_\ell^j)$ if $0<\LZ(w_\ell^j)<\LZ(y_\ell^j)$ or 
\item the first zero in $w_\ell^j$ from the right to decrease $\LZ(w_\ell^j)$ if $0<\LZ(y_\ell^j)<\LZ(w_\ell^j)$ 
\end{itemize}
while not changing the remaining bits. Such a decrease happens with probability at least $(1-p_c)/(\mu e n)$ in one trial and hence, with probability at least 
$$1-\left(1-\frac{1-p_c}{\mu e n}\right)^{\mu/2} \geq \frac{(1-p_c)/(2en)}{1+(1-p_c)/(2en)} \geq \frac{1-p_c}{4en}$$
in one generation. Hence, we see that the time $T_y$ to find $y$ is stochastically dominated by the independent sum $Z:=\sum_{j=1}^{n/2} Z_j$ of geometrically distributed random variables with success probability $q:=(1-p_c)/(4en)$. We have $\expect{Z}=2en^2/(1-p_c)$ and we can use Theorem~\ref{thm:Doerr-dominance}(1) to obtain for $\lambda:= 16en^2/(1-p_c)$, $s:= 8e^2n^3/(1-p_c)^2$ and $p_{\min} = q$
$$\Pr\left(T_y \geq \frac{18en^2}{1-p_c}\right) \leq \Pr\left(Z \geq \frac{18en^2}{1-p_c}\right) \leq \exp\left(-\frac{1}{4} \min\left\{\frac{\lambda^2}{s}, \lambda p_{\min}\right\}\right) = e^{-n}.$$
We also have that
$$|\mathcal{A}| \leq \binom{m/2}{m/4} \cdot \left(\frac{2n}{m}\right)^{m/2} \leq 2^{m/2} \cdot (16)^{n/16} \leq e^{n\ln(2)/16 + n \ln(16)/16} \leq e^{n/2}$$
where the second inequality holds due to Lemma~\ref{lem:function-monotone}(iv): the function $]0,2n/e] \to \mathbb{R}, x \mapsto (2n/x)^{x/2},$ is strictly monotone increasing and hence, we obtain $(2n/m)^{m/2}\leq (16)^{n/16}$ by pluggin in $n/8$ for $x$ (which is the maximum possible value for $m$). Therefore, by a union bound on all possible $y$, we obtain that with probability at most $e^{-n/2}$ that all $y \in \mathcal{A}$ are found within $18en^2/(1-p_c)$ generations. If this does not happen we repeat the above arguments where the expected number of periods is $1+o(1)$. Hence, $\expect{T}=O(n^2/(1-p_c))$.

\textbf{Subphase C:} Find an individual $x \in N$ with $|\mathcal{U}(x)|=\gamma_t$, and $x_\ell^j \in P \setminus \{1^{n/m}\} \wedge x_r^j \in T$ for exactly one $j \in [m/2]$ or a failure occurs.

We give an upper bound for the duration of this phase for all possible $\mathcal{A}$. To finish this phase, it suffices to choose two $y,z \in P_t \subset \mathcal{A}$ where for a block $k \in [m/2]$ we have $(z_r^k)_i = 1 - (y_r^k)_i$ for all $i \in [n/m]$ whereas $z_\ell^k = y_\ell^k = 0^{n/m}$, and both $y$ and $z$ coincide at the remaining blocks distinct from $k$ (prob. at least $1/\mu$ since the number of such pairs is at least $\mu$ and we successfully passed through Subphase~B), perform uniform crossover to create $w$ with $\ones{w_{i,\ell}^k} = \zeros{w_{i,\ell}^k}$ for all $i \in [4]$ which happens with probability
$$p^{\text{cross}} := p_c \cdot \left(\binom{n/(4m)}{n/(8m)} \cdot \left(\frac{1}{2}\right)^{n/(4m)}\right)^4 = \Omega(p_c m^2/n^2)$$
by Stirling's formula and omit mutation (prob. $(1-1/n)^n \geq 1/4$). In one trial this happens with probability at least $p^{\text{cross}}/(4\mu)$ and hence, with probability at least $p^{\text{cross}}/16= \Omega(p_cm^2/n^2)$ in one generation. Hence, the expected number of generations to complete this phase is not more than $O(n^2/(p_c m^2))$.

If a failure in Subphase~B or~C occurs then we can repeat the above arguments in both subphases since the time for passing through both hold for all possible $\mathcal{A}$. Since the probability that a failure happens during Subphase~B or~C is at most $5/6$ (by Claim~\ref{claim:failure-probability}), the expected number of traversals through Subphase~B and~C is at most $1/(1-5/6) = 6=O(1)$ showing that the expected number of generations to finish Subphases~B and~C including repetitions is at most $O(n^3/(1-p_c)+n^2/(p_cm^2))$.

\textbf{Subphase D:} Find an individual $x \in N$ with $|\mathcal{U}(x)|=\gamma_t$, and $x_\ell^j = 1^{n/m}$ for exactly one $j \in [m/2] \setminus \mathcal{U}(x)$.

This phase is similar as Phase~3 from above with the small difference that we optimize the number of ones $\ones{x_\ell^j}$ in a certain block $j$ with $x_r^j \in T$. Let $Z_t:=\min\{\zeros{x_\ell^j} \mid x \in N \text{ with } x_r^j \in T \text{ for a } j \in [m/2]\}$. Then $Z_t \in \{1, \ldots , n/m\}$ and this phase is finished if $Z_t=0$. Further, $Z_t$ cannot increase since a solution $x$ with $x \in N$ and $\zeros{x_\ell^j} = Z_t$ where $x_r^j \in T$ is non-dominated (since the sum of all objectives is maximum). To decrease $Z_t$ one may choose an individual $y$ with $\zeros{y_\ell^j} = Z_t$ where $y_r^j \in T$ as a parent, omit crossover and flip a specific zero (to increase the $\LO(y)$ value) while not flipping another bit. This happens with probability at least $(1-p_c)/(en\mu)$ in one trial and hence, with probability at least $(1-p_c)/(4en)$ in one generation. Hence, the expected number of generations until $Z_t=0$ is at most $O(n^2/(m(1-p_c)))$.

\textbf{Subphase E:} Find an individual $x \in N$ with $|\mathcal{U}(x)|=\gamma_t+1$.

This phase is similar as Phase~2 from above with the small difference that we have to find $y$ with $y_r^j \in C$ where $j \in [m/2]$ with $y_\ell^j = 1^{n/m}$, but $j \notin \mathcal{U}(y)$. Let $Z_t:=\max\{g_{2j-1}(x) + g_{2j}(x) \mid x \in N \text{ with } x_\ell^j = 1^{n/m} \text{ and } x_r^j \notin C \text{ for } j \in [m/2]\}$. Then $n/m+1 \leq Z_t < 2n/m$ and this phase is finished if $Z_t$ reaches $2n/m$. Further, $Z_t$ cannot decrease since a solution $y$ with a corresponding value of $Z_t$ is non-dominated (since it has a maximum sum of objective values). As in Phase~2 above, $Z_t$ can be increased with probability at least $(1-p_c)/(4en)$ in one generation which leads to an expected number of $O(n^2/(m(1-p_c)))$ generations to finish this phase.

Hence, we see that Phase~$\gamma_t$+4 is finished in expected 
$$O\left(\frac{n^2}{1-p_c} + \frac{n^2}{1-p_c} + \frac{n^2}{p_c \cdot m^2} + \frac{n^2}{(1-p_c) \cdot m} + \frac{n^2}{(1-p_c) \cdot m}\right) = O\left(\frac{n^2}{1-p_c} + \frac{n^2}{p_c m^2}\right)$$
generations.

\textbf{Phase m/2+4:} Cover the whole Pareto front.

The treatment of this phase is similar to Subphase~B with the minor difference that we want to find all individuals in $\mathcal{P}_u:=\{y \mid \mathcal{U}(y) = [m/2]\} = \{1^{n/m}x_r^1 \ldots 1^{n/m}x_r^{m/2} \mid x_r^j \in C \text{ for all } j \in [m/2]\}$. Note that $P_t \cap \mathcal{P}_u \neq \emptyset$. Fix $y \in \mathcal{P}_u \setminus P_t$. Define $d_t = \min\{H(x,y) \mid x \in P_t \cap \mathcal{P}_u\}$. Then $1 \leq d_t \leq n/2$ and similar as in Subphase~B we see that we can decrease $d_t$ with probability at least $(1-p_c)/(4en)$ in one generation and hence, we also obtain $P(T \geq 18en^2/(1-p_c)) = e^{-n}$ for the time $T$ to find $y$. As in Subphase~B, by a union bound on all possible $y$ (since $|\mathcal{P}_u| \leq (2n/m)^{m/2} \leq e^{-n/2}$) we see that with probability at most $e^{-n/2}$ that all $y \in \mathcal{P}_u$ are found within $18en^2/(1-p_c)$ generations. If this does not happen we repeat the above arguments where the expected number of periods is $1+o(1)$. Hence, $\expect{T}=O(n^2/(1-p_c))$.

Now we upper bound the total runtime. Since Subphases~A,B and~C are passed at most $m/2$ times, the complete Pareto front is covered in expected 
$$O\left(\frac{n^2}{1-p_c} + \frac{n \log(n)}{1-p_c} + \frac{m n^2}{(1-p_c)} + \frac{mn^2}{p_c m^2} + \frac{n^2}{1-p_c}\right) = O\left(\frac{m n^2}{1-p_c}+ \frac{n^2}{p_c \cdot m}\right)$$
generations and $O(\mu m n^2/(1-p_c)+ \mu n^2/(p_c \cdot m))$ fitness evaluations since Subphases~A,B and~C are passed at most $m/2$ times. This finishes the proof.
\end{proof}

If $p_c \in (0,1)$ is constant the expected number of generations to optimize $m$-\RRRMOuni is $O(n^2 \cdot m)$ and hence, does not asymptotically depend on the population size $\mu$ which is also an improvement to  the corresponding result in~\citep{Dang2024}. They showed a runtime bound of $O(n^2/(1-p_c)+\mu n / p_c)$ generations in expectation for \nsga on their bi-objective version of \RRRMOuni which is by a factor of $\mu/n$ worse for population sizes $\mu \in \Omega(n)$ and constant $p_c \in (0,1)$. 

\subsection{Analysis of GSEMO}
As for the one-point crossover case above, we give a corresponding result also for the classical GSEMO in terms of fitness evaluations. Our result would also hold for an arbitrary $m$ if GSEMO initializes a search point with fitness distinct from zero.
\begin{theorem}
\label{thm:Runtime-Analysis-GSEMO-mRRMO-uniform}
Let $m \in \mathbb{N}$ be divisible by $2$ and $n$ be divisible by $8m$. Suppose that $m \leq d \sqrt{n/\log(n)}$ for a sufficiently small constant $d$. Then the GSEMO algorithm (Algorithm~\ref{alg:gsemo}) with $p_c \in (0,1)$ finds a Pareto set of $f:=m\text{-\RRRMOuni}$ in expected 
$$O\left(\frac{n^2(2n/m)^{m-1}}{1-p_c}+ \frac{n^2 (2n/m)^{m/2}}{p_c m}\right)$$
fitness evaluations.
\end{theorem}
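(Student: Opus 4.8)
The plan is to follow the template of the proof of Theorem~\ref{thm:Runtime-Analysis-GSEMO-mRRMO}, using the method of typical runs~\citep{Wegener2002} and recalling that for GSEMO one iteration equals one fitness evaluation, that phenotypic duplicates are never kept, and that non-dominated individuals are never lost, so that it is always admissible to skip to the goal of a later phase. Throughout, Lemma~\ref{lem:RRRMOuni-dominance} caps the population size at $(2n/m)^{m-1}$, while in several phases the smaller bound $(2n/m)^{m/2}$ holds because the population then consists only of ``clean'' $N$-individuals sharing one fixed set $\mathcal{U}$. The only novel phase is Phase~1 (create $x$ with $f(x)\neq 0$, i.e.\ all $x_\ell^j\in U$): a random initial point already works with probability $1-e^{-\Omega(n/m)}$ (Chernoff plus a union bound over the $2m$ sub-blocks, using $m=O(\sqrt{n/\log n})$); otherwise $|P_t|=1$ throughout this phase, so the search point evolves as a pure standard-bit-mutation chain and the drift lemmas apply to each of the $2m$ sub-blocks $x_{i,\ell}^j$ of length $cn=n/(4m)$. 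Here the $U$-constraint $\ones{x_{i,\ell}^j}\in[n/(12m),n/(6m)]$ is exactly the interval $[cn(1-\tfrac13)/2,\,cn(1+\tfrac13)/2]$, so Lemma~\ref{lem:additive-drift-main} with parameter $\tfrac16$ brings each sub-block count into $[cn(1-\tfrac16)/2,\,cn(1+\tfrac16)/2]$ in $O(n)$ expected iterations, and Lemma~\ref{lem:negative-drift-main} with $\lambda=\tfrac13$ (this is where $m\le d\sqrt{n/\log n}$ is needed, so that $c=1/(4m)\ge\sqrt{\ln n}/(d\sqrt n)$) then keeps it inside $U$ for the next $n^3$ iterations except with probability $O(1/n^3)$. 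A ``marking'' argument identical to the one in Theorem~\ref{thm:Runtime-Analysis-GSEMO-mRRMO} gives $O(mn)$ expected iterations until all $2m$ sub-blocks simultaneously satisfy $U$; one further single bit flip, if needed, makes $f\neq 0$, so Phase~1 costs $1+o(1)$ expected iterations.

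Phases~2 and~3 mirror Phases~2 and~3 of Theorem~\ref{thm:Runtime-Analysis-NSGA-III-mRRMO-uniform}, with the GSEMO modification that a suitable parent is chosen with probability $\ge 1/|P_t|$ per iteration (instead of $\Omega(1/\mu)$ per trial). Using the potential $Z_t=\max\{\sum_k g_k(x)\mid x\in P_t,\ x_\ell^j\in U\ \forall j\}\in[n/2+m/2,\,n]$, which cannot decrease and can be increased by flipping one bit toward the $C$-structure of a right sub-block, together with $|P_t|\le(2n/m)^{m-1}$ from Lemma~\ref{lem:RRRMOuni-dominance}, Phase~2 is completed in $O(n^2(2n/m)^{m-1}/(1-p_c))$ expected iterations. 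Phase~3 uses the potential $O_t=\max\{\sum_j\zeros{x_\ell^j}\mid x\in P_t\cap M\}\in[\lceil n/6\rceil,\,n/2]$ and an analogous single-bit argument, costing $O(n\log n\,(2n/m)^{m-1}/(1-p_c))$ expected iterations, which is dominated by Phase~2.

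Each Phase~$\gamma_t+4$ ($\gamma_t=0,\dots,m/2-1$) increases the largest attained $|\mathcal{U}|$ by one; when it starts the population is $\{z\}$ for a clean $z$ with $\mathcal{U}(z)=:I$, $|I|=\gamma_t$ (Lemma~\ref{lem:RRRMOuni-properties}(2),(3)), and we split it into sub-phases analogous to subphases~B--E of Theorem~\ref{thm:Runtime-Analysis-NSGA-III-mRRMO-uniform} (there is no GSEMO analogue of subphase~A). In subphase~B the population is filled with all clean $N$-individuals having $\mathcal{U}=I$; their pairwise Hamming distances are at most $n/2$ and a single-bit mutation decreases the distance to a fixed target by one with probability $\ge(1-p_c)/(en|P_t|)$, so a sum-of-geometrics concentration bound (Theorem~\ref{thm:Doerr-dominance}) and a union bound over the at most $(2n/m)^{m/2}$ targets give $O(n^2(2n/m)^{m/2}/(1-p_c))$ expected iterations, since $|P_t|\le(2n/m)^{m/2}$ here. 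Subphases~D and~E grow $\LO(x_\ell^k)$ up to $1^{n/m}$ and restore $x_r^k\in C$ in the single block $k\notin I$ that was put ``in transition'', each by single-bit mutations, costing $O(n^2(2n/m)^{m-1}/(m(1-p_c)))$ expected iterations each (with the pessimistic population bound). Subphase~C is the recombination step: after subphase~B, for every $k\notin I$ and every $C$-choice of the other $m/2-1$ right sub-blocks, the population contains a pair $(y,z)$ identical except that $z_r^k$ is the bitwise complement of $y_r^k$ (the complement of a $C$-string is again in $C$), hence at least $(m/2-\gamma_t)(2n/m)^{m/2}$ good ordered pairs; picking such a pair (probability $\ge(m/2-\gamma_t)(2n/m)^{-m/2}$ as $|P_t|\le(2n/m)^{m/2}$), applying uniform crossover so that $w_r^k\in T$ (probability $(\binom{n/(4m)}{n/(8m)}2^{-n/(4m)})^4=\Omega(m^2/n^2)$ by Stirling) and flipping no bit afterwards creates an individual in transition on block $k$; so subphase~C succeeds with probability $\Omega(p_c(m/2-\gamma_t)m^2/(n^2(2n/m)^{m/2}))$ per iteration, costing $O(n^2(2n/m)^{m/2}/(p_c(m/2-\gamma_t)m^2))$ expected iterations. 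A final phase covers $\mathcal{P}_u$ exactly like subphase~B with $|P_t|\le|\mathcal{P}_u|=(2n/m)^{m/2}$, in $O(n^2(2n/m)^{m/2}/(1-p_c))$ expected iterations. Summing: Phase~2 and the $m/2$ subphase-B/D/E blocks give $O(m\,n^2(2n/m)^{m/2}/(1-p_c))\le O(n^2(2n/m)^{m-1}/(1-p_c))$ (using $m\le(2n/m)^{m/2-1}$ for $m\ge4$, and treating $m=2$ directly), while the subphase-C blocks contribute $\sum_{\gamma_t}O(n^2(2n/m)^{m/2}/(p_c(m/2-\gamma_t)m^2))=O(n^2(2n/m)^{m/2}\log m/(p_c m^2))=O(n^2(2n/m)^{m/2}/(p_cm))$, which yields the claimed bound since one evaluation is spent per iteration.

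The one genuinely new difficulty is that, unlike in the one-point case of Theorem~\ref{thm:Runtime-Analysis-GSEMO-mRRMO} (where a ``$\mathcal{U}$-shuffle'' has only $n^{-\Omega(\sqrt n)}$ probability), uniform crossover can with constant probability create an $N$-individual whose $\mathcal{U}$-set is a \emph{different} set of size $\gamma_t$ (a ``failure''), which would enlarge the target set of subphase~B and the population. I would handle this exactly as in Claim~\ref{claim:failure-probability}: an injection argument shows $p_{\text{fail}}\le 5/6$ at every crossover step, because to each offspring that merely permutes which left blocks equal $1^{n/m}$ one can associate, with comparable probability, an offspring that instead turns an \emph{additional} left block into $1^{n/m}$ and hence increases $\gamma_t$. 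Consequently the expected number of traversals of subphases~B--E per phase is $O(1)$; and even if a failure enlarges the relevant set by a factor $\binom{m/2}{\gamma_t}\le 2^{m/2}$, one still has $2^{m/2}(2n/m)^{m/2}\le(2n/m)^{m-1}$ for $m\ge4$, so the expected cost absorbs into the bounds above via the geometric series $\sum_k k(5/6)^k=O(1)$. The bulk of the technical work will be in making this injection argument precise for GSEMO's parent selection \emph{without replacement} and in the \emph{absence of population multiplicities} (the latter is why the $p_c$-term carries the exponent $m/2$ rather than $m-1$), and in checking that transitional individuals do not accumulate in the population during subphases~C--E beyond what Lemma~\ref{lem:RRRMOuni-dominance} already guarantees.
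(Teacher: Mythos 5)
Your proposal follows essentially the same route as the paper's proof: the same typical-run phase decomposition, the same drift lemmas for Phase~1, the same potentials $Z_t$ and $O_t$ with the $(2n/m)^{m-1}$ population cap, the same complementary-pair uniform-crossover step with success probability $\Omega(p_c m^2/n^2)$ over a population of size $O(|\mathcal{K}_t|(2n/m)^{m/2})$, and the same failure handling via Claim~\ref{claim:failure-probability} together with a geometric series over the (at most linearly growing) cost of repeated traversals. The only cosmetic differences are the sub-phase labelling and your unnecessary worst-case $2^{m/2}$ fallback for $|\mathcal{K}_t|$ (the paper's recursion $T_{|\mathcal{K}|}\le|\mathcal{K}|L(1)+\tfrac56 T_{|\mathcal{K}|+1}$ is the clean way to get $O(L(1))$), so the argument is correct and matches the paper.
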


\begin{proof}

Note that the runtime in terms of fitness evaluations corresponds to the runtime in terms of iterations. As in the previous proofs, we use the method of typical runs and divide the optimization procedure into several phases.

\textbf{Phase 1:} Find $x$ with $f(x) \neq 0$.

By a classical Chernoff bound the probability that the initial search point $x$ fulfills $\ones{x_\ell^j} \in U$ for all $j \in [m/2]$ is $1-e^{-\Omega(n/m)}$. Suppose that this does not happen. For a block $j \in [m/2]$ and $i \in [4]$, denote by $X_t^{i,j}$ the number of ones in $x_{i,\ell}^j$ in iteration $t$. Such a block has length $n/(4m)$. Let $c:=1/(4m) \geq \sqrt{\log(n)}/(4d\sqrt{n})$. Then with Lemma~\ref{lem:additive-drift-main} applied on $\lambda = 1/6$ we obtain that $X_t^{i,j} \in [5n/(48m),7n/(48m)]$ in expected $O(n)$ iterations (since $cn(1+\lambda)/2 = n(1+1/6)/(8m) = 7n/(48m)$ and $cn(1-\lambda)/2 = n(1-1/6)/(8m) = 5n/(48m)$). Suppose that $X_t^{i,j} \in [5n/(48m),7n/(48m)]$. If $\lambda=3$ we see that $cn(1+\lambda)/2 = n(1+1/3)/(8m) = n/(6m)$ and $cn(1-\lambda)/2 = n(1-1/3)/(8m) = n/(12m)$. Hence, with  Lemma~\ref{lem:negative-drift-main} applied on $\lambda = 1/3$ we see that, as long as the fitness is zero, the probability is $O(1/n^3)$ to obtain $X_t^{i,j} \notin [n/(12m),n/(6m)]$ within the next $n^3$ iterations. Now the expected time until $X_t^{i,j} \in [n/(12m),n/(6m)]$ for all $i \in [4]$ and $j \in [m/2]$ is $O(n^2)$ which can be seen in a similar way as in Phase 1 of the proof of Theorem~\ref{thm:Runtime-Analysis-GSEMO-mRRMO} by the same argument about marked blocks where $x_{i,\ell}^j$ is called \emph{marked} in an iteration $t$ if there is $t' < t$ such that $X_{t'+1}^{i,j} \in [n/(12m),n/(6m)]$. Hence, in expected $(1-e^{-\Omega(n/m)})+e^{-\Omega(n/m)} \cdot O(n^2) = 1+o(1)$ iterations, Phase~1 is finished.

\textbf{Phase 2:} Create $x \in M$.

As in the proof of the previous theorem, define $Z_t:=\max\{\sum_{j=1}^m g_j(x) \mid x \in P_t, x_\ell^j \in U \text{ for all } j \in [m/2]\}$. Then $Z_t$ cannot decrease since GSEMO preserves non-dominated solutions. The number of non-dominated solutions is at most $S_2:=(2n/m)^{m-1}$ in this phase. 
To increase $Z_t$ in one iteration it suffices to choose a parent $z \in P_t$ with $\sum_{k=1}^m g_k(z)=Z_t$ (prob. at least $1/S_2$), omit crossover and flip  a specific bit. Since $m/2 + n/2 \leq Z_t \leq n$ we need at most $n/2-m/2$ such steps and hence, the expected number of iterations to complete this phase is at most 
$$\left(\frac{n}{2}-\frac{m}{2}\right) \cdot \frac{en S_2}{(1-p_c)} = O\left(\frac{n^2S_2}{1-p_c}\right).$$

\textbf{Phase 3:} Create $x \in N$.

As in the previous proof, define $O_t:=\max\{\sum_{j=1}^{m/2} \zeros{x_\ell^j} \mid x \in P_t \cap M\} \in \{m/2, \ldots , n/2\}$. Then this phase is finished if $O_t=n/2$ and $O_t$ cannot decrease. The number of non-dominated solutions in this phase is also at most $S_2$. To increase $O_t$ one has to choose an individual $y$ with $\sum_{j=1}^{m/2} \zeros{y_\ell^j} = O_t$ as a parent, omit crossover and flip one of $n/2-O_t$ ones in one of the right blocks while not flipping another bit happening with probability $(1-p_c)(n/2-O_t)/(enS_2)$. Hence, the expected number of iterations to finish this phase is at most
$$\sum_{j=m/2}^{n/2-1} \frac{enS_2}{(1-p_c)(n/2-j)} = O\left(\frac{n \log(n) S_2}{1-p_c}\right).$$

For defining the next phases, let $\gamma_t:=\max\{\vert{U(x)}\vert \mid x \in N\}$. Suppose that there is an individual $z \in P_t \cap N$ with $\vert{U(z)}\vert = \gamma_t$ where $\gamma_t < m/2$, but no corresponding individual $w \in P_t \cap N$ with $\vert{U(w)}\vert > \gamma_t$. 

\textbf{Phase $\gamma_t$ + 4:} Create $x \in N$ with $|U(x)|>\gamma_t$.

Phase~4 starts when $\gamma_t=0$. If Phase~$m/2 -1 + 4 = m/2+3$ is finished, a Pareto-optimal search point has been created. Fix $y \in P_t$ with $|\mathcal{U}(y)| = \gamma_t$. Define 
\begin{align*}
\mathcal{K}_t:=\{\mathcal{U} \subset [m/2] \mid |\mathcal{U}| = \gamma_t \wedge \exists x \in P_t: (\mathcal{U}(x) = \mathcal{U} \wedge x_r^j \in C \wedge x_\ell^j = 0^{n/m} \text{ for all $j \in [m/2] \setminus \mathcal{U}(x)$})\}
\end{align*}
as in the proof of the previous theorem. Note that $\mathcal{K}_t$ cannot decrease since $P_t \subset N$ and two solutions $x,y \in P_t$ with $\mathcal{U}(x) \neq \mathcal{U}(y)$ are non-dominated. As before, denote the increase of $\mathcal{K}_t$ before increasing $\gamma_t$ as a \emph{failure}. Note that the failure probability is at most $5/6$ (by Claim~\ref{claim:failure-probability} in the previous proof). As in the previous proof, let $\mathcal{A}_t:=\{x \in N \mid \mathcal{U}(x) \in \mathcal{K}_t, x_\ell^j = 0 \text{ and } x_r^j \in C \text{ for all } j \in [m/2] \setminus \mathcal{U}(x)\}$.

\textbf{Subphase~A:} Find all individuals in $\mathcal{A}_t:=\mathcal{A}$ or a failure occurs. 

We may assume that no failure occurs, as the occurrence of a failure only accelerates the completion of the phase.
Fix $y \in \mathcal{A}$ which is not already found let $d_t:=\min\{H(x,y) \mid x \in P_t \cap \mathcal{A} \mid \mathcal{U}(x)=\mathcal{U}(y)\}$. Then $1 \leq d_t \leq n/m (m/2 - \gamma_t) \leq n/2$. The number of non-dominated solutions is at most $|\mathcal{K}_t| \cdot (2n/m)^{m/2}$. One can decrease $d_t$ by choosing an individual $w \in \mathcal{A}$ with $H(w,y) = d_t$ as a parent where $w_\ell^j \neq y_\ell^j$ for $j \in [m/2]$, omitting crossover and then either flipping a specific bit happening with probability at least $(1-p_c)/( e n |\mathcal{K}_t| (2n/m)^{m/2})$ in one iteration. Hence, we see that the time $T_y$ to find $y$ is stochastically dominated by the independent sum $Z:=\sum_{j=1}^{n/2} Z_j$ of geometrically distributed random variables with success probability $q:=(1-p_c)/(e n |\mathcal{K}_t|(2n/m)^{m/2})$. We have $\expect{Z}=e n^2 |\mathcal{K}_t|(2n/m)^{m/2}/(2(1-p_c))$. With Theorem~\ref{thm:Doerr-dominance}(1) we get for $\lambda:= 4en^2|\mathcal{K}_t|(2n/m)^{m/2}/(1-p_c)$, $s:= e^2n^3|\mathcal{K}_t|^2(2n/m)^m/(2(1-p_c)^2)$ and $p_{\min} = q$
$$\Pr\left(T_y \geq \frac{9en^2|\mathcal{K}_t|(2n/m)^{m/2}}{2(1-p_c)}\right) \leq \Pr\left(Z \geq \frac{9en^2|\mathcal{K}_t|(2n/m)^{m/2}}{2(1-p_c)}\right) \leq \exp\left(-\frac{1}{4} \min\left\{\frac{\lambda^2}{s}, \lambda p_{\min}\right\}\right) = e^{-n}.$$
As $|\mathcal{A}| \leq e^{n/2}$, we obtain by a union bound on all possible $y$ that with probability at least $1-e^{-n/2}$ all $y \in \mathcal{A}$ are found within $9en^2|\mathcal{K}_t|(2n/m)^{m/2}/(2(1-p_c))$ iterations. If this does not happen we repeat the above arguments where the expected number of periods is $1+o(1)$. Hence, $\expect{T} = O(n^2|\mathcal{K}_t|(2n/m)^{m/2}/(1-p_c))$.

\textbf{Subphase~B:} Find an individual $x \in N$ with $|\mathcal{U}(x)|=\gamma_t$, and $x_\ell^j \in P \setminus \{1^{n/m}\} \wedge x_r^j \in T$ for exactly one $j \in [m/2]$ or a failure occurs.

As in the previous subphase we can assume that no failure occurs. Note that $P_t = \mathcal{A}$ or we reached the goal of a subseqent phase. To finish this phase, it suffices to choose two $y,z \in P_t$ where for a block $j \in [m/2]$ we have $(z_r^j)_i = 1 - (y_r^j)_i$ for all $i \in [n/m]$ whereas $z_\ell^j = y_\ell^j = 0^{n/m}$, and both $y$ and $z$ coincide in all remaining blocks (prob. at least $1/|\mathcal{A}| = 1/(|\mathcal{K}_t| \cdot (2n/m)^{m/2})$), perform uniform crossover to create $w$ with $\ones{w_{i,\ell}^j} = \zeros{w_{i,\ell}^j}$ for all $i \in [4]$ (prob. $p^{\text{cross}} := p_c \cdot \left(\binom{n/(4m)}{n/(8m)} \cdot (1/2)^{n/(4m)}\right)^4 = \Omega(p_c m^2/n^2)$ by Stirling's formula) and omit mutation (prob. $(1-1/n)^n \geq 1/4$). This happens with probability at least $p^{\text{cross}}/(4|\mathcal{K}_t| (2n/m)^{m/2})= \Omega(p_cm^2/(n^2|\mathcal{K}_t| (2n/m)^{m/2}))$ in one iteration. Hence, this phase is finished in expected $4|\mathcal{K}_t|(2n/m)^{m/2}/p^{\text{cross}} = O(n^2|\mathcal{K}_t|(2n/m)^{m/2}/(p_c m^2))$ iterations.

Let $\mathcal{K}:=\mathcal{K}_t$. If a failure occurs either in Subphase~A or Subphase~B then $|\mathcal{K}|$ increases by one and we can pessimistically assume to start again at the beginning of Subphase~A. Denote by $T_{\mathcal{K}}$ the expected time until we passed through Subphase~A and~B without a failure. To ease notation we write $T_{|\mathcal{K}|}$ instead of $T_{\mathcal{K}}$. Then have to determine $T_1$ since initially $|\mathcal{K}| = 1$ and GSEMO produces one outcome per iteration. Denote by $L(|\mathcal{K}|)$ the determined upper bound for the time of passing through Subphase~A and~B one time. Then it is easy to see that $L(|\mathcal{K}|) \leq |\mathcal{K}|L(1)$.  
By Claim~\ref{claim:failure-probability} from the previous proof we see $T_{|\mathcal{K}|} \leq L(|\mathcal{K}|) + 5/6 \cdot T_{|\mathcal{K}|+1} = |\mathcal{K}|L(1) + 5/6 \cdot T_{|\mathcal{K}|+1}$ since $\mathcal{K}$ increases by one if a failure happens. The maximum possible cardinality of $\mathcal{K}$ is $\alpha:=\binom{m/2}{\gamma_t}$. 
Hence, by writing the above formula in closed form we obtain
$$T_1 \leq \sum_{i=0}^\alpha \frac{5^i(i+1)}{6^i}L(1) \leq L(1) \sum_{i=0}^\infty \frac{5^i(i+1)}{6^i} = O(L(1))$$
implying that the desired $x$ is created in expected $O(L(1))$ iterations. We see
$$O(L(1)) = O\left(\frac{n^2(2n/m)^{m/2}}{1-p_c} + \frac{n^2 (2n/m)^{m/2}}{p_c m^2}\right).$$

\textbf{Subphase~C:} Find an individual $x \in N$ with $|\mathcal{U}(x)|=\gamma_t$, and $x_\ell^j = 1^{n/m}$ for exactly one $j \in [m/2] \setminus \mathcal{U}(y)$.

Let $Z_t:=\min\{\zeros{x_\ell^j} \mid x \in N \text{ with } x_r^j \in T \text{ for a } j \in [m/2]\} \in \{1, \ldots , n/m\}$. This phase is finished if $Z_t=0$. Further, $Z_t$ cannot increase. To decrease $Z_t$ one may choose an individual $y$ with $\zeros{y_\ell^j} = Z_t$ where $y_r^j \in T$ as a parent, omit crossover and flip a specific zero while not flipping another bit. This happens with probability at least $(1-p_c)/(enS_2)$ in one iteration since the maximum number of mutually incomparable solutions is at most $S_2$. Hence, the expected number of iterations until $Z_t=0$ is at most $O(n^2S_2/(m(1-p_c)))$.

\textbf{Subphase D:} Find an individual $x \in N$ with $|\mathcal{U}(x)|=\gamma_t+1$.

Let $Z_t:=\max\{g_{2j-1}(x) + g_{2j}(x) \mid x \in N \text{ with } x_\ell^j = 1^{n/m} \text{ and } x_r^{j} \notin C \text{ for a } j \in [m/2]\}$. Then $n/m+1 \leq Z_t < 2n/m$ and this phase is finished if $Z_t$ reaches $2n/m$. Further, $Z_t$ cannot decrease. As in Phase~2 above, $Z_t$ can be increased with probability at least $(1-p_c)/(enS_2)$ in one iteration which leads to an expected number of $O(n^2S_2/(m(1-p_c)))$ iterations to finish this phase.

Hence, the expected number of fitness evaluations to pass Phase~$\gamma_t$+4 is at most 
$$O\left(\frac{n^2(2n/m)^{m/2}}{1-p_c} + \frac{n^2 (2n/m)^{m/2}}{p_c m^2} + \frac{n^2S_2}{m(1-p_c)}\right)$$

\textbf{Phase m/2+4:} Cover the whole Pareto front.

The treatment of this phase is similar to Subphase~A with the minor difference that we want to find all individuals in $\mathcal{P}_u:=\{y \mid \mathcal{U}(y) = [m/2]\} = \{1^{n/m}x_r^1 \ldots 1^{n/m}x_r^{m/2} \mid x_r^j \in C \text{ for all } j \in [m/2]\}$. We have $P_t \cap \mathcal{P}_u \neq \emptyset$ since we passed through Phase~m/2+3. Fix $y \in \mathcal{P}_u \setminus P_t$. Define $d_t = \min\{H(x,y) \mid x \in P_t \cap \mathcal{P}_u\}$. Then $1 \leq d_t \leq n/2$ and similar as in Subphase~A we see that we can decrease $d_t$ with probability at least $(1-p_c)/(en(2n/m)^{m/2})$ in one iteration and hence, we also obtain $P(T \geq 9en^2(2n/m)^{m/2}/(2(1-p_c))) = e^{-n}$ for the time $T$ to find $y$. By a union bound on all possible $y$ (since $|\mathcal{P}_u| \leq (2n/m)^{m/2}$) we see that with probability at most $e^{-n/2}$ that all $y \in \mathcal{P}_u$ are found within $9en^2(2n/m)^{m/2}/(2(1-p_c))$ iterations. If this does not happen we repeat the above arguments where the expected number of periods is $1+o(1)$. Hence, $\expect{T}=O(n^2(2n/m)^{m/2}/(1-p_c))$.

Now we upper bound the total runtime. Since Subphases~A,B and~C are passed at most $m/2$ times, the complete Pareto front is covered in expected
\begin{align*}
O&\left(\frac{n^2S_2}{1-p_c}+\frac{n\log(n)S_2}{1-p_c}+\frac{n^2m(2n/m)^{m/2}}{1-p_c} + \frac{n^2 (2n/m)^{m/2}}{p_c m}+\frac{n^2S_2}{1-p_c} + \frac{n^2 (2n/m)^{m/2}}{1-p_c}\right) \\
&= O\left(\frac{n^2S_2}{1-p_c}+ \frac{n^2 (2n/m)^{m/2}}{p_c m}\right) = O\left(\frac{n^2(2n/m)^{m-1}}{1-p_c}+ \frac{n^2 (2n/m)^{m/2}}{p_c m}\right)
\end{align*}
iterations since $|S_2| = (2n/m)^{m-1} \geq m (2n/m)^{m/2}$ for $m \geq 4$ (due to $m = O(\sqrt{n}/\log(n))$). This finishes the proof of the whole theorem.
\end{proof}

We note that for a constant number of objectives $m$ the upper bound for the runtime of GSEMO on $m$-\uRRRMO is $O(n^{m+1}/(1-p_c) + n^{m/2+2}/p_c)$ and for \nsgaIII it is $O(n^{m+1}/(1-p_c) + n^{m+1}/p_c)$ for $\mu=(2n/m)^{m-1}$ in terms of fitness evaluations. If $p_c = O(1/n^{m/2-1})$ we see that the former becomes $O(n^{m/2+2}/p_c)$ and the latter $O(n^{m+1}/p_c)$ implying that the former is by a factor of $n^{m/2-1}$ smaller. As for $m$-\RRRMO, the reason for this difference is again the dynamic population size of GSEMO.

\section{Difficulty of GSEMO and \nsgaIII on \RRRMOuni Without Crossover}

Finally, we observe that both GSEMO and \nsgaIII (without crossover) become extremely slow on the $m$-\RRRMOuni benchmark. This slowdown occurs even when attempting to find the first Pareto-optimal solution. We begin by citing a result from~\citep{Dang2024} which states that the sets $C$ and $T$ are separated by a large Hamming distance, and the same holds for the sets $U$ and $P$.

\begin{lemma}[Lemma~16 in~\citep{Dang2024}]
\label{lem:large-Hamming}
The following properties hold for the subsets of Definition~\ref{def:uRRMO-sets-and-function}.
\begin{itemize}
    \item[(i)] $\forall x_\ell \in U, \forall y_\ell \in P: H(x_\ell , y_\ell) \geq n/(4m)$,
    \item[(ii)] $\forall x_r \in C, \forall y_r \in T: H(x_\ell , y_\ell) \geq 3n/(8m)$.
\end{itemize}
\end{lemma}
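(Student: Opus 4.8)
The plan is to exploit that the strings in $P$ and in $C$ are \emph{monotone} in a strong block sense: every $y_\ell \in P$ has the form $1^a 0^{n/m-a}$ for some $a \in \{0,\dots,n/m\}$, and every $y_r \in C$ has the form $1^a 0^{n/m-a}$ or $0^a 1^{n/m-a}$. In either case, when the length-$n/m$ string is cut into the four consecutive sub-blocks of length $n/(4m)$ used in Definition~\ref{def:uRRMO-sets-and-function}, at most one sub-block is ``mixed'' (i.e.\ contains the unique $0$/$1$ boundary), while the remaining three (or four) sub-blocks are each entirely $0^{n/(4m)}$ or entirely $1^{n/(4m)}$. First I would state this as an elementary observation and justify it by locating the boundary position $a$ and checking the edge cases $a=0$, $a=n/m$, and $a$ a multiple of $n/(4m)$ (in which cases all four sub-blocks are pure).

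For part (i), fix $x_\ell \in U$ and $y_\ell = 1^a 0^{n/m-a} \in P$. By the observation there are at least three indices $i \in \{1,2,3,4\}$ for which $y_\ell$ restricted to sub-block $i$ equals $1^{n/(4m)}$ or $0^{n/(4m)}$. For such an $i$, the membership constraint $n/(12m) \le \ones{x_{i,\ell}} \le n/(6m)$ forces $x_{i,\ell}$ to disagree with that constant string on at least $n/(4m)-n/(6m) = n/(12m)$ positions when $y_\ell$ is all-ones there, and on at least $n/(12m)$ positions when $y_\ell$ is all-zeros there. Summing over the three pure sub-blocks gives $H(x_\ell,y_\ell) \ge 3 \cdot n/(12m) = n/(4m)$.

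For part (ii), fix $x_r \in C$ and $y_r \in T$. By definition of $C$ we may, after possibly complementing all bits (which leaves membership in $T$ unchanged, since $\ones{x_{i,r}} = \zeros{x_{i,r}}$ is invariant under complementation), assume $x_r = 1^a 0^{n/m-a}$. Again at least three sub-blocks of $x_r$ are pure, and on each such sub-block $y_r$ has exactly $n/(8m)$ ones and $n/(8m)$ zeros by membership in $T$, hence disagrees with the constant string on exactly $n/(8m)$ positions. Summing over the three pure sub-blocks yields $H(x_r,y_r) \ge 3 \cdot n/(8m) = 3n/(8m)$, which is the claimed bound.

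The main obstacle is essentially bookkeeping rather than analysis: one must carefully argue that at least three of the four sub-blocks are pure (including the degenerate cases above), and remember to invoke the complementation symmetry of $C$ and $T$ so that the ``non-increasing'' and ``non-decreasing'' alternatives in the definition of $C$ are treated uniformly. No nontrivial estimates or tail bounds are needed.
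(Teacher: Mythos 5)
Your proof is correct. Note that the paper itself does not prove this lemma at all — it is imported verbatim as Lemma~16 from \citet{Dang2024} — so there is no in-paper argument to compare against; your write-up supplies a self-contained proof where the paper has only a citation. The key observation (every string in $P$, and after complementation every string in $C$, has a single $1$/$0$ boundary, so at least three of the four length-$n/(4m)$ sub-blocks are constant) is exactly the right structural fact, and the per-sub-block counts check out: for (i), $n/(4m)-n/(6m)=n/(12m)$ disagreements per pure sub-block against the $U$-constraint, giving $3\cdot n/(12m)=n/(4m)$; for (ii), exactly $n/(8m)$ disagreements per pure sub-block against the balanced $T$-constraint, giving $3n/(8m)$. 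The complementation step is legitimate since Hamming distance is invariant and $T$ is closed under complement. (As a side remark, the statement of part (ii) writes $H(x_\ell,y_\ell)$ where it clearly means $H(x_r,y_r)$; you interpreted it correctly.)
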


\begin{theorem}\label{thm:NSGA-III-pc-zero-uniform}
Suppose that $m = o(n/\log(n))$, $m$ is divisible by $2$ and $n$ is divisible by $8m$. \nsgaIII (Algorithm~\ref{alg:nsga-iii}) on $m$-\RRRMO with $p_c=0$, any choice of $\refer$, and $\mu = 2^{o(n/m)}$ needs at least $n^{\Omega(n/m)}$ generations in expectation to create any Pareto-optimal search point of $m$-\RRRMO.
\end{theorem}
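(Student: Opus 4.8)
The plan is to follow the template of the proof of Theorem~\ref{thm:NSGA-III-pc-zero}: show that with probability $1-o(1)$ the population stays trapped in a region from which every step towards the Pareto set requires a single mutation to flip $\Omega(n/m)$ prescribed bits, and then close by a union bound. First I would handle the initialisation. By a Chernoff bound a fixed sub-block $x_{i,\ell}^j$ of a uniformly random string satisfies $n/(12m)\le\ones{x_{i,\ell}^j}\le n/(6m)$ with probability $1-2^{-\Omega(n/m)}$, so a union bound over the $2m$ sub-blocks (using $m=o(n/\log n)$) gives $x_\ell^j\in U$ for all $j\in[m/2]$ with probability $1-2^{-\Omega(n/m)}$, and a further union bound over the $\mu=2^{o(n/m)}$ individuals shows that with probability $1-o(1)$ every $x\in P_0$ has $x_\ell^j\in U$ for all $j$, hence $x\in L\cup M$ and in particular $f(x)\neq 0$ with $\mathcal U(x)=\emptyset$. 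I condition on this event. The choice of $\refer$ is irrelevant here, since the argument below uses only non-dominated sorting.

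Next, the fitness-$\ne0$ invariant: any point of fitness $0$ is strictly dominated by every point of nonzero fitness (all objective values are nonnegative, and at least one is positive), so in the merged multiset $R_t$ it lies in a layer strictly after all $\ge\mu$ nonzero-fitness points that $P_t$ contributes; consequently the critical index $i^*$ is attained before the first layer containing a fitness-$0$ point, so $P_{t+1}$ never contains a fitness-$0$ point. Hence the population stays inside $L\cup M\cup N$ forever.

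The core is the trap. Put $N_0:=\{x:\ x_\ell^j=0^{n/m}\text{ and }x_r^j\in C\text{ for all }j\in[m/2]\}\subseteq N$ and $\mathcal T:=L\cup M\cup N_0$; then $P_0\subseteq\mathcal T$, while by Lemma~\ref{lem:RRRMOuni-properties}(4) the Pareto set $\mathcal P_u$ is contained in $N\setminus N_0$. I claim: if $p\in\mathcal T$ and a standard bit mutation of $p$ yields $y\in N\setminus N_0$, then the mutation flips at least $n/(4m)$ bits inside a single block. Indeed, $y\in N\setminus N_0$ has a block $j$ with $y_\ell^j\neq0^{n/m}$ or $y_r^j\notin C$; by the structure of $N$ this block is then either a $\mathcal U$-block or the (unique) special block, so in every case $y_\ell^j\in P$, and if moreover $y_r^j\notin C$ then $y_r^j\in T$ (special type) or $y_\ell^j=1^{n/m}$ (other special type). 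A short case analysis on $p$: if $p\in L$ then $p_\ell^j\in U$ and Lemma~\ref{lem:large-Hamming}(i) gives $H(p_\ell^j,y_\ell^j)\ge n/(4m)$; if $p\in M\cup N_0$ then $p_r^j\in C$, so either $y_r^j\in T$ and Lemma~\ref{lem:large-Hamming}(ii) gives $H(p_r^j,y_r^j)\ge 3n/(8m)$, or $y_\ell^j=1^{n/m}$ and, since $\ones{p_\ell^j}\le 4n/(6m)$ when $p\in M$ and $p_\ell^j=0^{n/m}$ when $p\in N_0$, $H(p_\ell^j,y_\ell^j)\ge n/(3m)$. In all cases some block of $y$ differs from the corresponding block of $p$ in at least $n/(4m)$ positions, proving the claim. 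Using the elementary bound $\Pr[\mathrm{Bin}(2n/m,1/n)\ge k]\le\binom{2n/m}{k}n^{-k}$, the probability, conditioned on $P_t\subseteq\mathcal T$, that some offspring of generation $t$ lies in $N\setminus N_0$ is at most
\begin{align*}
\mu\cdot\tfrac m2\cdot\binom{2n/m}{\lceil n/(4m)\rceil}\,n^{-\lceil n/(4m)\rceil}\ \le\ \mu\cdot m\cdot 2^{2n/m}\cdot n^{-n/(4m)}\ =\ n^{-\Omega(n/m)},
\end{align*}
the last step using $\mu=2^{o(n/m)}$ and $m=o(n/\log n)$ so that the factor $n^{-n/(4m)}=2^{-\Theta((n/m)\log n)}$ dominates; and as long as no such offspring appears, the fitness-$\ne0$ invariant forces $P_{t+1}\subseteq\mathcal T$ as well.

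Finally, choosing $G:=n^{\Omega(n/m)}$ so that $G$ times the per-generation probability above is $o(1)$, a union bound over the first $G$ generations shows that with probability $1-o(1)$ the population stays in $\mathcal T$ throughout, so no Pareto-optimal point is created (as $\mathcal P_u\subseteq N\setminus N_0$). Hence the expected number of generations until a Pareto-optimal search point of $m$-\RRRMOuni is created is at least $(1-o(1))\,G=n^{\Omega(n/m)}$. I expect the main obstacle to be making the trap case analysis airtight for the whole range $m=o(n/\log n)$: in particular, verifying that the relevant block of an escaping offspring is $\Omega(n/m)$ flips from its parent irrespective of the parent's type, and ruling out the subtle route of first growing a left half cheaply through $P$ to $1^{n/m}$ while its right half sits in $T$ — which is precisely why the $C$-versus-$T$ Hamming gap of Lemma~\ref{lem:large-Hamming}(ii) must be invoked, and why one checks that such $(1^{n/m},\cdot)$-states already lie outside $\mathcal T$ so that entering them counts as an escape.
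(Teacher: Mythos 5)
Your proposal is correct and follows essentially the same route as the paper's proof: a Chernoff/union bound to get every initial individual into $L$, the observation that fitness-zero points are always placed in a later layer and hence rejected, and the Hamming gaps of Lemma~\ref{lem:large-Hamming} to show that any mutation escaping $L\cup M\cup N_0$ towards a point with a left block in $P$ (or a right block in $T$, or an all-ones left block) must flip $\Omega(n/m)$ bits in one block, concluded by a union bound over offspring and generations. Your write-up is in fact somewhat more explicit than the paper's about the trap invariant $P_t\subseteq L\cup M\cup N_0$ and about bounding the escape probability via a binomial tail over blocks rather than a union bound over target strings, but these are presentational refinements of the same argument.
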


\begin{proof}
An individual $x$ with $0<\ones{x_{i,\ell}^j}\leq 6n/(5m)$ for all $i \in [4]$ and $j \in [m/2]$ initializes with probability $1-m/2 \cdot 2^{-\Omega(n/m)} = 2^{-\Omega(n/m)}$ since $m=o(n/\log(n))$. Hence, by a union bound, with probability $1-\mu 2^{-\Omega(n/m)} = 1-o(1)$ every individual $x$ initializes with $n/(12m) \leq \ones{x_{i,\ell}^j} \leq n/(6m)$ for every $i \in [4]$ and $j \in [m/2]$ since $\mu \in 2^{o(n/m)}$. Suppose that this happens. Then all $x \in P_t$ satisfy $x \in L$ and have nonzero fitness; hence, the algorithm will always reject search points with zero fitness. Let $x \in L \cup M \cup N$ where $\mathcal{U}(x) = \emptyset$ if $x \in N$. We now show that any $y \in N$ with $y_\ell^j = 1^{n/m}$ or $y_r^j \in T$ and $y_\ell^j \in P$ for some $j \in [m/2]$ can be created from this $x$ via mutation with probability $n^{-\Omega(n/m)}$. Note that this includes also these $y \in N$ with $|\mathcal{U}(y)|>0$.

If $x \in L$ then $x_\ell^j \in U$ for all $j \in [m/2]$ and therefore, by Lemma~\ref{lem:large-Hamming}(i), one has to flip at least $n/(4m) = \Omega(n/m)$ many bits in a left block to create $x_\ell^j \in P$. By a union bound on all blocks and all possible strings from $P$, creating $x_\ell^j \in P$ for one $j \in [m/2]$ happens with probability at most $m/2 \cdot (n/m+1) \cdot n^{-\Omega(n/m)} = n^{-\Omega(n/m)}$ since $m \leq n/8$.

If $x \in N \cup M$ then $x_\ell^j \in C$ for all $j \in [m/2]$. To create $y$ with $y_r^j \in T$ and $y_\ell^j \in P$ is is required to flip at least $3n/(8m)$ bits in a left block by Lemma~\ref{lem:large-Hamming}(ii). Hence, by a union bound on $T$ and on all blocks we see that with probability at most
$$m|T|n^{-3n/(8m)}/2 = m \binom{n/(4m)}{n/(8m)}^4 n^{-3n/(8m)}/2 \leq m 2^{n/m} 2^{-\log(n) \cdot 3n/(8m)} = m 2^{-\Omega(\log(n) 3n/(8m))} = n^{-\Omega(n/m)}.$$
To create $y$ with $y_\ell^j = 1^{n/m}$ it is required to flip at least $4n/(6m)$ specific one bits any left block happening with probability at most $n^{-\Omega(n/m)}$. By a union bound over all blocks, this happens with probability at most $n^{-\Omega(n/m)}$. Hence, the expected number of needed generations in total is at least $(1 - o(1))(n^{\Omega(n/m)}/\mu)=n^{\Omega(n/m)}$.
\end{proof}

We are also able to obtain a similar result for GSEMO.

\begin{theorem}
	Suppose that $m = o(n/\log(n))$, $m$ is divisible by $2$ and $n$ is divisible by $8m$. GSEMO (Algorithm~\ref{alg:nsga-iii}) on $m$-\RRRMO with $p_c=0$ needs at least $n^{\Omega(n/m)}$ fitness evaluations in expectation to create any Pareto-optimal search point of $m$-\RRRMO.
	\label{thm:GSEMO-pc-zero-uniform}
\end{theorem}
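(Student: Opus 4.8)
The plan is to mirror the argument of Theorem~\ref{thm:NSGA-III-pc-zero-uniform}, simplified to the single-offspring setting of GSEMO exactly as the proof of Theorem~\ref{thm:GSEMO-pc-zero} simplified that of Theorem~\ref{thm:NSGA-III-pc-zero}. First I would control the initialization: GSEMO starts from one search point $x\sim\Unif(\{0,1\}^n)$, and by a Chernoff bound followed by a union bound over the $2m$ subblocks $x_{i,\ell}^j$, with probability $1-m/2\cdot 2^{-\Omega(n/m)}=1-o(1)$ (using $m=o(n/\log n)$) we have $n/(12m)\le\ones{x_{i,\ell}^j}\le n/(6m)$ for all $i\in[4],j\in[m/2]$, hence $x_\ell^j\in U$ for all $j$ and so $x\in L$ with nonzero fitness. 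Condition on this event.

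Next I would observe that once the population contains a nonzero-fitness individual, GSEMO permanently rejects any offspring of fitness $(0,\dots,0)$, since such a point is strictly dominated. Consequently every population $P_t$ satisfies $P_t\subseteq L\cup M\cup N$, and moreover, as in the proof of Theorem~\ref{thm:NSGA-III-pc-zero-uniform}, the configurations that must be traversed to reach a Pareto-optimal point of $m$-\RRRMOuni all require the creation of some $y\in N$ with either $y_\ell^j=1^{n/m}$ for some $j$, or $y_\ell^j\in P$ and $y_r^j\in T$ for some $j$ (this in particular covers $|\mathcal{U}(y)|>0$ and hence any $y\in\mathcal{P}_u$ by Lemma~\ref{lem:RRRMOuni-properties}(4)). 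I would then bound the one-step probability of such a jump from any current individual $x\in L\cup M\cup N$ with $\mathcal{U}(x)=\emptyset$: if $x\in L$ then $x_\ell^j\in U$ and by Lemma~\ref{lem:large-Hamming}(i) reaching $x_\ell^j\in P$ costs $\ge n/(4m)$ specific bit flips, so by a union bound over the $m/2$ blocks and the $\le n/m+1$ strings in $P$ this has probability at most $n^{-\Omega(n/m)}$; if $x\in M\cup N$ then $x_r^j\in C$ for all $j$, and creating $y_r^j\in T$ with $y_\ell^j\in P$ costs $\ge 3n/(8m)$ flips by Lemma~\ref{lem:large-Hamming}(ii), so (using $|T|=\binom{n/(4m)}{n/(8m)}^4\le 2^{n/m}$ and $m\le n/8$) again probability $n^{-\Omega(n/m)}$; and creating $y_\ell^j=1^{n/m}$ needs $\ge 4n/(6m)$ specific flips, probability $n^{-\Omega(n/m)}$ per block. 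Since GSEMO generates one offspring per iteration (crossover has no effect here as $p_c=0$), the probability that the "good" event happens in a given iteration is $n^{-\Omega(n/m)}$, so the expected number of iterations to create any Pareto-optimal point is at least $(1-o(1))\cdot n^{\Omega(n/m)}=n^{\Omega(n/m)}$, which equals the number of fitness evaluations.

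The only point requiring care — and the main obstacle — is the same bookkeeping that appears in Theorem~\ref{thm:NSGA-III-pc-zero-uniform}: being precise that \emph{every} individual reachable in $P_t$ before a Pareto-optimal point is created lies in the set of configurations from which a single mutation reaching $P$, $T$, or an all-ones left block has probability $n^{-\Omega(n/m)}$, and that there is no "shortcut'' route through a zero-fitness point or through a block already in $P$/$T$ that evades Lemma~\ref{lem:large-Hamming}. For GSEMO this is actually slightly cleaner than for \nsgaIII because the population contains no phenotypic duplicates and, by Lemma~\ref{lem:RRRMOuni-properties}, is always an antichain, but one still has to check that the relevant blocks of accepted offspring stay in $U$ (resp.\ $C$) until the rare jump occurs. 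I expect this to be routine given Lemma~\ref{lem:large-Hamming} and the structure already established in the previous two theorems, so the proof should be short.
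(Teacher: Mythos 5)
Your proposal is correct and follows essentially the same route as the paper: condition on the initial search point lying in $L$ (probability $1-o(1)$ via Chernoff and a union bound over subblocks), note that zero-fitness offspring are always rejected, and then use Lemma~\ref{lem:large-Hamming} together with a union bound over blocks and target strings to show that any mutation creating a $y\in N$ with $|\mathcal{U}(y)|>0$ (in particular any Pareto-optimal point) has probability $n^{-\Omega(n/m)}$ per iteration. The paper's own proof is exactly this reduction to the argument of Theorem~\ref{thm:NSGA-III-pc-zero-uniform}, and the "bookkeeping" you flag is handled there in the same way you describe, so no further work is needed.
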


\begin{proof}
As in the proof of the previous theorem, we see that an individual $x$ with $x \in L$ initializes with probability $1-o(1)$. Suppose that this happens. Since the algorithm always rejects search points with fitness zero, we see as in the previous proof that an $x \in N$ with $|\mathcal{U}(x)|>0$ is created with probability at most $n^{-\Omega(n/m)}$. So the expected number of required iterations in total is also at least $(1 - o(1))n^{\Omega(n/m)}$. 
\end{proof}

Hence, in the case of both algorithms above, if $p_c$ is constant, $m \leq d \sqrt{n}$ for a sufficiently small constant $d$ and $(2n/m)^{m-1} \leq \mu \leq 2^{o(n/m)}$ in case of \nsgaIII, uniform crossover guarantees also a speedup of order $n^{\Omega(\sqrt{n})}$. This speedup is even of order $n^{\Omega(n)}$ if $m$ is constant.

\section{Conclusions}
We defined $m$-\RRRMO and $m$-\RRRMOuni, variants of the bi-objective \RRRMO-functions proposed by~\citep{Dang2024}, for the many objective setting on which the EMO algorithm \nsgaIII using crossover for a constant $0<p_c<1$ and a constant number of objectives $m$ can find the whole Pareto set in epected polynomial time. As for other many-objective function classes like $\LOTZ$, $\OMM$ and $\COCZ$, the upper bound on the expected number of generations behaves asymptotically independently of $\mu$ and $m$. On the other hand, if crossover is disabled, in both cases \nsgaIII requires exponential time to even find a single Pareto-optimal point. Similar results also hold for the GSEMO algorithm. To the best of our knowledge, these are the first proofs for an exponential performance disparity for the use of
crossover in the many-objective setting, particularly for \nsgaIII, where the number of objectives is not necessarily constant. However, we are confident that for the $m$-\textsc{OneJumpZeroJump}$_k$ benchmark proposed by~\citet{Zheng_Doerr_2024}, the many-objective version of the bi-objective \textsc{OneJumpZeroJump}~\citep{DoerrQu22}, crossover provably guarantees a subexponential speedup of order $O(n)$. However, the population dynamics of \nsgaIII is different to that of \nsga since solutions become spread quite evenly on the whole Pareto front (see \citet{opris2025multimodal}) and hence, different arguments than in~\citep{DoerrQ23b} are needed to prove such results. We hope that our work may serve as a stepping stone towards a better understanding of the advantages of crossover on more complex problem classes, as it has been done in single-objective optimization.\\ 

\textbf{Declaration of generative AI and AI-assisted technologies in the writing process}

During the preparation of this work the author used Chatgpt 4.0 in order to improve language and readability. After using this tool, the author reviewed and edited the content as needed and takes his full responsibility for the content of the publication.

\bibliographystyle{abbrvnat}
\bibliography{Journal}
\end{document}